\renewcommand{\cite}{\citep}
\newtheorem{theorem}{Theorem}[section]
\newtheorem{proposition}[theorem]{Proposition}
\newcommand{\tr}[1]{\textcolor{blue}{TR:  #1}}
\newcommand{\trr}[1]{\marginpar[]{\textcolor{blue}{TR:  #1}}}
\newcommand{\vg}[1]{\marginpar[]{\textcolor{red}
{VG:  #1}}}
\newcommand{\sva}[1]{\textcolor{brown}{SVA: }\textcolor{teal}{ #1 }}
\newcommand{\nips}[1]{\textcolor{red}{\cancel{#1}}}
\newcommand{\svam}[1]{\marginpar[]{\textcolor{brown}{SVA:}\textcolor{teal}{#1}}}
\newcommand{\svar}[1]{{[\color{olive} {\color{purple}\textbf{Shivvrat}}: #1] }}
\renewcommand{\tr}[1]{}
\renewcommand{\trr}[1]{}
\renewcommand{\vg}[1]{}
\renewcommand{\sva}[1]{}
\renewcommand{\nips}[1]{}
\renewcommand{\svam}[1]{}
\newcommand{\cmpe}{\texttt{CMPE}\xspace}
\newcommand{\mincmpe}{Minimization Version of CMPE\xspace}
\newcommand{\eat}[1]{}
\newcommand{\ilp}{ILP\xspace}
\newcommand{\mae}{MAE\xspace}
\newcommand{\maepen}{MAE+Penalty\xspace}
\newcommand{\mse}{MSE\xspace}
\newcommand{\msepen}{SL+Penalty\xspace}
\newcommand{\ssl}{SSL+Penalty\xspace}
\newcommand{\pdl}{\texttt{PDL}\xspace}
\newtheorem{example}{Example}
\newcommand{\slpen}{\texttt{SL}$_{pen}$\xspace}
\newcommand{\sslpen}{\texttt{SSL}$_{pen}$\xspace}
\newcommand{\ours}{\sscmpe\xspace}
\newcommand{\sscmpe}{\texttt{SS-CMPE}\xspace}
\newcommand{\sscmpepen}{\sscmpe$_{pen}$\xspace}
\useunder{\uline}{\ul}{}
    \newcommand{\coloruline}[2]{%
        \newcommand\temp@reduline{\bgroup\markoverwith
            {\textcolor{#1}{\rule[-0.5ex]{2pt}{1.0pt}}}\ULon}%
        \temp@reduline{#2}%
    }
\title{Learning to Solve the Constrained Most Probable Explanation Task in Probabilistic Graphical Models}
\author{Shivvrat Arya$^*$ \\
	Department of Computer Science\\
	The University of Texas at Dallas\\
	Richardson, TX 75252 \\
	\texttt{shivvrat.arya@utdallas.edu}
	%% examples of more authors
	\And
	Tahrima Rahman$^*$ \\
	Department of Computer Science\\
	The University of Texas at Dallas\\
	Richardson, TX 75252 \\
	\texttt{tahrima.rahman@utdallas.edu} 
 	\And
	Vibhav Gogate \\
	Department of Computer Science\\
	The University of Texas at Dallas\\
	Richardson, TX 75252 \\
\texttt{vibhav.gogate@utdallas.edu}  
}
\date{}
\begin{document}
\maketitle

\begin{abstract}
We propose a \textit{self-supervised learning} approach for solving the following constrained optimization task in log-linear models or Markov networks. Let $f$ and $g$ be two log-linear models defined over the sets $\mathbf{X}$ and $\mathbf{Y}$ of random variables respectively. Given an assignment $\mathbf{x}$ to all variables in $\mathbf{X}$ (evidence) and a real number $q$, the constrained most-probable explanation (CMPE) task seeks to find an assignment $\mathbf{y}$ to all variables in $\mathbf{Y}$ such that $f(\mathbf{x}, \mathbf{y})$ is maximized and $g(\mathbf{x}, \mathbf{y})\leq q$. 
In our proposed self-supervised approach, given assignments $\mathbf{x}$ to $\mathbf{X}$ (data), we train a deep neural network that learns to output near-optimal solutions to the CMPE problem without requiring access to any pre-computed solutions. The key idea in our approach is to use first principles and approximate inference methods for CMPE to derive novel loss functions that seek to push infeasible solutions towards feasible ones and feasible solutions towards optimal ones. We analyze the properties of our proposed method and experimentally demonstrate its efficacy on several benchmark problems. 
\end{abstract}

\section{INTRODUCTION}
\def\thefootnote{*}\footnotetext{These authors contributed equally to this work.}

\label{sec:intro}

      Probabilistic graphical models (PGMs) such as Bayesian and Markov networks \cite{koller2009probabilistic,darwiche09} compactly represent joint probability distributions over random variables by factorizing the distribution according to a graph structure that encodes conditional independence among the variables. Once learned from data, these models can be used to answer various queries, such as computing the marginal probability distribution over a subset of variables (MAR) and finding the most likely assignment to all unobserved variables, which is referred to as the most probable explanation (MPE) task.

      %\sva{Should we use Y here? In later stages, we say that we want to predict Y given X=x}
      % No. we are defining the problem informally here and therefore we do not need to be precise.
      Recently, \citet{rouhani2020novel} proposed an extension to the MPE task in PGMs by introducing constraints. More specifically, given two PGMs $f$ and $g$ defined over the set of random variables $\mathbf{X}$ and a real number $q$, the constrained most probable explanation (CMPE) task seeks to find the most likely state $\mathbf{X}=\mathbf{x}$ w.r.t. $f$ such that the constraint $g(\mathbf{x})\leq q$ is satisfied. Even though both MPE and CMPE are NP-hard in general, CMPE is considerably more difficult to solve in practice than MPE. Notably, CMPE is NP-hard even on PGMs having no edges, such as zero treewidth or independent PGMs, while MPE can be solved in linear time. \citet{rouhani2020novel} and later \citet{rahman2021novel} showed that several probabilistic inference queries are special cases of CMPE. This includes queries such as finding the decision preserving most probable explanation \cite{choi_same-decision_2012}, finding the nearest assignment \cite{rouhaniRG18} and robust estimation \cite{darwiche2023complete,darwiche_reasons_2020}.   

    Our interest in the CMPE task is motivated by its extensive applicability to various \textit{neuro-symbolic inference} tasks. Many of these tasks can be viewed as specific instances of CMPE. Specifically, when $f(\mathbf{x})$ represents a function encoded by a neural network and $g(\mathbf{x}) \leq q$ signifies particular symbolic or weighted constraints that the neural network must adhere to, the neuro-symbolic inference task involves determining the most likely prediction with respect to $f$ while ensuring that the constraint $g(\mathbf{x}) \leq q$ is satisfied. Another notable application of CMPE involves transferring abstract knowledge and inferences from simulations to real-world contexts. For example, in robotics, numerous simulations can be employed to instruct the robot on various aspects, such as object interactions, robot-world interactions, and underlying physical principles, encapsulating this abstract knowledge within the constraint $g(\mathbf{x}) \leq q$. Subsequently, with a neural network $f$ trained on a limited amount of real-world data, characterized by richer feature sets and objectives, $g$ can be used to reinforce the predictions made by $f$, ensuring that the robot identifies the most likely prediction with respect to $f$ while satisfying the constraint $g(\mathbf{x}) \leq q$. This strategy enhances the reliability of the robot's predictions and underscores the practical significance of CMPE.

    %\svar{Simulations enable the derivation of the constraints ($g(\mathbf{x}) \leq q$) from synthetic data. In robotics, such simulations enable robots to learn about object interactions, robot-world interactions, and the governing physical principles that can be encapsulated within $g(\mathbf{x})$. Moreover, when learned by synthetic data, $g(\mathbf{x})$ can be effectively transferred to real-world scenarios. This application involves integrating $g(\mathbf{x})$ with a function $f$ aimed at optimizing real world objectives, such as identifying optimal paths or determining the necessary steps a robot should employ to accomplish a given task. }

      % Removed the citation to malhotra2021applications
      
      %which involve optimizations over the states of random variables given constraints are special cases of the CMPE task, for example, the decision preserving most probable explanation and its adversarial counterpart \cite{malhotra2021applications}, the nearest assignment problem \cite{rouhaniRG18} and other robust estimation tasks \cite{darwiche2023complete}.   
      
      %\citet{rahman2021novel} proposed techniques to compute upper bounds to the optimal CMPE solution by exploiting the structures of the models. 

      \eat{Inference queries like MPE and CMPE, which require optimizations over the states of random variables} 

In this paper, we explore novel machine learning (ML) approaches for solving the CMPE task, drawing inspiration from recent developments in \textit{learning to optimize} \cite{donti2021dc3,fioretto2020predicting,park2022self,zamzam2019learning}. The main idea in these works is to train a deep neural network that takes the parameters, observations, etc. of a constrained optimization problem \textit{as input} and outputs a near-optimal solution to the optimization problem. 

%The main reason for favoring neural networks over off-the-shelf constrained optimization solvers is that the latter are unable to handle large problems, especially those having dense global constraints. Also, deep neural networks have shown superior ability to approximate such complex functions, in particular, to learn accurate mappings of output labels from their inputs and to efficiently answer queries at test time. 

In practice, a popular approach for solving optimization problems is to use search-based solvers such as Gurobi and SCIP. However, a drawback of these off-the-shelf solvers is their inability to efficiently solve large problems, especially those with dense global constraints, such as the CMPE problem. In contrast, neural networks are efficient because once trained, the time complexity of solving an optimization problem using them scales linearly with the network's size. This attractive property has also driven their application in solving probabilistic inference tasks such as MAR and MPE inference \cite{gilmer2017neural, kuck2020belief, zhang2020factor, satorras2021neural}. However, all of these works require access to exact inference techniques in order to train the neural network. As a result, they are feasible only for small graphical models on which exact inference is tractable. Recently, \citet{cui2022variational} proposed to solve the MPE task by training a variational distribution that is parameterized by a neural network in a self-supervised manner (without requiring access to exact inference methods). To the best of our knowledge, there is no prior work on using neural networks for solving the CMPE problem.

 In this paper, we propose a new self-supervised approach for training neural networks which takes observations or evidence as input and outputs a near optimal solution to the CMPE task. Existing self-supervised approaches \cite{fioretto2020predicting,park2022self} in the \textit{learning to optimize} literature either relax the constrained objective function using Lagrangian relaxation and then use the Langragian dual as a loss function or use the Augmented Lagrangian method.
 %\eat{penalty method that penalizes constraint violations} \sva{Augmented Lagrangian method}. 
 We show that these methods can be easily adapted to solve the CMPE task. Unfortunately, an issue with them is that an optimal solution to the Lagrangian dual is not guaranteed to be an optimal solution to the CMPE task (because of the non-convexity of CMPE, there is a duality gap). To address this issue, we propose a new loss function based on first principles and show that an optimal solution to the loss function is also an optimal solution to the CMPE task. Moreover, our new loss function has several desirable properties, which include: (a) during training, when the constraint is violated, it focuses on decreasing the strength of the violation, and (b) when constraints are not violated, it focuses on increasing the value of the objective function associated with the CMPE task. 
      
      %Training a neural network to solve optimization tasks \eat{underlying a domain of interest} can be done in either  supervised  \cite{fioretto2020predicting} or  self-supervised  manner. Supervised training requires labeled data \eat{with their associated labels} which are essentially answers to already solved problems. On the other hand self-supervised approaches \cite{donti2021dc3} train a neural network to directly optimize for the objective and the constraints. Both approaches to learning often relax the constrained problems into unconstrained problems through Lagrangian multipliers by introducing penalty coefficients associated with each constraint. These coefficients try to balance between optimality and feasiblity at the cost of increasing the dimensionality of the problem. Since finding the best values for these penalty variables is a not-so-easy task, they are often treated as hyper-parameters with a few hand-picked values during the learning process. \cite{park2022self} proposed a self-supervised method that generates instance specific multipliers for penalizing constraints by learning two neural networks such that at convergence the method outputs both the primal and dual solutions.   

       \eat{\cite{gilmer2017neural, kuck2020belief, zhang2020factor, satorras2021neural} proposed neural networks that can perform probabilistic inference tasks like marginal and MPE inference. These methods require exact inference results to train the neural network to perform inference on future probabilistic queries. As a result these methods are feasible only for small graphical models from which exact results are obtainable. Recently, \cite{cui2022variational} proposed to solve MPE by training a neural network to approximate the parameters of a variational distribution using which an MPE state will be exactly be found. According to their method a neural network is trained to minimize a variational free energy without requiring any exact inference results. To the best of our knowledge, there is no prior work related to solve CMPE using neural networks.}  
      
\eat{    In summary, this paper makes the following contributions 
    \begin{itemize}
        \item To the best of our knowledge, this work is the first to explore machine learning approaches to solve constrained optimization problems in PGMs and more generally for multilinear optimization. 
        \item We propose a self-supervised approach for training a deep neural network to solve the CMPE task. At the core of our method is a novel loss function that is designed using first principles, has the same set of global optima as the CMPE task and several desirable properties. 
        %The loss function has a desirable property that its value is higher for assignments that violate the constraints compared to solutions that satisfy them and does so without introducing any new optimization (dual) variables into the problem.
        %has been designed based on first principles. The main idea is to significantly increase the loss for solutions that violate the constraints compared to solutions that satisfy them and doing so without introducing any new optimization (dual) variables into the problem. 
        %\item We extend our loss to optimize for discrete variables by introducing a regularization term and also propose a strategy to remove any discontinuities from the loss. 
        \item Empirically, we evaluate several supervised and self-supervised approaches and compare them to our proposed method. To the best of our knowledge, these are the first empirical results on using machine learning, either supervised or self-supervised, to solve the CMPE task in PGMs. On a number of benchmark models, we show that neural networks trained using our proposed loss function are more efficient and accurate compared to models trained to minimize competing supervised and self-supervised loss functions from literature.
    \end{itemize}

}  

We conducted a comprehensive empirical evaluation, comparing several supervised and self-supervised approaches to our proposed method. To the best of our knowledge, these are the first empirical results on using machine learning, either supervised or self-supervised, to solve the CMPE task in PGMs. On a number of benchmark models, our experiments show that neural networks trained using our proposed loss function are more efficient and accurate compared to models trained to minimize competing supervised and self-supervised loss functions from the literature.

\section{Notation and Background}
%Random variables are denoted by upper-case letters (e.g. X,Y,Z) and their corresponding assignments as lower-case letters (e.g. x,y,z). Sets of random variables are denoted by bold upper-case letters (e.g. $\mathbf{X}$, $\mathbf{Y}$, $\mathbf{Z}$) and assignments to them by bold lower-case letters (e.g. $\mathbf{x}$, $\mathbf{y}$, $\mathbf{z}$). The notation $\mathbf{z}_A$ denotes the projection of the complete assignment $\mathbf{z}$ on to the set of variables $\mathbf{A}\subseteq \mathbf{Z}$. \textcolor{red}{Random variables can either be discrete or continuous}. A log-linear model $\mathcal{M}_f = \langle \mathbf{Z}, f \rangle$ represents a joint probability distribution over a set of random variables $\mathbf{Z}$ using a set of log-potential functions, $f=\{f_1,\ldots,f_n\}$. Each potential function is defined over a subset of variables $S(f_i)$ of $\mathbf{Z}$ called its \textit{scope}. Each $f_i:\mathbf{z}_{S(f_i)}\rightarrow \mathbb{R}$ takes an assignment $\mathbf{z}_{S(f_i)}$ to the variables in its scope and maps it to a real value. The unnormalized probability of any complete assignment $\mathbf{z}$ is computed as, 
%\vg{We never write ``Random variables are denoted by''. We write in active voice: ``We denoted random variables by'' Also, always put a comma after ``e.g.'' and a comma before ``etc.'' }
We denote random variables by upper-case letters (e.g., $X$, $Y$, $Z$, etc.), their corresponding assignments by lower-case letters (e.g., $x$, $y$, $z$, etc.), sets of random variables by bold upper-case letters (e.g., $\mathbf{X}$, $\mathbf{Y}$, $\mathbf{Z}$, etc.) and assignments to them by bold lower-case letters (e.g., $\mathbf{x}$, $\mathbf{y}$, $\mathbf{z}$, etc.). $\mathbf{z}_{\mathbf{X}}$ denotes the projection of the complete assignment $\mathbf{z}$ on to the subset $\mathbf{X}$ of $\mathbf{Z}$. For simplicity of exposition, we assume that discrete and continuous random variables take values from the set \{0,1\} and [0,1] respectively. 

%We assume that the random variables are either continuous or discrete and for simplicity of exposition, also assume that they take values from the set [0,1] and \{0,1\} respectively. 

We use the multilinear polynomial representation \citep{Sherali&Adams09,sherali1992global,horst1996global} to concisely describe our proposed method as well as for specifying discrete, continuous, and mixed constrained optimization problems. Let $\mathbf{Z}=\{Z_1,\ldots,Z_n\}$ be a set of random variables. Let $[n]=\{1,\ldots,n\}$ and $i \in [n]$ be an index over the variables of $\mathbf{Z}$. Let $2^{[n]}$ denote the set of subsets of indices of $[n]$; thus each element of $2^{[n]}$ denotes a (unique) subset of $\mathbf{Z}$. Let $\mathcal{I} \subseteq 2^{[n]}$ and let $w_I \in \mathbb{R}$ where $I \in \mathcal{I}$ be a real number (weight) associated with each element $I$ of $\mathcal{I}$. Then, a multilinear polynomial is given by
\begin{align}
    \label{eq:mult-poly}
    f(\mathbf{z}) = f(z_1,\ldots,z_n)= \sum_{I \in \mathcal{I}} w_I \prod_{i \in I}z_i
\end{align}
where $\mathbf{z}=(z_1,\ldots,z_n)$ is an assignment to all variables in $\mathbf{Z}$. We will call $f(\mathbf{z})$ the weight of  $\mathbf{z}$.
%\vg{May be make a figure of this example using numbers rather than $v_{0,0}$, etc.}
\eat{
\begin{figure*}[t]
\begin{center}
{\footnotesize
    \centering
    \begin{tabular}{|c|c|c|}
    \hline
     $X_1$ & $X_2$ & $f_{1,2}(x_1,x_2)$   \\
     \hline
     0 & 0 & 3 \\
     0 & 1 & 4 \\
     1 & 0 & 5 \\
     1 & 1 & 8 \\
     \hline
     \multicolumn{3}{c}{(a)}\\
    \end{tabular}
    ~~
    \begin{tabular}{|c|c|c|}
    \hline
     $X_2$ & $X_3$ & $f_{2,3}(x_2,x_3)$   \\
     \hline
     0 & 0 & 4 \\
     0 & 1 & 2 \\
     1 & 0 & 9 \\
     1 & 1 & 1 \\
     \hline
     \multicolumn{3}{c}{(b)}\\
    \end{tabular}
    ~~
    \begin{tabular}{c}
    $f_{1,2}(x_1,x_2)=3(1-x_1)(1-x_2)+$\\
    $4(1-x_1)(x_2)+5(x_1)(1-x_2)+$ 
    $8(x_1)(x_2)$\\
    $f_{2,3}(x_2,x_3)=4(1-x_2)(1-x_3)+$\\
    $2(1-x_2)(x_3)+9(x_2)(1-x_3) + 1(x_2)(x_3)$\\
    \\
    (c)\\
    \end{tabular}
}
    \caption{Two log-potentials (a) $f_{1,2}(x_1,x_2)$ and (b) $f_{2,3}(x_2,x_3)$ associated with a Markov network defined over three variables $\{X_1,X_2,X_3\}$. The multilinear representation for the log-potentials is given in (c). The reader can verify that the weight of a full assignment $(x_1,x_2,x_3)$ w.r.t. the given log-linear model can be computed using the following multilinear polynomial $f(x_1,x_2,x_3)=f_{1,2}(x_1,x_2)+f_{2,3}(x_2,x_3)=7+2x_1+6x_2-2x_3+2x_1x_2-6x_2x_3$ (obtained by adding the two polynomials and simplifying).}
    \label{fig:pgm-example}
 \vspace{-0.1in}
\end{center}
\end{figure*}
}
% \begin{figure*}[htbp!]
% \begin{minipage}{.4\textwidth}
%     \begin{center}
% \includegraphics[scale=0.8]{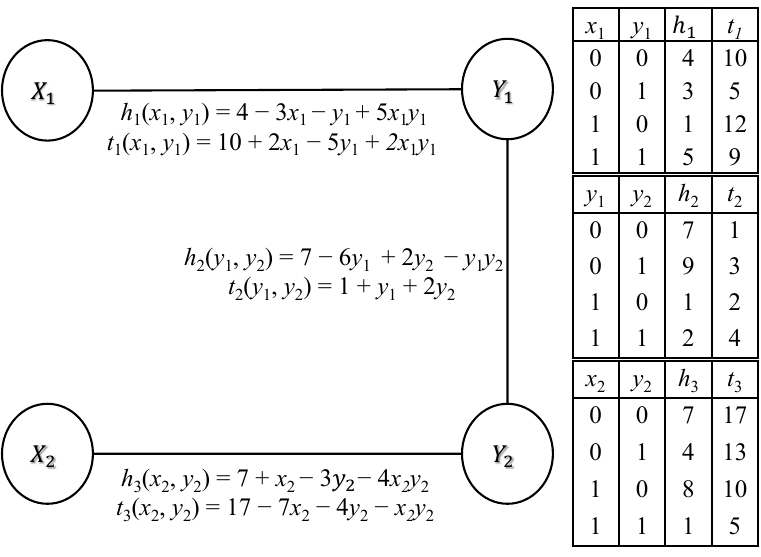}
%     \end{center}
% \end{minipage}  \hfill
% \begin{minipage}{.6\textwidth}
%     %\vspace{-1cm}
%     \caption{\small\label{fig:poly}Two Markov networks $\mathcal{M}_1$ and $\mathcal{M}_2$ having the same chain-like structure and defined over the same set of variables $\mathbf{Z}=\{X_1,X_2,Y_1,Y_2\}$. $\mathcal{M}_1$ is defined by the set of log-potentials $\{h_1,h_2,h_3\}$ and $\mathcal{M}_2$ is defined by the set of log-potentials $\{t_1,t_2,t_3\}$. Each log-potential can be expressed as a local multilinear polynomial function. The global multilinear function representing $\mathcal{M}_1$ and $\mathcal{M}_2$ are $h(x_1,x_2,y_1,y_2) = 18-3x_1+x_2-7y_1-y_2+5x_1y_1-4x_2y_2-y_1y_2$ and $t(x_1,x_2,y_1,y_2) = 28+2x_1-7x_2-4y_1-2y_2+2x_1y_1-x_2y_2$ respectively which are obtained by adding the local functions associated with the respective models and then simplifying, i.e., $h(x_1,x_2,y_1,y_2) = h_1(x_1,y_1)+h_2(y_1,y_2)+h_3(x_2,y_2)$. $t(x_1,x_2,y_1,y_2)$ is obtained similarly.} 
% \end{minipage}
% \end{figure*}

\begin{figure}[t]
    \begin{center}
\includegraphics[scale=0.6]{} %content/figs/cmpe_tr.pdf
    \end{center}
    %\vspace{-1cm}
    \caption{\label{fig:poly}Two Markov networks $\mathcal{M}_1$ and $\mathcal{M}_2$ having the same chain-like structure and defined over the same set $\{X_1,X_2,Y_1,Y_2\}$ of variables . $\mathcal{M}_1$ is defined by the set of log-potentials $\{h_1,h_2,h_3\}$ and $\mathcal{M}_2$ is defined by the set of log-potentials $\{t_1,t_2,t_3\}$. Each log-potential can be expressed as a local multilinear polynomial function. The global multilinear function representing $\mathcal{M}_1$ and $\mathcal{M}_2$ are $h(x_1,x_2,y_1,y_2) = 18-3x_1+x_2-7y_1-y_2+5x_1y_1-4x_2y_2-y_1y_2$ and $t(x_1,x_2,y_1,y_2) = 28+2x_1-7x_2-4y_1-2y_2+2x_1y_1-x_2y_2$ respectively which are obtained by adding the local functions associated with the respective models and then simplifying, i.e., $h(x_1,x_2,y_1,y_2) = h_1(x_1,y_1)+h_2(y_1,y_2)+h_3(x_2,y_2)$. $t(x_1,x_2,y_1,y_2)$ is obtained similarly.} 
\end{figure}

It is known that weighting functions, namely the sum of log of conditional probability tables and log-potentials associated with Bayesian and Markov networks respectively can be expressed as multilinear polynomials (see for example \cite{koller2009probabilistic}). \eat{Figure \ref{fig:pgm-example} shows a multilinear representation for a Markov network having two potentials and three variables.}
\begin{example}
    \eat{The multilinear functions representing the Markov networks $\mathcal{M}_1$ and $\mathcal{M}_2$ in figure \ref{} are $h(X_1,X_2,Y_1,Y_2) = 18-3X_1+X_2-7Y_1-Y_2+5X_1Y_1-4X_2Y_2-Y_1Y_2$ and $t(X_1,X_2,Y_1,Y_2) = 28+2X_1-7X_2-4Y_1-2Y_2+2X_1Y_1-X_2Y_2$ respectively which are obtained by adding the multilinear polynomials corresponding to the log-potential functions associated with the respective models and then simplifying, i.e., $h(X_1,X_2,Y_1,Y_2) = h_1(X_1,Y_1)+h_2(Y_1,Y_2)+h_3(X_2,Y_2)$. $t(X_1,X_2,Y_1,Y_2)$ is obtained similarly.} Figure \ref{fig:poly} shows a multilinear representation for a Markov network. The weight of the assignment $(X_1=0,X_2=1,Y_1=0,Y_2=1)$ is 14 and 18 w.r.t. $\mathcal{M}_1$ and $\mathcal{M}_2$ respectively.
\end{example}

\eat{
\textbf{Example:}    Consider a log-linear model having three variables $\{X_1,X_2,X_3\}$ and two log-potentials $h_{1,2}(X_1,X_2)=(v_{0,0},v_{0,1},v_{1,0},v_{1,1})$ and $h_{2,3}(X_2,X_3)=(w_{0,0},w_{0,1},w_{1,0},w_{1,1})$ where $v_{i,j}$, $i \in \{0,1\}$ and $j \in \{0,1\}$ (similarly $w_{ij}$) denotes the weight associated with the assignment $(X_1=i,X_2=j)$ (similarly $(X_2=i,X_3=j)$). The weight associated with the assignment $(x_1,x_2,x_3)$ to all the variables is given by
    \begin{align}
    \nonumber
        h(x_1,x_2,x_3)= \prod_{i\in \{0,1\}} \prod_{j\in \{0,1\}} v_{i,j}x_1^i (1-x_1)^{1-i }x_2^j(1-x_2)^{1-j} +\\
    \prod_{i\in \{0,1\}} \prod_{j\in \{0,1\}} w_{i,j}x_2^i (1-x_2)^{1-i }x_3^j(1-x_3)^{1-j}
    \end{align} 
The above expression is a multilinear polynomial because the powers are either zero or one. 
}

\subsection{Constrained Most Probable Explanation}
We are interested in solving the following constrained most probable explanation (CMPE) task. Let $\mathbf{X}$ and $\mathbf{Y}$ be two subsets of $\mathbf{Z}$ such that $\mathbf{Z}=\mathbf{X} \cup \mathbf{Y}$ and $\mathbf{X} \cap \mathbf{Y} =\emptyset$. We will refer to $\mathbf{Y}$ as \textit{decision variables} and $\mathbf{X}$ as \textit{evidence variables}. Given assignments $\mathbf{x}$ and $\mathbf{y}$, let $(\mathbf{x},\mathbf{y})$ denote their composition. Let $h$ and $t$ denote two multilinear polynomials over $\mathbf{Z}$ obtained from two Markov networks $\mathcal{M}_1$ and $\mathcal{M}_2$ respectively that represent two (possibly different) joint probability distributions over $\mathbf{Z}$. Then given a real number $q$ and an assignment $\mathbf{x}$ to all variables in $\mathbf{X}$, the CMPE task is to find an assignment $\mathbf{y}^*$ to all the variables in $\mathbf{Y}$ such that $h(\mathbf{x},\mathbf{y}^*)$ is maximized (namely the probability of the assignment w.r.t. $\mathcal{M}_1$ is maximized) and $t(\mathbf{x,y}^*)\leq q$ (namely the probability of the assignment w.r.t. $\mathcal{M}_2$ is bounded by a constant). Formally, %it is the following maximization problem.
\eat{
\sva{
\begin{equation}\label{eqn:cmpe}
    P(\mathbf{x}|f,h,t): \underset{\mathbf{y}}{\text{maximize}} \:\: h(\mathbf{x},\mathbf{y}) \:\: \text{s.t.} \:\: t(\mathbf{x}, \mathbf{y}) \leq q
\end{equation}}

\sva{Replaced the $P(\mathbf{x}|f,g,q)$ with $P(\mathbf{x}|h,t,q)$ in the equation \ref{eqn:cmpe}.}
}
\begin{equation}\label{eqn:cmpe}
 \underset{\mathbf{y}}{\text{maximize}} \:\: h(\mathbf{x, y}) \:\: s.t. \:\: t(\mathbf{x, y})\leq q
\end{equation}
\eat{For brevity, we will abuse notation and use $\mathcal{P}(\mathbf{x})$ instead of \eat{$\mathcal{P}(\mathbf{x}|f,g,q)$} $\mathcal{P}(\mathbf{x}, h, t)$. } For brevity, we will abuse notation and use $h_{\mathbf{x}}(\mathbf{y})$ and $t_{\mathbf{x}}(\mathbf{y})$ to denote $h(\mathbf{x,y})$ and $t(\mathbf{x,y})$ respectively. The most probable explanation (MPE) task in probabilistic graphical models \citep{koller2009probabilistic} is a special case of CMPE; MPE is just CMPE without the constraint $t_{\mathbf{x}}(\mathbf{y}) \leq q$. The goal in MPE is to find an assignment $\mathbf{y}^*$ to $\mathbf{Y}$ such that the weight $h_{\mathbf{x}}(\mathbf{y}^*)$ of the assignment is maximized given evidence  $\mathbf{x}$. Similar to MPE, CMPE is NP-hard in general, with the caveat that CMPE is much harder than MPE. Specifically, CMPE is NP-hard even on independent graphical models (having zero treewidth), where MPE can be solved in linear time by independently maximizing each univariate function \citep{rouhani2020novel}. 
\begin{example}
Given $X_1 = 1$, $X_2=1$ and $q=20$, the CMPE solution of the example problem in figure 1 is $(y_1^*, y_2^*)=(0,1)$ with a value $h(1,1,0,1) = 11$, whereas the MPE solution is $(y_1^*, y_2^*)=(0,0)$ with value $h(1,1,0,0)=16$.
\end{example}
%\vg{Applications... para needs more citations. We are only citing our work.}

% Deleted the para below. Repeated from the introduction
\eat{
Applications of CMPE include learning robust classifiers (see for example \cite{bertsimas2019robust,madry2017towards}) where $h$ is a distance function and $t$ is a logistic regression model or a neural network; generating explanations that conform with a user's mental model where $h$ is a generative model learned from data, and $t$ is a user model \cite{rahman2021novel,gunning_darpas_2019}; and generating nearest assignments where $h$ is a log-linear model, $h$ is equal to $t$ and $q$ is the log-probability of the desired nearest assignment \cite{rouhaniRG18}. 
}

\eat{
\textbf{Remarks:}\begin{itemize}
    \item If all variables in CMPE take values from the domain $\{0,1\}$ (or discrete in general), the CMPE task can be expressed as an equivalent integer linear programming (ILP) task by introducing auxiliary integer variables for each $\prod_{i \in I}z_i$ term and adding appropriate constraints to model equivalence between the auxiliary variables and the term. In other words, CMPE can be solved optimally using ILP techniques. 
    \item If CMPE has continuous variables, it is not possible to reduce CMPE to an equivalent ILP/LP. In other words, in general, a CMPE task having continuous variables (and has $i \in I$ such that $|i| \geq 2$) cannot be solved optimally using ILP/LP techniques.
\end{itemize}
}

\eat{
A log-linear model $\mathcal{M}_f = \langle \mathbf{Z}, f \rangle$ represents a joint probability distribution over a set of random variables $\mathbf{Z}$ using a set of log-potential functions, $f=\{f_1,\ldots,f_n\}$. Each potential function is defined over a subset of variables $S(f_i)$ of $\mathbf{Z}$ called its \textit{scope}. Each $f_i:\mathbf{z}_{S(f_i)}\rightarrow \mathbb{R}$ takes an assignment $\mathbf{z}_{S(f_i)}$ to the variables in its scope and maps it to a real value. The unnormalized probability of any complete assignment $\mathbf{z}$ is given by 
\begin{equation*}\label{eqn:log-linear}
    P(\mathbf{z}) \propto \exp \left (\sum_i f_i(\mathbf{z}_{S(f_i)}) \right ) 
\end{equation*}
\eat{We call $\sum_i f_i(\mathbf{z}_{S(f_i)})$ the weight $\omega_{\mathcal{M}_f}(\mathbf{z})$  of the assignment $\mathbf{z}$ w.r.t. the model $\mathcal{M}_f$. }For notational convenience, we write $f(\mathbf{z})$ to denote $\sum_i f_i(\mathbf{z}_{S(f_i)})$. \eat{and $\omega_{f}(\mathbf{z})$ to denote $\omega_{\mathcal{M}_f}(\mathbf{z})$.}
%We focus on pairwise log-linear models where the size of the scope of each potential function is at most 2. % I commented the above line because we don't need it for analysis.  
 Let $\mathbf{X}$ and $\mathbf{Y}$ be two subsets of $\mathbf{Z}$ such that $\mathbf{Z}=\mathbf{X} \cup \mathbf{Y}$ and $\mathbf{X} \cap \mathbf{Y} =\emptyset$. Given a log-linear model defined over the random variables $\mathbf{Z}$, most probable explanation or MPE, is the inference task to find an assignment $\mathbf{y}^*$ to all variables in $\mathbf{Y}$ given an observation (evidence) $\mathbf{x}$ over $\mathbf{X}$ such that $P(\mathbf{x}, \mathbf{y}^*)$ \eat{or $\omega_f(\mathbf{x},\mathbf{y}^*)$} is maximized. MPE is NP-hard in general, but can be solved in polynomial time on certain classes of  probabilistic circuits \cite{choi2020probabilistic} (typically having no latent variables) such as cutset networks \cite{rahman2014cutset}, AND/OR graphs having small context size \cite{dechter2007and} and low tree-width Bayesian and Markov networks \cite{bach2001thin}.  

\subsection{Constrained Most Probable Explanation}
Constrained most probable explanation or CMPE is an MPE task with constraints which restrict the values the decision variables in $\mathbf{Y}$ can take. Given two log-linear models $\mathcal{M}_f$ and $\mathcal{M}_g$ having log-potentials $f$ and $g$ respectively, such that both of them are defined over the same sets of variables $\mathbf{X}$ and $\mathbf{Y}$, an observation $\mathbf{x}$ over $\mathbf{X}$, and a constant $q \in \mathbb{R}$, the CMPE task is to find the most likely assignment $\mathbf{y}^*$ to all the variables $\mathbf{Y}$ w.r.t. $\mathcal{M}_f$ such that the constraint $g(\mathbf{x,y}^*)\leq q$ is satisfied w.r.t. $\mathcal{M}_g$. Formally, it is the following maximization problem.
\begin{equation}\label{eqn:cmpe}
    \underset{\mathbf{y}}{\text{maximize}} \:\: f(\mathbf{x,y}) \:\: s.t. \:\: g(\mathbf{x, y}) \leq q
\end{equation}
For brevity, we will denote $f(\mathbf{x, y})$ as $f_\mathbf{x}(\mathbf{y})$ and $g(\mathbf{x, y})$ as $g_\mathbf{x}(\mathbf{y})$. We will often use  $f$ and $g$ in place of $\mathcal{M}_f$ and $\mathcal{M}_g$ when it is clear from the context. Constraints over the decision variables in $\mathbf{Y}$ can be specified in a variety of ways. In our experiments, we have used both log-linear models and neural network classifiers as $g$. Here, we assume that the functions in $f$ is strictly positive. \tr{Add a figure may be?}
}
%\trr{\cite{wu2020map} equivalent but continuous reformulation of ILP for solving MAP}
%\vg{Tahrima: \cite{wu2020map} will be out of place. They did not invent this formulation.}
% I was able to include the above citation. This issue is fixed
Since we are interested in machine learning approaches to solve the CMPE task and such approaches employ loss functions, it is convenient to express CMPE as a minimization task with a "$\leq 0$" constraint. This can be accomplished by negating $h$ and subtracting $q$ from $t$. Formally, let $f_{\mathbf{x}}(\mathbf{y})=-h_{\mathbf{x}}(\mathbf{y})$ and $g_{\mathbf{x}}(\mathbf{y})=t_{\mathbf{x}}(\mathbf{y}) - q$. Then Eq. \eqref{eqn:cmpe} is equivalent to the following minimization problem:
 \begin{align} 
 \label{eqn:cmpe_min}
 \eat{Q(\mathbf{x}|f,g,q):}\underset{\mathbf{y}}{\text{minimize}} \:\: f_{\mathbf{x}}(\mathbf{y}) \:\:s.t.\:\: g_{\mathbf{x}}(\mathbf{y}) \leq 0
 \end{align}

Let $\mathbf{y}^*$ be the optimal solution to the problem given in Eq. \eqref{eqn:cmpe_min} and let $p^*_{\mathbf{x}}=f_{\mathbf{x}}(\mathbf{y}^*)$. Also, without loss of generality, we assume that $f_{\mathbf{x}}$ is strictly positive, i.e., $\forall \mathbf{y}$, $f_{\mathbf{x}}(\mathbf{y}) > 0$. 

\eat{It is important to note that any upper bounding method described in section \ref{sec:cmpe_ub}, for example, the methods proposed by \cite{rahman2021novel} can readily be applied to obtain lower bound approximations to the above minimization problem. \tr{We will be using both of these formulations in our proposed method described in section 4.}} 

\eat{\subsection{Reformulating CMPE as a Loss Function}
\sva{Given that machine learning methods frequently employ loss functions, it is convenient to express CMPE as a minimization task (\mincmpe) with a ``$\leq 0$ constraint''. This can be accomplished by negating $h$ and subtracting $q$ from $t$. Formally, let $f(\mathbf{x},\mathbf{y})=-h(\mathbf{x},\mathbf{y})$ and $g(\mathbf{x},\mathbf{y})=t(\mathbf{x},\mathbf{y}) -q$.  For brevity, we will abuse notation and use $f_{\mathbf{x}}(\mathbf{y})$ and $g_{\mathbf{x}}(\mathbf{y})$ to denote $f(\mathbf{x},\mathbf{y})$ and $g(\mathbf{x},\mathbf{y})$ respectively. Then Eq. \eqref{eqn:cmpe} is equivalent to:
\sva{Defined the following minimization problem as $Q(\mathbf{x}|f,g,q)$}
 \begin{align} 
 \label{eqn:cmpe_min}
 Q(\mathbf{x}|f,g,q):\underset{\mathbf{y}}{\text{minimize}} \:\: f_{\mathbf{x}}(\mathbf{y}) \:\:s.t.\:\: g_{\mathbf{x}}(\mathbf{y}) \leq 0
 \end{align}}

 \sva{Given the formulation of \cmpe (and \mincmpe), we can ensure that $f$ is always positive and $g$ can assume any real value. It is noteworthy that a multi-linear objective function can be re-parameterized to be strictly positive without altering the relative ordering or rank of its solutions. To be explicit, scaling $g$ is not permissible as it would affect the constraint. For example, the solutions for $\min_x 7x-100$ where $x \in \{0,1\}$ (with $x=0$ being optimal and $x=1$ as the second-best solution) maintain the same relative ordering as those for $\min_x 7x+900$.}
}

%\subsection{Lower Bounding Methods for CMPE}\label{sec:prior_work}
 %We briefly describe these methods here\eat{ and encourage the reader to refer to the supplement for more details}.

\eat{If all variables in $P(\mathbf{x})$ are binary (or discrete in general), namely they take values from the domain $\{0,1\}$ then $P(\mathbf{x})$}

If all variables in $\mathbf{Y}$ are binary (or discrete in general), Eq. \eqref{eqn:cmpe_min} 
can be formulated as an (equivalent) integer linear programming (ILP) problem by introducing auxiliary integer variables for each multilinear term (e.g., $y_{1,2}=y_1y_2$, $y_{2,3}=y_2y_3$, etc.) and adding appropriate constraints to model the equivalence between the auxiliary variables and multilinear terms (see for example \cite{koller2009probabilistic}, Chapter 13). Therefore, in practice, (\ref{eqn:cmpe_min}) can be solved optimally using mixed integer linear programming (MILP) solvers such as Gurobi \cite{gurobi2021gurobi} and SCIP \cite{achterberg2008constraint, achterberg2009scip}. 

Unfortunately, due to a presence of a dense global constraint, namely $g_{\mathbf{x}}(\mathbf{y}) \leq 0$ in Eq. \eqref{eqn:cmpe_min}, the MILP solvers often perform poorly. Instead, in practice, application designers often use efficient, specialized algorithms that exploit problem structure for lower bounding $p^*_{\mathbf{x}}$, and then using these lower bounds in an \textit{anytime} branch-and-bound algorithm to obtain an upper bound on $p^*_{\mathbf{x}}$. 
%We describe lower bounding techniques next.

%These bounds can then be used to prune the search space in branch and bound style search algorithms; for example, see the work on search and bounding algorithms for the MPE task \cite{choi_relax_2010,dechter2003mini,wainwright2005map,globerson2007fixing,komodakis2007mrf,ihler2012join,marinescu&dechter12,marinescu&dechter09,wu2020map}. 

\subsection{Specialized Lower Bounding Algorithms}% for Lower Bounding $p^*_{\mathbf{x}}$}
\label{sec:lower_bound}
Recently, \citet{rahman2021novel} proposed two new approaches for computing upper bounds on the optimal value of the maximization problem given in Eq. \eqref{eqn:cmpe}. These methods can be easily adapted to obtain a lower bound on $p^*_{\mathbf{x}}$; because an upper bound on the maximization problem is a lower bound on the corresponding minimization problem. We present the adaptations of Rahman et al.'s approach next.

The first approach is based on the Lagrangian relaxation method that introduces a \textit{Lagrange multiplier} $\mu\geq 0$ to transform the constrained minimization problem to the following unconstrained problem: $\text{minimize}_{\mathbf{y}} f_{\mathbf{x}}(\mathbf{y}) + \mu g_{\mathbf{x}}(\mathbf{y})$. Let $d^*_{\mu}$ denote the optimal value of the unconstrained problem. Then, it is easy to show that $d^*_{\mu} \leq p^*_{\mathbf{x}}$. The largest upper bound is obtained by finding a value of $\mu$ that maximizes $d^*_{\mu}$. More formally,
%\svar{Removed $\times$ from $\mu \times g_{\mathbf{x}}$}
\begin{align}
\label{eq:lagrange-cmpe_min}
    \max_{\mu \geq 0} d^*_{\mu} = \max_{\mu \geq 0} \min_{\mathbf{y}} f_{\mathbf{x}}(\mathbf{y}) + \mu g_{\mathbf{x}}(\mathbf{y}) \leq p^*_{\mathbf{x}}
\end{align}
\citet{rahman2021novel} proposed to solve the inner minimization problem using exact techniques from the graphical models literature such as variable/bucket elimination \cite{dechter99}, branch and bound search and best-first search \cite{marinescu&dechter12,marinescu&dechter09,wu2020map}. When exact inference is not feasible, \citet{rahman2021novel} proposed to solve the inner problem using approximate inference techniques such as mini-bucket elimination, dual-decomposition and join-graph based bounding algorithms \cite{choi_relax_2010,dechter2003mini,wainwright2005map,globerson2007fixing,komodakis2007mrf,ihler2012join}. The outer maximization problem is solved using sub-gradient ascent. 

The second approach by \citet{rahman2021novel} uses the Lagrangian decomposition method to transform the problem into a multi-choice knapsack problem (MCKP) and then utilizes off-the-shelf MCKP solvers. In our experiments, we use the Lagrange relaxation approach given in Eq. \eqref{eq:lagrange-cmpe_min}.

If the set $\mathbf{Y}$ contains continuous variables, then it is not possible to reduce it to an equivalent MILP/LP \cite{horst1996global,sherali1992global}. However, by leveraging linearization methods \cite{sherali1992global,Sherali&Adams09} and solving the resulting problem using linear programming (LP) solvers, we can still obtain good lower bounds on $p^*_{\mathbf{x}}$.

\eat{
Optimization tasks over log-linear models like MPE and CMPE can be cast as mixed-integer linear programs (MILPs) and solved by MILP solvers like Gurobi \cite{gurobi2021gurobi} and SCIP \cite{achterberg2008constraint, achterberg2009scip}. Due to the dense global constraints in these problems, the MILP solvers often perform poorly. Instead researchers often try to invent algorithms to efficiently find upper or lower bounds to the optimal solutions of these problems by exploiting the underlying problem structure. The computed bounds can then be used to guide a search like branch and bound or even serve as near-optimal, heuristic solutions. \cite{rahman2021novel} proposed two approaches for computing upper bounds to the CMPE problem in \ref{eqn:cmpe}. \eat{Both approaches use method of multipliers to relax the original constrained problem.} The first approach is based on the Lagrangian dual method that introduces a \textit{Lagrange multiplier} $\lambda\geq 0$ to transform the constrained maximization of \ref{eqn:cmpe} to an unconstrained maximization of $f_\mathbf{x}(\mathbf{y}) + \lambda g_\mathbf{x}(\mathbf{y})$. The optimal value for $\lambda$ is then obtained by solving the dual problem using iterative sub-gradient methods. The second approach uses the Lagrangian Decomposition to transform the problem into a multi-choice knapsack problem and then utilize off-the-shelf MCKP solvers using Gurobi or SCIP. Both of these approaches yield upper bounds to the optimal solution of CMPE that can be used in subsequent search methods over the decision variables in $\mathbf{Y}$. 
}

 %\subsubsection{Machine Learning Methods for CMPE}
 \section{Solving CMPE using Methods from the Learning to Optimize Literature}
 \label{sec:ML-CMPE}
 %\vg{Need citations below for learning to optimize literature}
 In this section, we show how techniques developed in the learning to optimize literature \cite{donti2021dc3,fioretto2020predicting,park2022self,zamzam2019learning} which seeks to develop machine learning approaches for solving constrained optimization problems can be leveraged to solve the CMPE task. The main idea is to train a deep neural network $\mathcal{F}_\Theta:\mathbf{X}\rightarrow \mathbf{Y}$ parameterized by the set $\Theta\in\mathbb{R}^M$ such that at test time given evidence $\mathbf{x}$, the network is able to predict an (near) optimal solution $\hat{\mathbf{y}}$ to the CMPE problem. Note that as far as we are aware, no prior work exists on solving CMPE using deep neural networks.
 
 %While no prior work exists on solving CMPE using deep neural networks, we explore potential supervised and self-supervised techniques that can be adapted from the \textit{learning to optimize} literature.
 
 %Although, to the best of our knowledge, no prior work exists on solving CMPE using deep neural networks, we discuss here possible supervised and self-supervised machine learning techniques that can be adapted from the learning to optimize literature. 
 \eat{
 Let $\mathcal{F}_\Theta:\mathbf{X}\rightarrow \mathbf{Y}$ be a deep neural network parameterized by the set $\Theta \in \mathbb{R}^M$ which takes as input an assignment $\mathbf{x}$ over $\mathbf{X}$ and outputs an assignment $\hat{\mathbf{y}}$ over $\mathbf{Y}$ that is a good approximation to the optimal solution of the CMPE problem in \ref{eqn:cmpe}.} 
\subsection{Supervised Methods} In order to train the parameters of $\mathcal{F}_\Theta$ in a supervised manner, we need to acquire labeled data in the form $\mathcal{D}=\{\langle \mathbf{x}_i, \mathbf{y}_i\rangle \}_{i=1}^{N}$ where each label $\mathbf{y}_i$ is an optimal solution to the problem given in Eq. \eqref{eqn:cmpe_min} given $\mathbf{x}_i$. In practice, we can generate the assignments $\{\mathbf{x}_i\}_{i=1}^{N}$ by sampling them from the graphical model corresponding to $f$ and the labels $\{\mathbf{y}_i\}_{i=1}^{N}$ by solving the minimization problem given in Eq. \eqref{eqn:cmpe_min} using off-the-shelf solvers such as Gurobi and SCIP. 

Let $\hat{\mathbf{y}}_i = \mathcal{F}_\Theta (\mathbf{x}_i)$ denote the labels predicted by the neural network for  $\mathbf{x}_i$. Following \citet{zamzam2019learning}, we propose to train $\mathcal{F}_\Theta$ using the following two loss functions
\begin{align}
    \text{Mean-Squared Error (MSE)}:  \frac{1}{N}\sum_i \left (\mathbf{y}_i-\mathbf{\hat{y}}_i \right )^2
\end{align}
\begin{align}
    \text{Mean-Absolute-Error (MAE):}  \frac{1}{N}\sum_i \left |\mathbf{y}_i-\mathbf{\hat{y}}_i \right |
\end{align}

%In such a setting, we can choose to minimize a loss between the true solution $\mathbf{y}_i$ and the predicted solution $\hat{\mathbf{y}_i} = \mathcal{F}_\Theta (\mathbf{x}_i)$. \citet{zamzam2019learning} proposed to use two standard loss functions: Mean-Squared-Error (MSE)  $\bigl(\sum_i(\mathbf{y}_i-\mathbf{\hat{y}}_i)^2/N\bigr)$ and Mean-Absolute-Error (MAE) $\bigl(\sum_i|\mathbf{y}_i-\mathbf{\hat{y}}_i|/N\bigr)$, and we can easily leverage them to solve the CMPE task. 

Experimentally (see the supplementary material), we found that neural networks trained using the MAE and MSE loss functions often output infeasible assignments. To address this issue,  following prior work \citep{nellikkath2021physicsinformed}, we propose to add $\lambda_{\mathbf{x}} \max\{0,g_{\mathbf{x}}(\hat{\mathbf{y}})\}$ to the loss function where $\lambda_{\mathbf{x}}$ is a penalty coefficient. 

%a penalty to the loss function for violating the constraint. More formally, we propose to  add $\lambda_{\mathbf{x}} \times \max\{0,g_{\mathbf{x}}(\hat{\mathbf{y}})\}$ to the loss function where $\lambda_{\mathbf{x}}$ is a penalty coefficient. 

In prior work \cite{fioretto2020predicting}, it was observed that the quality of the solutions greatly depends on the value chosen for $\lambda_{\mathbf{x}}$. Moreover, it is not straightforward to choose it optimally because it varies for each $\mathbf{x}$. To circumvent this issue, we propose to update $\lambda_{\mathbf{x}}$ via a \textit{Lagrangian dual} method \cite{nocedal}. More specifically, we propose to use the following subgradient method to optimize the value of $\lambda_{\mathbf{x}}$.
%, following \citet{fioretto2020predicting},
While training a neural network, let $\lambda^k_{\mathbf{x}_i}$ and $\hat{\mathbf{y}}_i^k$ denote the values of the penalty co-efficient and the predicted assignment respectively at the $k$-th epoch and for the $i$-th example in $\mathcal{D}$ (if the $i$-th example is part of the current mini-batch), then, we update $\lambda^{k+1}_{\mathbf{x}_i}$ using
\begin{align}
\label{eq:sub-1}
    \lambda^{k+1}_{\mathbf{x}_i}=\lambda^k_{\mathbf{x}_i} + \rho \max \{0 , g_{\mathbf{x}_i}(\hat{\mathbf{y}}_i^k)\}
\end{align}
where $\rho$ is the Lagrangian step size.
In our experiments, we evaluated both the naive and the penalty based supervised loss approaches (for CMPE) and found that the penalty method with MSE loss yields the best results. Therefore, in our experiments, we use it as a strong supervised baseline.% (results for other supervised methods are provided in the supplement). 
\subsection{Self-Supervised Methods} 
Supervised methods require pre-computed solutions for numerous NP-hard/multilinear problem instances, which are computationally expensive to derive. Therefore, we propose to train the neural network in a self-supervised manner that does not depend on the pre-computed results. Utilizing findings from \citet{kotary2021learning} and \citet{park2022self}, we introduce two self-supervised approaches: one is grounded in the \textit{penalty method}, and the other builds upon the \textit{augmented Lagrangian method}.

\textbf{Penalty Method} \cite{donti2021dc3,kotary2021learning,fioretto2020predicting}. In the penalty method, we solve the constrained minimization problem by iteratively transforming it into a sequence of unconstrained problems. Each unconstrained problem at iteration $k$ is constructed by adding a term, which consists of a penalty parameter $\lambda^k_\textbf{x}$ multiplied by a function $\max\{0, g_\mathbf{x}(\mathbf{y})\}^2$ that quantifies the constraint violations, to the objective function. Formally, the optimization problem at the $k$-th step is given by:
\begin{align}
\label{eq:penalty-1}
    \min_\mathbf{y} f_\mathbf{x}(\mathbf{y}) + \frac{\lambda^k_\textbf{x}}{2} \max \left \{0, g_\mathbf{x}(\mathbf{y})\right \}^2
\end{align}
Here, $\lambda_\mathbf{x}^{k} $ is progressively increased either until the constraint is satisfied or a predefined maximum $ \lambda_{\text{max}} $ is reached. $ \lambda_\mathbf{x}^{k} $ can be updated after a few epochs using simple strategies such as multiplication by a fixed factor (e.g., 2, 10, etc.). 

%\svar{each epoch}
The penalty method can be adapted to learn a neural network in a self-supervised manner as follows. At each epoch $k$, we sample an assignment $\mathbf{x}$ (or multiple samples for a mini-batch) from the graphical model corresponding to $f$, predict $\hat{\mathbf{y}}$ using the neural network and then use the following loss function to update its parameters:
\begin{align}
\label{eq:penalty-loss}
    \mathcal{L}^{pen}_\mathbf{x}(\hat{\mathbf{y}}) = f_\mathbf{x}(\hat{\mathbf{y}}) + \frac{\lambda^k_\textbf{x}}{2} \max \left \{0, g_\mathbf{x}(\hat{\mathbf{y}})\right \}^2
\end{align}

%Let the solution obtained by minimizing the penalty-based loss be denoted as $ b^* $. It should be noted that the ideal value for $ \lambda_\mathbf{x}^{\mathcal{P}} $ would be $ \infty $, enabling $ b^* = c^* $.

Determining the optimal $ \lambda_\mathbf{x}^{k} $ is crucial. In prior work, \citet{kotary2021learning} and \citet{fioretto2020predicting} proposed to update it via a subgradient method, similar to the update rule given by Eq. \eqref{eq:sub-1}. More formally, we can update $\lambda_\mathbf{x}^{k}$ using:
\begin{align}
\label{eq:sub-2}
    \lambda_\mathbf{x}^{k+1}=\lambda_\mathbf{x}^{k} + \rho  \max \left \{0, g_\mathbf{x}(\hat{\mathbf{y}}^k)\right \}
\end{align}
where $\rho$ is the Lagrangian step size.

%Both $f$ and $g$ are smooth and differentiable functions in $\hat{\mathbf{y}}$. 
%Following the proposed self-supervised penalty approach by \cite{donti2021dc3}, we can introduce a Lagrange multiplier to relax the above constrained loss into an unconstrained loss: $f_\mathbf{x}(\hat{\mathbf{y}}) + \lambda \max(0, g_\mathbf{x}(\hat{\mathbf{y}}))$. This is similar to a supervised-penalty loss except the naive supervised loss (MSE/MAE) is now replaced by the primal objective over $f(.)$. This method retains the drawbacks of penalty-based methods having to find decide on the value of the penalty coefficient $\lambda$. 

%\citet{donti2021dc3} proposed to use a Lagrange multiplier to relax the constrained optimization problem into an unconstrained one, and use the corresponding Lagrangian as a loss function: $f_\mathbf{x}(\hat{\mathbf{y}}) + \lambda \max\{0, g_\mathbf{x}(\hat{\mathbf{y}})\}$.\tr{The authors do not use the "Lagrangian" method.} This is similar to a supervised-penalty loss except the naive supervised loss (MSE/MAE) is now replaced by the primal objective over $f(.)$. This method retains the drawbacks of penalty-based methods having to decide on the value of the penalty coefficient $\lambda$. 

\textbf{Augmented Lagrangian Method (ALM)}. In this method, we augment the objective used in the penalty method with a Lagrangian term. More formally, the optimization problem at the $k$-th step is given by (compare with Eq. \eqref{eq:penalty-1}):
\begin{align}
    \label{eq:alm}
    \min_\mathbf{y} f_\mathbf{x}(\mathbf{y})  +\frac{\lambda^k_\textbf{x}}{2} \max \left \{0, g_\mathbf{x}(\mathbf{y})\right \}^2 + \mu^k_\textbf{x} g_\mathbf{x}(\mathbf{y})
\end{align}
Here, $\lambda^k_\textbf{x}$ may be progressively increased similar to the penalty method while $\mu^k_\textbf{x}$ is updated using
\begin{align}
    \label{eq:alm-1}
    \mu^{k+1}_\textbf{x} = \max \left \{ 0, \mu^k_\textbf{x} + \lambda^k_\textbf{x} g_\mathbf{x}(\mathbf{y}^k) \right \}
\end{align}
Recently, \citet{park2022self} proposed a self-supervised primal-dual learning method that leverages two distinct networks to emulate the functionality of ALM: the first (primal) network takes as input $\mathbf{x}$ and outputs $\mathbf{y}$  while the second network focuses on learning the dual aspects; specifically it takes $\mathbf{x}$ as input and outputs $\mu^k_\textbf{x}$. The training process uses a sequential approach, where one network is trained while the other remains frozen to furnish the requisite values for the loss computation. 

The primal network uses the following loss function:
\begin{align*}
    \mathcal{L}^{A,p}_\mathbf{x}(\hat{\mathbf{y}}|\mu,\lambda) = f_\mathbf{x}(\hat{\mathbf{y}}) + \frac{\lambda}{2} \max \left \{0, g_\mathbf{x}(\hat{\mathbf{y}})\right \}^2+ \mu g_\mathbf{x}(\mathbf{y})
\end{align*}
While the dual network uses the following loss function
%\sva{Should this be $L^d_\mathbf{x}$}
\begin{align*}
    \mathcal{L}^{A,d}_\mathbf{x}(\hat{\mu}|\mathbf{y},\lambda, \mu^k) =  || \hat{\mu} - \max \left \{ 0, \mu^k + \lambda g_\mathbf{x}(\mathbf{y}) \right \}||
\end{align*}
where $\hat{\mu}$ is the predicted value of the Lagrangian multiplier.

%A drawback of the penalty-based and ALM-based self-supervised methods proposed in this section is that their loss functions are not global minimizers, namely the optimal solution w.r.t. the loss functions may be far away from the optimal solution $\mathbf{y}^*$ of the problem given in Eq. \eqref{eqn:cmpe_min}, unless the penalty co-efficient $\lambda^k_\mathbf{x} \rightarrow \infty$. We address this limitation in the next section.
\subsubsection{Drawbacks of the Penalty and ALM Methods}
A limitation of the penalty-based self-supervised method is that it does not guarantee a global minimum unless specific conditions are met. In particular, the optimal solution w.r.t. the loss function (see Eq. \eqref{eq:penalty-loss}) may be far away from the optimal solution $\mathbf{y}^*$ of the problem given in Eq. \eqref{eqn:cmpe_min}, unless the penalty co-efficient $\lambda^k_\mathbf{x} \rightarrow \infty$. Moreover, when $\lambda^k_\mathbf{x}$ is large for all $\mathbf{x}$, the gradients will be uninformative. In the case of ALM method (cf. \cite{nocedal}), for global minimization, we require that either $\lambda^k_\mathbf{x} \rightarrow \infty$ or $\forall \mathbf{x}$ with $g_\mathbf{x}(\mathbf{y})>0$, $\mu_\mathbf{x}^k$ should be such that $\min_{\mathbf{y}} f_\mathbf{x}(\mathbf{y})+\mu_\mathbf{x}^k g_\mathbf{x}(\mathbf{y}) > p_\mathbf{x}^*$. Additionally, ALM introduces a dual network, increasing the computational complexity and potentially leading to negative information transfer when the dual network's outputs are inaccurate. These outputs are subsequently utilized in the loss to train the primal network for the following iteration, thereby exerting a negative effect. To address these limitations, next, we introduce a self-supervised method that achieves global minimization without the need for a dual network or infinite penalty coefficients.

\eat{
 %\subsubsection{Machine Learning Methods for CMPE}
 \section{Solving CMPE using Learning to Optimize Methods in Literature}
 \label{sec:ML-CMPE}
 %\vg{Need citations below for learning to optimize literature}
 In this section, we show how techniques developed in the learning to optimize literature \cite{donti2021dc3,fioretto2020predicting,park2022self,zamzam2019learning} which seeks to develop machine learning approaches for solving constrained optimization problems can be leveraged to solve the CMPE task. The main idea is to train a deep neural network $\mathcal{F}_\Theta:\mathbf{X}\rightarrow \mathbf{Y}$ parameterized by the set $\Theta\in\mathbb{R}^M$ such that at test time given evidence $\mathbf{x}$, the network is able to predict an (near) optimal solution $\hat{\mathbf{y}}$ to the CMPE problem. While no prior work exists on solving CMPE using deep neural networks, we explore potential supervised and self-supervised techniques that can be adapted from the \textit{learning to optimize} literature.
 
 %Although, to the best of our knowledge, no prior work exists on solving CMPE using deep neural networks, we discuss here possible supervised and self-supervised machine learning techniques that can be adapted from the learning to optimize literature. 
 \eat{
 Let $\mathcal{F}_\Theta:\mathbf{X}\rightarrow \mathbf{Y}$ be a deep neural network parameterized by the set $\Theta \in \mathbb{R}^M$ which takes as input an assignment $\mathbf{x}$ over $\mathbf{X}$ and outputs an assignment $\hat{\mathbf{y}}$ over $\mathbf{Y}$ that is a good approximation to the optimal solution of the CMPE problem in \ref{eqn:cmpe}.}

\subsection{Supervised Methods} Learning the parameters of $\mathcal{F}_\Theta$ in a supervised manner will require labeled data in the form $\mathcal{D}=\{\langle \mathbf{x}_i, \mathbf{y}_i\rangle \}_{1}^{N}$ where each label $\mathbf{y}_i$ is an optimal solution to the problem in Eq. \eqref{eqn:cmpe_min} given $\mathbf{x}_i$. In such a setting, we can choose to minimize a loss between the true solution $\mathbf{y}_i$ and the predicted solution $\hat{\mathbf{y}_i} = \mathcal{F}_\Theta (\mathbf{x}_i)$. \citet{zamzam2019learning} proposed to use two standard loss functions: Mean-Squared-Error (MSE)  $\bigl(\sum_i(\mathbf{y}_i-\mathbf{\hat{y}}_i)^2/N\bigr)$ and Mean-Absolute-Error (MAE) $\bigl(\sum_i|\mathbf{y}_i-\mathbf{\hat{y}}_i|/N\bigr)$, and we can easily leverage them to solve the CMPE task. 
Experimentally (see the supplementary material), we found that neural networks trained using the MAE and MSE loss functions often output infeasible assignments. To address this issue,  in prior work, \citet{nellikkath2021physicsinformed} proposed to add a penalty to the loss function for violating the constraint. More formally, the authors propose to  add $\lambda \times \max\{0,g_{\mathbf{x}}(\hat{\mathbf{y}})\}$ to the loss function where $\lambda$ is a penalty coefficient (hyperparameter). It was observed in prior work that the quality of the solutions greatly depends on the value chosen for the penalty coefficient, and it is not straightforward to choose it optimally because it varies for each $\mathbf{x}$. To circumvent this issue, Fioretto et al. \cite{fioretto2020predicting} proposed a Lagrangian dual approach to optimally determine the penalty coefficient via subgradient descent. In our experiments, we evaluated both the naive and the penalty based supervised loss approaches (for CMPE) and found that the penalty method with MSE loss yields the best results. Therefore, in our experiments, we use it as a strong supervised baseline.% (results for other supervised methods are provided in the supplement). 
\subsection{Self-Supervised Methods} Supervised methods require pre-computed solutions to several NP-hard problem instances, each of which is computationally expensive to obtain. An additional challenge pertaining to the acquisition of optimal solutions in supervised methods stems from the potential variability in outputs produced by contemporary solvers, as elucidated by \citet{kotary2021learning}. Due to the neural networks' capacity to learn the mapping from input to output, the presence of divergent outputs for a given input within the dataset can lead to confusion, as it imparts disparate gradient information to the network for identical input instances. We, therefore focus on training the neural network in a self-supervised manner that does not rely on pre-computed results. 
\eat{We can easily transform our maximization problem in \ref{eqn:cmpe} to the following minimization problem in order to construct suitable loss function for the self-supervised training of the neural network. We take each potential function value of the log-linear model $f$ and negate it. Since this transformation can be efficiently done inside a program, we omit using any explicit notations for the transformed potentials and continue to use $f_\mathbf{x}(\mathbf{{y}})$ to denote $-f_\mathbf{x}(\mathbf{{y}})$.
\begin{align}\label{eqn:cmpe_min}
     \underset{\hat{\mathbf{y}}}{\text{minimize}} \:\: f_{\mathbf{x}}(\hat{\mathbf{y}}) \:\:s.t.\:\: g_{\mathbf{x}}(\hat{\mathbf{y}}) \leq 0
\end{align}
}
%Both $f$ and $g$ are smooth and differentiable functions in $\hat{\mathbf{y}}$. 
%Following the proposed self-supervised penalty approach by \cite{donti2021dc3}, we can introduce a Lagrange multiplier to relax the above constrained loss into an unconstrained loss: $f_\mathbf{x}(\hat{\mathbf{y}}) + \lambda \max(0, g_\mathbf{x}(\hat{\mathbf{y}}))$. This is similar to a supervised-penalty loss except the naive supervised loss (MSE/MAE) is now replaced by the primal objective over $f(.)$. This method retains the drawbacks of penalty-based methods having to find decide on the value of the penalty coefficient $\lambda$. 

\citet{donti2021dc3} proposed to use a Lagrange multiplier to relax the constrained optimization problem into an unconstrained one and use the corresponding Lagrangian as a loss function: $f_\mathbf{x}(\hat{\mathbf{y}}) + \lambda \max\{0, g_\mathbf{x}(\hat{\mathbf{y}})\}$.\tr{The authors do not use the "Lagrangian" method.} This is similar to a supervised-penalty loss except the naive supervised loss (MSE/MAE) is now replaced by the primal objective over $f(.)$. This method retains the drawbacks of penalty-based methods having to decide on the value of the penalty coefficient $\lambda$. 

Recently, \citet{park2022self} proposed a self-supervised primal-dual learning method that jointly optimizes for the primal variables ($\hat{\mathbf{y}}$) and the dual variables ($\lambda$) and can generate instance specific values for the Lagrange multipliers. This technique leverages two distinct networks to emulate the functionality of the Augmented Lagrangian Method (ALM): the first network specializes in learning the primal aspect, while the second network focuses on learning the dual facet. The training process uses a sequential approach, where one network is trained while the other remains frozen to furnish the requisite values for the loss computation. 

The methods proposed by \citet{donti2021dc3}, \citet{park2022self} can be easily adapted to solve the CMPE task. A drawback of both methods is that the global minima of the Lagrangian relaxation may not equal the global minima of the CMPE task (because of duality gap in multilinear problems). We address this limitation by using first principles to derive a new loss function.

\subsection{Methods for \mincmpe Objective Minimization}
\label{subsec:method_cmpe_obj}

\sva{Let the optimal solution for $ Q $ in Equation \ref{eqn:cmpe_min} be denoted as $ p^* $. Next we will look at various algorithms for solving constrained optimization problems, specifically the \mincmpe.}

\sva{\textbf{Lagrangian Relaxation:}
The optimal solution for $ Q $ can be obtained through Lagrangian relaxation \citep{geoffrion1974lagrangean}, which seeks to minimize the function $ f_\mathbf{x}(\hat{\mathbf{y}}) + \lambda_\mathbf{x}^{\mathcal{L}} g_\mathbf{x}(\hat{\mathbf{y}}) $, subject to $ \lambda_\mathbf{x}^{\mathcal{L}} \geq 0 $. The resultant optimal solution from this optimization process is denoted as $ a^* $. The parameter $ \lambda_\mathbf{x}^{\mathcal{L}} $ serves as the Lagrange multiplier and is employed to penalize constraint violations. It should be emphasized that this method yields a lower bound for $ Q $. The selection of appropriate values for the Lagrange multipliers is crucial for the quality of the generated lower bound. Sub-gradient optimization is a suitable technique for determining the values of $ \lambda $. The technique is iterative, beginning with an initial set of multipliers. Subsequently, subgradients are computed, and gradient updates are performed, enabling a reduction in the gap between the lower bound and the actual solution. 
}

\sva{\textbf{Penalty Method:} We can also solve the constrained optimization problem $Q$ through penalty methods. A term, referred to as the penalty function, is incorporated into the objective function. This term consists of a penalty parameter multiplied by a function that quantifies constraint violations. Various penalty functions ($g_\mathcal{P}(\hat{\mathbf{y}})$) can be utilized, such as $ \max\{0, g_\mathbf{x}(\hat{\mathbf{y}})\} $ and $ \max\{0, g_\mathbf{x}(\hat{\mathbf{y}})\}^2 $. The objective function to minimize is $ f_\mathbf{x}(\hat{\mathbf{y}}) + \lambda_\mathbf{x}^{\mathcal{P}} g_\mathcal{P}(\hat{\mathbf{y}}) $. Determining the optimal $ \lambda_\mathbf{x}^{\mathcal{P}} $ is crucial. This is achieved by progressively increasing $ \lambda_\mathbf{x}^{\mathcal{P}} $ either until the constraints are met or a predefined maximum $ \lambda_{\text{max}} $ is attained. The value of $ \lambda_\mathbf{x}^{\mathcal{P}} $ can be updated iteratively using straightforward strategies, such as multiplication by a fixed factor. Let the solution obtained from minimizing the penalty-based loss be denoted as $ b^* $. It should be noted that the ideal value for $ \lambda_\mathbf{x}^{\mathcal{P}} $ would be $ \infty $, enabling $ b^* = p^* $.}

\sva{\textbf{Augmented Lagrangian:} Augmented Lagrangian methods provide a unique approach to constrained optimization by formulating an unconstrained objective based on the Lagrangian of the original problem, further augmented with an additional penalty term. The objective function to minimize is $ f_\mathbf{x}(\hat{\mathbf{y}}) + \lambda_\mathbf{x}^{\hat{\mathcal{L}}}g_{\mathbf{x}}(\hat{\mathbf{y}}) + \lambda_\mathbf{x}^{\mathcal{P}} g_\mathcal{P}(\hat{\mathbf{y}})$. $ \lambda_\mathbf{x}^{\hat{\mathcal{L}}} $ serves as an estimate of the Lagrange multipliers, with its accuracy improving in each iteration. Meanwhile, $ \lambda_\mathbf{x}^{\hat{\mathcal{P}}} $ represents the penalty parameter. In this methodology, both $ \lambda_\mathbf{x}^{\hat{\mathcal{L}}} $ and $ \lambda_\mathbf{x}^{\mathcal{P}} $ are iteratively updated. The adjustment of $ \lambda_\mathbf{x}^{\mathcal{P}} $ follows the pattern described in the penalty methods. For updating $ \lambda_\mathbf{x}^{\hat{\mathcal{L}}} $, the subsequent rule is employed:
$ \lambda_\mathbf{x}^{\hat{\mathcal{L}}} \longleftarrow\lambda_\mathbf{x}^{\hat{\mathcal{L}}} +  max\{ 0, \lambda_\mathbf{x}^{\mathcal{P}}g_{\mathbf{x}}(\hat{\mathbf{y}_k})\}$, where $\hat{\mathbf{y}_k}$ is the solution to the unconstrained problem at the current step. 
}

\eat{Notably, PDL incorporates instance-specific Lagrangian multipliers sourced from the dual network, thereby enabling the primal network to yield the desired outputs while the dual network supplies the Lagrangian multiplier for each specific example. In addition to the aforementioned components, the PDL method also encompasses the dynamic updating of the penalty coefficient, denoted as $\rho$. This coefficient serves a dual purpose: first, it determines the step size for updating the dual variables, and second, it restricts the severity of constraint violations.} 

\eat{
Supervised methods necessitate the availability of labels for each training example. In the context of our specific task, these labels represent the optimal solutions for the given problem, and their generation can be a laborious and resource-intensive endeavor. 

\textbf{Self-Supervised Penalty Approach} - 
The self-supervised penalty approach employs a similar loss function as the supervised penalty approach, with the key distinction lying in the utilization of the objective function in lieu of the conventional naïve loss. This modification effectively transforms the method from a supervised paradigm to a self-supervised paradigm, as it no longer necessitates the availability of optimal solutions for training purposes.}

%\textbf{PDL} - 
}
\eat{
\section{PRELIMINARIES}
\label{sec:prelim}
We denote sets of random variables by bold upper-case letters (e.g., $\mathbf{X}$, $\mathbf{Y}$, $\mathbf{Z}$, etc.) and assignments to them by bold lower-case letters (e.g., $\mathbf{x}$, $\mathbf{y}$, $\mathbf{z}$, etc.). Let $\mathbf{z}_A$ denote the projection of the complete assignment $\mathbf{z}$ on to the set $A$ of variables. Random variables can either be discrete or continuous. A log-linear model $\mathcal{M}_f = \langle \mathbf{Z}, f \rangle$ represents the following joint probability distribution over a set $\mathbf{Z}$ of random variables
\begin{equation}\label{eqn:log-linear}
    P(\mathbf{z}) \propto \exp \left (\sum_i f_i(\mathbf{z}_{S(f_i)}) \right ) 
\end{equation}
Where each $f_i \in f$ is a log-potential function that takes as input an assignment to a subset $S(f_i)$ of variables and maps it to $\mathbb{R}$. $S(f_i)$ is called the scope of function $f_i$. %We focus on pairwise log-linear models where the size of the scope of each potential function is at most 2. % I commented the above line because we don't need it for analysis.  
We slightly abuse notation and write $f(\mathbf{z})$ to denote $\sum_i f_i(\mathbf{z}_{S(f_i)})$. Let $\mathbf{Z}=\mathbf{X} \cup \mathbf{Y}$ such that $\mathbf{X} \cap \mathbf{Y} =\emptyset$. Given a log-linear model defined over the random variables $\mathbf{Z}$, the most probable explanation (MPE) task is to find an assignment $\mathbf{y}^*$ to all variables in $\mathbf{Y}$ given an observation (evidence) $\mathbf{x}$ over $\mathbf{X}$ such that $P(\mathbf{x}, \mathbf{y}^*)$ is maximized. MPE is NP-hard in general, but can be solved in polynomial time on certain classes of  probabilistic circuits (typically having no latent variables) such as cutset networks [\cite{}], AND/OR graphs having small context size [\cite{}] and low tree-width Bayesian and Markov networks [\cite{}].  

%MPE is known to be an NP-hard problem in general, arbitrary structured probabilistic models but can be solved tractably in certain classes of probabilistic circuits without latent variables like cutset networks [\cite{}], AND/OR graphs [\cite{}], as well as low tree-width Bayesian and Markov networks [\cite{}].  

\subsection{Constrained Most Probable Explanation}
Constrained most probable explanation or CMPE is an MPE task with constraints which restrict the values the decision variables $\mathbf{Y}$ can take. Given two log-linear models having log-potentials $f$ and $g$ respectively, such that both of them are defined over the same sets of variables $\mathbf{X}$ and $\mathbf{Y}$, an observation $\mathbf{x}$ over $\mathbf{X}$, and a constant $q \in \mathbb{R}$, the CMPE task is to find the most likely assignment $\mathbf{y}^*$ to all the variables $\mathbf{Y}$ w.r.t. $f$ such that the constraint $g(\mathbf{x,y}^*)\leq q$ is satisfied. Formally, CMPE is the following maximization problem.
\begin{equation}\label{eqn:cmpe}
    \underset{\mathbf{y}}{\text{maximize}} \:\: f(\mathbf{x,y}) \:\: s.t. \:\: g(\mathbf{x, y}) \leq q
\end{equation}
For brevity, we will denote $f(\mathbf{x, y})$ as $f_\mathbf{x}(\mathbf{y})$ and $g(\mathbf{x, y})$ as $g_\mathbf{x}(\mathbf{y})$. Constraints over the decision variables in $\mathbf{Y}$ can be specified in a variety of ways. In our experiments, we have used both log-linear models and neural network classifiers as $g$. 

\subsubsection{Upper Bounding Methods for CMPE}\label{sec:upper_bounds}

\cite{rahman2021novel} proposed two approaches for computing upper bounds to the CMPE problem in \ref{eqn:cmpe}. \eat{Both approaches use method of multipliers to relax the original constrained problem.} The first approach is based on the Lagrangian dual method that introduces an auxiliary variable $\lambda\geq 0$ to transform the constrained maximization of \ref{eqn:cmpe} to an unconstrained maximization of $f_\mathbf{x}(\mathbf{y}) + \lambda g_\mathbf{x}(\mathbf{y})$. The optimal value for $\lambda$ is then obtained by solving the dual problem using iterative sub-gradient methods. The second approach uses the Lagrangian Decomposition to transform the problem into a multi-choice knapsack problem and then utilize off-the-shelf MCKP solvers. Both of these approaches yield upper bounds to the optimal solution of CMPE that can be used in subsequent search methods like branch-and-bound over the decision variables in $\mathbf{Y}$.  

\eat{
\begin{itemize}
    \item formally define the task here.
    \item describe the Lagrangian formulation and its guarantees.  
    \item Mention the linear approximation to the multi-linear integer program and that existing solvers are slow and inaccurate. 
\end{itemize}

We are interested in solving a more generalized version of the CMPE task, namely, given observation or evidence to some of the variables, find the CMPE assignment to the rest of the variables. We partition the set of given random variables into disjoint subsets $\mathbf{X}$ and $\mathbf{Y}$ such that $|\mathbf{X}|=n$ and $|\mathbf{Y}|=m$. Let $\mathbf{X}$ be the set of evidence (input) variables and $\mathbf{Y}$ be the set of query (output) variables. Given an assignment $\mathbf{X=x}$, a conditional CMPE task is the following maximization problem. 

\begin{align}\label{eqn:ccmpe}
    \underset{\mathbf{y}}{\text{maximize}} \:\:\:f(\mathbf{x, y}) \:\:s.t.\:\: g(\mathbf{x,y}) \leq 0
\end{align}}
 \subsubsection{Machine Learning Methods for CMPE}
 A learning approach to solve the CMPE task means to train a deep learning model with data such that at test time given an observation or evidence $\mathbf{X=x}$, the model is able to predict a near optimal solution $\mathbf{y}^*$ to the CMPE problem. Although, to the best of our knowledge, no prior work exists on solving this specific task using a deep neural network, we discuss here possible supervised and self-supervised machine learning techniques that can used to learn for CMPE.

 Let $\mathcal{F}_\Theta:\mathbf{X}\rightarrow \mathbf{Y}$ be a deep neural network parameterized by the set $\Theta \in \mathbb{R}^M$ which takes as input an assignment $\mathbf{x}$ over $\mathbf{X}$ and outputs an assignment $\hat{\mathbf{y}}$ over $\mathbf{Y}$ that is a solution to the CMPE problem in \ref{eqn:cmpe}. Learning the parameters of $\mathcal{F}_\Theta$ in a supervised manner will require labeled data in the form $\mathcal{D}=\{\langle \mathbf{x}_i, \mathbf{y}_i\rangle \}_{1}^{N}$ where each label $\mathbf{y}_i$ is the true optimal solution to the problem in \ref{eqn:cmpe} given the observation $\mathbf{x}_i$. In such setting one can choose to minimize a differential loss function like the squared loss $\sum_i(\mathbf{y}_i-\mathbf{\hat{y}}_i)^2$ where $\hat{\mathbf{y}_i} = \mathcal{F}_\Theta (\mathbf{x}_i)$. Unfortunately, this method requires pre-computed solutions (labels) to several NP-hard problem instances each of which is computationally expensive to obtain. Instead, we propose a self-supervised approach that learns to optimize for the problem in \ref{eqn:ccmpe} given data over $\mathbf{X}$ only i.e. $\mathcal{D}=\{\mathbf{x}_i\}_{i=1}^{i=N}$.

Since neural networks are designed to minimize a loss function, we can easily construct an equivalent loss to minimize given the maximization problem in \ref{eqn:cmpe} as follows. Take each instantiated potential function value of the log-linear model in $f$ and negate it. Since this transformation can be efficiently done inside a program, we omit using any explicit notations for the transformed potentials and continue to use $f_\mathbf{x}(\mathbf{\hat{y}})$ to denote $-f_\mathbf{x}(\mathbf{\hat{y}})$.

\begin{align}\label{eqn:cmpe_min}
     \underset{\hat{\mathbf{y}}}{\text{minimize}} \:\: f_{\mathbf{x}}(\hat{\mathbf{y}}) \:\:s.t.\:\: g_{\mathbf{x}}(\hat{\mathbf{y}}) \leq 0
\end{align}

\textbf{Supervised Methods} - 
Supervised methods typically leverage pre-solved optimal solutions generated by an optimization solver. \\
\textbf{Supervised Loss Approach} - The supervised loss approach, exemplified by \citet{zamzam2019learning} adopts a naïve strategy of minimizing the loss between the model's outputs and the corresponding optimal solutions. Mean Squared Error (MSE) and Mean Absolute Error (MAE) are employed as metrics for comparing these two quantities. \\
\textbf{Supervised Penalty Approach} -  The supervised penalty approach extends the supervised loss approach by incorporating penalty terms for each constraint violation. This method adds a penalty to each constraint, with a penalty coefficient denoted as $\rho$.  This loss formulation, as demonstrated in \citet{nellikkath2021physicsinformed}, introduces penalties to account for violations of constraints within the optimization process.
% \textbf{Supervised Lagrangian Approach}

\textbf{Self-Supervised Methods} - Supervised methods necessitate the availability of labels for each training example. In the context of our specific task, these labels represent the optimal solutions for the given problem, and their generation can be a laborious and resource-intensive endeavor. An additional challenge pertaining to the acquisition of optimal solutions in supervised methods stems from the potential variability in outputs produced by contemporary solvers, as elucidated by \citet{kotary2021learning}. Due to the neural networks' capacity to learn the mapping from input to output, the presence of divergent outputs for a given input within the dataset can lead to confusion, as it imparts disparate gradient information to the network for identical input instances.

\textbf{Self-Supervised Penalty Approach} - 
The self-supervised penalty approach employs a similar loss function as the supervised penalty approach, with the key distinction lying in the utilization of the objective function in lieu of the conventional naïve loss. This modification effectively transforms the method from a supervised paradigm to a self-supervised paradigm, as it no longer necessitates the availability of optimal solutions for training purposes.

\textbf{PDL} - 
The Primal-Dual Learning (PDL) \citep{parkSelfSupervisedPrimalDualLearningConstrainedOptimization2022} is a self-supervised primal-dual learning method.  This technique leverages two distinct networks to emulate the functionality of the Augmented Lagrangian Method (ALM): the first network specializes in learning the primal aspect, while the second network focuses on learning the dual facet. The training process entails a sequential approach, where one network is trained while the other remains frozen to furnish the requisite values for the loss computation. Notably, PDL incorporates instance-specific Lagrangian multipliers sourced from the dual network, thereby enabling the primal network to yield the desired outputs while the dual network supplies the Lagrangian multiplier for each specific example. In addition to the aforementioned components, the PDL method also encompasses the dynamic updating of the penalty coefficient, denoted as $\rho$. This coefficient serves a dual purpose: first, it determines the step size for updating the dual variables, and second, it restricts the severity of constraint violations. 

\tr{NN takes as input either assignment or the instantiated features of the log-linear model and each output is associated with a random variable}

\tr{without eveidence nn solves the unconditional CMPE task}

\tr{Discuss the primal-dual learning and its drawbacks}

}
\section{ \label{sec:our-method}A NOVEL SELF-SUPERVISED CMPE SOLVER}
%\label{sec:prelim}
\eat{The methods proposed by \citet{donti2021dc3}, \citet{park2022self} can be easily adapted to solve the CMPE task. A drawback of both methods is that the global minima of the Lagrangian relaxation may not equal the global minima of the CMPE task (because of duality gap in multilinear problems). We address this limitation by using first principles to derive a new loss function.
}
%We propose a new loss function for training a deep neural network in a self-supervised manner to solve the CMPE task. To recap, since we reformulated CMPE as a minimization task, the input to the neural network is an assignment $\mathbf{x}$ to the evidence variables and the output is an assignment $\mathbf{y}$ such that $f(\mathbf{x},\mathbf{y})$ is minimized and $g(\mathbf{x},\mathbf{y}) \leq 0$. Unlike the primal-dual method, our proposed loss function does not introduce any extra set of optimization variables (dual variables) and is derived only using \textit{first principles}. 
	
\eat{ 
Problem \ref{eqn:nn_cmpe} is equivalent to problem \ref{eqn:ccmpe} and are equivalent to each other and can be transformed to another by simply taking negating the potentials in the log-linear model $f$.

let $\mathcal{C}_x$ denote the CMPE problem defined over the variables $\mathbf{Y}$ as
Given data, training $\mathcal{F}_\Theta$ means to learn the parameters $\Theta$ such that $\hat{\mathbf{y}}$ is the \textit{optimal} solution to the problem $\mathcal{C}_x$. We propose a self-supervised learning approach to train $\mathcal{F}_\Theta$ such that the given data constitutes observations only over the variables in $\mathbf{X}$. }

\eat{\subsection{Determining Value of $\lambda$ using Bounding Methods}
\sva{How can we get bounds for $\lambda$ - how are we doing something different}

\sva{It is important to note that all the methods mentioned in \ref{subsec:method_cmpe_obj} necessitate iterative updates to the values of $ \lambda $. Such an iterative process can be computationally expensive. Additionally, the gap between the optimal solution to problem $ Q $ and the solutions obtained through these methods, as well as the number of constraint violations, is contingent on the value of $ \lambda $. Consequently, if a suitable value for $ \lambda $ is not identified, the performance of these methods is likely to be sub-optimal.
It is crucial to emphasize that neural network training is inherently iterative. Incorporating additional iterative methods into a process already prone to incorrect gradient signals amplifies the likelihood of sub-optimal training, leading to a potential failure in achieving a solution with a narrow optimality gap and minimal constraint violations.}

\sva{In cases where optimizing $ \lambda $ as described in Section \ref{subsec:method_cmpe_obj} is not desired, an alternative is to establish bounds on $ \lambda $ in a one-shot manner. Consider, for example, the penalty method previously discussed, specifically the variant employing a linear penalty term $ f_\mathbf{x}(\hat{\mathbf{y}}) + \lambda_{\mathcal{P}} \max\{0, g_\mathbf{x}(\hat{\mathbf{y}})\}$. We can define the loss as a piece-wise function as follows 
\begin{align}\label{eqn:penalty_}
    \mathcal{L}_\mathbf{x}(\hat{\mathbf{y}}) & = \begin{cases}
    & f_{\mathbf{x}}(\hat{\mathbf{y}}) \:\:\text{if}\:\: g_\mathbf{x}(\hat{\mathbf{y}}) \leq 0\\
    & f_{\mathbf{x}}(\hat{\mathbf{y}}) + \lambda g_{\mathbf{x}}(\hat{\mathbf{y}}) \:\:\text{if}\:\: g_\mathbf{x}(\hat{\mathbf{y}}) > 0\\
\end{cases}
\end{align}}

\sva{It is important to note that the optimal value of $ \lambda $ needs to be identified only for cases where $ g_\mathbf{x}(\hat{\mathbf{y}}) > 0 $. Specifically, for the infeasible region, the objective is to ensure that the loss associated with all infeasible assignments surpasses the optimal value within the feasible region. Mathematically, $ f_{\mathbf{x}}(\hat{\mathbf{y}}) + \lambda g_\mathbf{x}(\hat{\mathbf{y}}) > p_{\mathbf{x}}^* $, where $ p_{\mathbf{x}}^* $ represents the optimal value of $ \min_{\hat{\mathbf{y}}}  f_\mathbf{x}(\hat{\mathbf{y}}) \:\: \text{s.t.} \:\: g_\mathbf{x}(\hat{\mathbf{y}}) \leq 0 $. This leads us to the following optimization problem:
\begin{equation}
\begin{aligned}
    \min_{\mathbf{y}} \quad & f_{\mathbf{x}}(\hat{\mathbf{y}}) + \lambda g_{\mathbf{x}}(\hat{\mathbf{y}}) \\
    \text{s.t.} \quad & g_\mathbf{x}(\hat{\mathbf{y}}) > 0, \\
                      & f_{\mathbf{x}}(\hat{\mathbf{y}}) + \lambda g_{\mathbf{x}}(\hat{\mathbf{y}}) > p_{\mathbf{x}}^*
\end{aligned}
\label{eq:lambda_objective}
\end{equation}
We can use methods discussed in \ref{subsec:method_cmpe_obj} to minimize the objective in \ref{eq:lambda_objective}. Let us rewrite this equation as
\begin{equation}
\begin{aligned}
    \max_{\beta_1, \beta_2} \min_{x, \lambda} & f_{\mathbf{x}}(\hat{\mathbf{y}}) + \lambda g_{\mathbf{x}}(\hat{\mathbf{y}}) \\
    & - \beta_1 g_{\mathbf{x}}(\hat{\mathbf{y}}) \\
    & + \beta_2 \left( p_{\mathbf{x}}^* - f_{\mathbf{x}}(\hat{\mathbf{y}}) - \lambda g_{\mathbf{x}}(\hat{\mathbf{y}}) \right)
\end{aligned}
\label{eq:lambda_objective_unconstrained}
\end{equation}
It is essential to note that optimizing the objective in Eq. \eqref{eq:lambda_objective_unconstrained} requires prior knowledge of $ p_{\mathbf{x}}^* $, the very objective function the optimization aims to minimize in the feasible region. As a result, an approximation for $ p_{\mathbf{x}}^* $ must be employed. This approximation, however, introduces an error that could propagate through subsequent calculations, potentially undermining the optimization process and precluding the identification of optimal $ \hat{\mathbf{y}} $ values.
}
}
\eat{\subsection{A New Loss Function Based on First Principles}}
	
An appropriately designed loss function should have the following characteristics.  For feasible solutions, namely when $g_{\mathbf{x}}(\mathbf{y}) \leq 0$, the loss function should be proportional to $f_{\mathbf{x}}(\mathbf{y})$. While for infeasible assignments, it should equal infinity. This loss function will ensure that once a feasible solution is found, the neural network will only explore the space of feasible solutions. Unfortunately, infinity does not provide any gradient information, and the neural network will get stuck in the infeasible region if the neural network generates an infeasible assignment during training. 
	
	An alternative approach is to use $g$ as a loss function when the constraint is not satisfied (i.e., $g_{\mathbf{x}}(\mathbf{y})>0$) in order to push the infeasible solutions towards feasible ones \cite{Liu2023jan}. Unfortunately,  this approach will often yield feasible solutions that lie at the boundary $g_{\mathbf{x}}(\mathbf{y})=0$.  For instance, for a boundary assignment $\mathbf{y}_b$ where  $g_{\mathbf{x}}(\mathbf{y}_b)=0$ but $f_{\mathbf{x}}(\mathbf{y}_b)>0$ (or decreasing), the sub-gradient will be zero, and the neural network will treat the boundary assignment as an optimal one. 
 
%\sva{For Vibhav - Can you please write the explanation of the NeurIPS 2021 as an algorithm for calculating $\alpha_{\mathbf{x}}$}

	To circumvent this issue, we propose a loss function which has the following two properties: (1) It is proportional to $g$ in the infeasible region with $f$ acting as a control in the boundary region (when $g$ is zero); and (2) It is proportional to $f$ in the feasible region. Formally, 
		\begin{align}\label{eqn:loss_1}
		\mathcal{L}_\mathbf{x}(\hat{\mathbf{y}}) & = \begin{cases}
			& f_{\mathbf{x}}(\hat{\mathbf{y}}) \:\:\text{if}\:\: g_\mathbf{x}(\hat{\mathbf{y}}) \leq 0\\
			& \alpha_{\mathbf{x}}(f_{\mathbf{x}}(\hat{\mathbf{y}}) + g_{\mathbf{x}}(\hat{\mathbf{y}})) \:\:\text{if}\:\: g_\mathbf{x}(\hat{\mathbf{y}}) > 0\\
		\end{cases}
	\end{align}
	where $\alpha_\mathbf{x}$  is  a function of  the evidence $\mathbf{x}$. Our goal is to find a bound for $\alpha_\mathbf{x}$ such that the following desirable property is satisfied and the bound can be computed in polynomial time for each $\mathbf{x}$ by leveraging bounding methods for CMPE.
 
 %The  purpose of $\alpha_\mathbf{x}$ is to ensure that various domain-dependent desirable properties are satisfied by the loss function.  We consider one such property and derive a bound on $\alpha_{\mathbf{x}}$ which ensures that the property is satisfied. We also show that the bound can be computed in polynomial time. 

 %To facilitate our presentation, we introduce new notation. Let $p_{\mathbf{x}}^*$ and $q_{\mathbf{x}}^*$ denote the optimal value of our CMPE task $\min_{\hat{\mathbf{y}}}  f_\mathbf{x}(\hat{\mathbf{y}}) \:\:s.t.\:\: g_\mathbf{x}(\hat{\mathbf{y}}) \leq 0$. \eat{ and $\min_{\hat{\mathbf{y}}}  f_\mathbf{x}(\hat{\mathbf{y}}) + g_\mathbf{x}(\hat{\mathbf{y}}) \:\:s.t.\:\: g_\mathbf{x}(\hat{\mathbf{y}}) > 0$ respectively.}

 \noindent\textbf{Property (Consistent Loss)}: The loss for all infeasible assignments is higher than the optimal value $p_\mathbf{x}^*$.\\
	To satisfy this property,  we have to ensure that:
  \begin{align*}
\forall \hat{\mathbf{y}} \:\: \text{s.t.}\:\: g_\mathbf{x}(\hat{\mathbf{y}}) > 0, \:\:\:\: \alpha_\mathbf{x} \left( f_\mathbf{x}(\hat{\mathbf{y}}) + g_\mathbf{x}(\hat{\mathbf{y}}) \right) & > p^*_\mathbf{x}
\end{align*}
	 which implies that the following condition holds. 
	 \begin{align*}%\label{eqn:alpha}
	 	& \alpha_\mathbf{x} \left (\min_{\mathbf{\hat{y}}}\:\:   f_\mathbf{x}(\hat{\mathbf{y}}) + g_\mathbf{x}(\hat{\mathbf{y}}) \:\:s.t.\:\: g_\mathbf{x}(\hat{\mathbf{y}}) > 0 \right )  > p^*_\mathbf{x} 
    \end{align*}

Let $q_\mathbf{x}^*$ denote the optimal value of $\min_{\hat{\mathbf{y}}}f_\mathbf{x}(\hat{\mathbf{y}}) + g_\mathbf{x}(\hat{\mathbf{y}}) \:\:s.t.\:\: g_\mathbf{x}(\hat{\mathbf{y}}) > 0$. Then, $\alpha_\mathbf{x}  > \frac{p^*_\mathbf{x}}{q^*_\mathbf{x}}$.
\eat{
    \begin{align}
        		& \eat{\therefore\:\:}  \alpha_\mathbf{x}  > \frac{p^*_\mathbf{x}}{q^*_\mathbf{x}}
    \end{align}
}
\eat{We consider three different properties below and derive bounds for $\alpha_{\mathbf{x}}$, thus yielding a hyper-parameter free loss function. To facilitate our presentation, we introduce new notation. Let $\text{p}\text{min}_{\mathbf{x}}^*$,  $\text{pmax}_{\mathbf{x}}^*$, $\text{pbar}_{\mathbf{x}}^*$ and $\text{qmin}_{\mathbf{x}}^*$ denote the optimal values of   $\min_{\mathbf{\hat{y}}}  f_\mathbf{x}(\mathbf{\hat{y}}) \:\:s.t.\:\: g_\mathbf{x}(\mathbf{\hat{y}}) < 0$, $\max_{\mathbf{\hat{y}}}  f_\mathbf{x}(\mathbf{\hat{y}}) \:\:s.t.\:\: g_\mathbf{x}(\mathbf{\hat{y}}) < 0$, $\max_{\mathbf{\hat{y}}}  f_\mathbf{x}(\mathbf{\hat{y}}) \:\:s.t.\:\: -\epsilon <g_\mathbf{x}(\mathbf{\hat{y}}) < 0$ (where $\epsilon$ is a small constant) and $\min_{\mathbf{\hat{y}}}  f_\mathbf{x}(\mathbf{\hat{y}}) + g_\mathbf{x}(\mathbf{\hat{y}}) \:\:s.t.\:\: g_\mathbf{x}(\mathbf{\hat{y}}) > 0$ respectively.
	\noindent\textbf{Property 1 (Conservative Loss)}: Ensure that the loss for all infeasible assignments is higher than the optimal value over the feasible region.\\
	To satisfy property 1,  we have to ensure that:
	 \begin{align*}
	 	\forall \hat{\mathbf{y}} \:\: s.t.\:\:g_\mathbf{x}(\mathbf{\hat{y}}) > 0\:\:\alpha_\mathbf{x} (f_\mathbf{x}(\mathbf{\hat{y}}) + g_\mathbf{x}(\mathbf{\hat{y}})) & > \text{pmin}^*_\mathbf{x}\nonumber\\
	 \end{align*}
	 which implies that the following condition holds. 
	 \begin{align}\label{eqn:alpha}
	 	& \min_{\mathbf{\hat{y}}} \alpha_\mathbf{x} \biggl( f_\mathbf{x}(\mathbf{\hat{y}}) + g_\mathbf{x}(\mathbf{\hat{y}}) \:\:s.t.\:\: g_\mathbf{x}(\mathbf{\hat{y}}) > 0\biggr)  > \text{pmin}^*_\mathbf{x} \nonumber\\
	 	%{=  \alpha_\mathbf{x}  \min_\mathbf{y} (f_\mathbf{x}(\mathbf{y}) + g_\mathbf{x}(\mathbf{y}))  > f_\mathbf{x}(p^*_\mathbf{x})\\
	 		%& \therefore \alpha_\mathbf{x}   >  \frac{p^*_\mathbf{x}}{\min_\mathbf{\hat{y}} f_\mathbf{x}(\mathbf{\hat{y}}) + g_\mathbf{x}(\mathbf{\hat{y}}) \text{  s.t. } g_\mathbf{x}(\mathbf{\hat{y}}) > 0} \nonumber\\
	 		& \therefore  \alpha_\mathbf{x}  > \frac{\text{pmin}^*_\mathbf{x}}{\text{qmin}^*_\mathbf{x}}
	 	\end{align}
	
	\noindent\textbf{Property 2 (Contrastive Loss)}: Ensure that the loss for all infeasible assignments is higher than the maximum value over the feasible region.\\
	Using the same analysis as above, it is easy to see that in order to satisfy property 2,  we have to ensure that:
	\begin{align}
		 \alpha_\mathbf{x}  > \frac{\text{pmax}^*_\mathbf{x}}{\text{qmin}^*_\mathbf{x}}
	\end{align}
	
		\noindent\textbf{Property 3 (Barrier Loss)}: Ensure that the loss for all infeasible assignments is higher than the maximum value over feasible assignments that lie on the boundary of size $\epsilon$ between the feasible and infeasible regions.\\
	To satisfy property 3,  we have to ensure that:
	\begin{align}
		\alpha_\mathbf{x}  > \frac{\text{pbar}^*_\mathbf{x}}{\text{qmin}^*_\mathbf{x}}
	\end{align}
 
For all the three losses (bounds on $\alpha_{\mathbf{x}}$) described above, the problems in the numerator and  denominator are separate instances of the CMPE task (subject to replacing max by min and/or converting the greater than equal to constraint with a less than equal to constraint). Since solving them exactly is impractical (and moreover, we are interested in self-supervised methods where we do not assume access to such a solver), we propose to upper bound the numerator using a feasible solution (therefore, a simple yet efficient approach is to keep track of a feasible solution during batch-style gradient descent) and lower bound the denominator using Lagrange relaxation and decomposition methods described in \cite{rahman-nips21}.  Note that lower bounds can also be computed via LP relaxations. Also, not that when a feasible solution is hard to compute, we can leverage fast variational approximations such as dual decomposition methods \cite{globerson}.
}
\begin{proposition}
%If $\alpha_\mathbf{x}  > \frac{p^*_\mathbf{x}}{q^*_\mathbf{x}} $ then $\min_{\hat{\mathbf{y}}}\mathcal{L}_\mathbf{x}(\hat{\mathbf{y}})=p_\mathbf{x}^*$
If $\mathcal{L}_\mathbf{x}(\hat{\mathbf{y}})$ is consistent, i.e., $\alpha_\mathbf{x}  > \frac{p^*_\mathbf{x}}{q^*_\mathbf{x}} $ then $\min_{\hat{\mathbf{y}}}\mathcal{L}_\mathbf{x}(\hat{\mathbf{y}})=p_\mathbf{x}^*$, namely $\mathcal{L}_\mathbf{x}(\hat{\mathbf{y}})$ is an optimal loss function.
\end{proposition}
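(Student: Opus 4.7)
The plan is to prove the proposition by splitting the domain into the feasible region $\{\hat{\mathbf{y}} : g_\mathbf{x}(\hat{\mathbf{y}}) \leq 0\}$ and the infeasible region $\{\hat{\mathbf{y}} : g_\mathbf{x}(\hat{\mathbf{y}}) > 0\}$ and minimizing over each separately, exploiting the definition of $\mathcal{L}_\mathbf{x}$ on each piece.

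First I would handle the feasible region. By the definition of the loss, for every $\hat{\mathbf{y}}$ with $g_\mathbf{x}(\hat{\mathbf{y}}) \leq 0$ we have $\mathcal{L}_\mathbf{x}(\hat{\mathbf{y}}) = f_\mathbf{x}(\hat{\mathbf{y}})$. Hence the minimum of $\mathcal{L}_\mathbf{x}$ restricted to the feasible region is precisely $\min\{f_\mathbf{x}(\hat{\mathbf{y}}) : g_\mathbf{x}(\hat{\mathbf{y}}) \leq 0\} = p_\mathbf{x}^*$ by definition of the CMPE optimum in Eq.~\eqref{eqn:cmpe_min}, and this value is attained at any optimal CMPE assignment $\mathbf{y}^*$.

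Next I would handle the infeasible region. For every $\hat{\mathbf{y}}$ with $g_\mathbf{x}(\hat{\mathbf{y}}) > 0$, the loss equals $\alpha_\mathbf{x}(f_\mathbf{x}(\hat{\mathbf{y}}) + g_\mathbf{x}(\hat{\mathbf{y}}))$, so the minimum over this region equals $\alpha_\mathbf{x} q_\mathbf{x}^*$ by the definition of $q_\mathbf{x}^*$. Since $f_\mathbf{x}$ was assumed strictly positive and $g_\mathbf{x}(\hat{\mathbf{y}}) > 0$ on this region, we have $q_\mathbf{x}^* > 0$, so the consistency hypothesis $\alpha_\mathbf{x} > p_\mathbf{x}^*/q_\mathbf{x}^*$ can be multiplied through to yield $\alpha_\mathbf{x} q_\mathbf{x}^* > p_\mathbf{x}^*$. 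Thus for every infeasible $\hat{\mathbf{y}}$, $\mathcal{L}_\mathbf{x}(\hat{\mathbf{y}}) \geq \alpha_\mathbf{x} q_\mathbf{x}^* > p_\mathbf{x}^*$.

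Combining the two cases, the overall minimum of $\mathcal{L}_\mathbf{x}$ is attained in the feasible region and equals $p_\mathbf{x}^*$, which proves the proposition. I do not anticipate a hard step here; the only subtlety is justifying $q_\mathbf{x}^* > 0$ (so that the ratio $p_\mathbf{x}^*/q_\mathbf{x}^*$ is meaningful and the direction of the inequality is preserved), which follows immediately from strict positivity of $f_\mathbf{x}$ together with $g_\mathbf{x} > 0$ on the infeasible set. A corollary worth stating is that the minimizers of $\mathcal{L}_\mathbf{x}$ coincide with the optimizers of the CMPE task, since the strict inequality on the infeasible region rules out any spurious optima there.
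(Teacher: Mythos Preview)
Your proposal is correct and mirrors the paper's proof: both split the minimization into the feasible and infeasible regions, identify the minima as $p_\mathbf{x}^*$ and $\alpha_\mathbf{x} q_\mathbf{x}^*$ respectively, and use consistency to conclude $\min\{p_\mathbf{x}^*,\alpha_\mathbf{x} q_\mathbf{x}^*\}=p_\mathbf{x}^*$. Your version is slightly more explicit in justifying $q_\mathbf{x}^*>0$ before multiplying through, and your closing corollary about the minimizers coinciding with the CMPE optimizers is a useful addition not stated in the paper's proof.
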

\begin{proof}
From equation \eqref{eqn:loss_1}, we have
\begin{align}
\nonumber
\min_{\mathbf{\hat{y}}}\mathcal{L}_\mathbf{x}(\hat{\mathbf{y}})&= \min  \biggl\{ \min_{\mathbf{\hat{y}}} f_\mathbf{x}(\hat{\mathbf{y}}) \:\:s.t.\:\: g_\mathbf{x}(\hat{\mathbf{y}}) \leq 0  , \\
\nonumber
&\alpha_\mathbf{x} \left (\min_{\mathbf{\hat{y}}}\:\:   f_\mathbf{x}(\hat{\mathbf{y}}) + g_\mathbf{x}(\hat{\mathbf{y}}) \:\:s.t.\:\: g_\mathbf{x}(\hat{\mathbf{y}}) > 0 \right ) \biggr \}\\
\label{eq:pf1}
& = \min \left \{p^*_{\mathbf{x}},\alpha_\mathbf{x} q^*_{\mathbf{x}}\right \}
\end{align}
Because $\mathcal{L}_\mathbf{x}(\hat{\mathbf{y}})$ is consistent, namely, $\alpha_\mathbf{x} > \frac{p^*_{\mathbf{x}}}{q^*_{\mathbf{x}}}$, we have 
\begin{align}
\label{eq:pf2}
    \min \left \{p^*_{\mathbf{x}},\alpha_\mathbf{x} q^*_{\mathbf{x}}\right \} =   p^*_{\mathbf{x}}
\end{align}
From equations \eqref{eq:pf1} and \eqref{eq:pf2}, the proof follows.
\end{proof}

\eat{
\svar{
\textbf{Proof of Proposition 4.1}: 
\begin{align}
\nonumber
    \alpha_\mathbf{x} > \frac{p^*_{\mathbf{x}}}{q^*_{\mathbf{x}}}\\
    \nonumber
    \therefore \:\alpha_\mathbf{x} q^*_{\mathbf{x}} > p^*_{\mathbf{x}}\\
    \nonumber
    \therefore\: \min \left \{p^*_{\mathbf{x}},\alpha_\mathbf{x} q^*_{\mathbf{x}}\right \} =   p^*_{\mathbf{x}}
\end{align}
We assume that $f_{\mathbf{x}}(\mathbf{y})$ and $g_{\mathbf{x}}(\mathbf{y})$ are bounded functions, namely for any assignment $(\mathbf{x},\mathbf{y})$, $l_f\leq f_\mathbf{x}(\mathbf{y}) \leq u_f$ and $l_g\leq g_\mathbf{x}(\mathbf{y}) \leq u_g$ where $-\infty < s < \infty$ and $s \in \{l_f,u_f,l_g,u_g\}$. Also, for simplicity, we assume that $f_{\mathbf{x}}(\mathbf{y})$ is a strictly positive function, namely $l_f >0$.

Thus, based on the assumptions given above, we have
\[\frac{p^*_{\mathbf{x}}}{q^*_{\mathbf{x}}} \leq \frac{u_f}{l_f} \]
and 
\[0 <\alpha_{\mathbf{x}}\leq \frac{u_f}{l_f} \]

The above assumptions will ensure that the gradients are bounded, because  $\alpha_\mathbf{x}$, $f$ and $g$ are bounded.

}

}
%Note that we can always scale $f_\mathbf{x}(\mathbf{y})$ (e.g., by adding a constant) to ensure that 

We assume that $f_{\mathbf{x}}(\mathbf{y})$ and $g_{\mathbf{x}}(\mathbf{y})$ are bounded functions, namely for any assignment $(\mathbf{x},\mathbf{y})$, $l_f\leq f_\mathbf{x}(\mathbf{y}) \leq u_f$ and $l_g\leq g_\mathbf{x}(\mathbf{y}) \leq u_g$ where $-\infty < s < \infty$ and $s \in \{l_f,u_f,l_g,u_g\}$. Also, for simplicity, we assume that $f_{\mathbf{x}}(\mathbf{y})$ is a strictly positive function, namely $l_f >0$.

Thus, based on the assumptions given above, we have
\[\frac{p^*_{\mathbf{x}}}{q^*_{\mathbf{x}}} \leq \frac{u_f}{l_f} 
\:\: \text{and}\:\:
0 <\alpha_{\mathbf{x}}\leq \frac{u_f}{l_f} \]

The above assumptions will ensure that the gradients are bounded, because  $\alpha_\mathbf{x}$, $f$ and $g$ are bounded, and both $p_\mathbf{x}^*$ and $q_\mathbf{x}^*$ are greater than zero. %Also, the assumptions ensure that both

Next, we show how to compute an upper bound on $\alpha_\mathbf{x}$ using $\alpha_\mathbf{x}  > \frac{p^*_\mathbf{x}}{q^*_\mathbf{x}}$, thus ensuring that we have an optimal loss function. The terms in the numerator ($p_\mathbf{x}^*$) and denominator ($q_\mathbf{x}^*$) require solving two instances of the CMPE task. Since solving CMPE exactly is impractical and moreover, since we are interested in self-supervised methods where we do not assume access to such a solver, we propose to lower bound $q_\mathbf{x}^*$ and upper bound $p_\mathbf{x}^*$. 

For a given instance $\mathbf{x}$, a lower bound on $q_\mathbf{x}^*$  can be obtained using the Lagrangian relaxation method described in section \ref{sec:lower_bound} (see Eq. \eqref{eq:lagrange-cmpe_min}) for discrete variables and the Reformulation-Linearization method described in \citet{sherali1992global} for continuous variables. On the other hand, any feasible solution can serve as an upper bound for $p_\mathbf{x}^*$. A simple yet efficient approach is to begin with a loose upper bound by upper bounding the MPE task\eat{(CMPE without constraints)}: $ \max_\mathbf{y}\:f_\mathbf{x}(\mathbf{y})$, using fast algorithms such as mini-bucket elimination \cite{dechter2003mini} or fast linear programming based approximations \cite{ihler2012join,globerson2007fixing} and then keep track of feasible solutions during batch-style gradient descent.

In summary, we proposed a new loss function which uses the quantity $\alpha_\mathbf{x}$. When the neural network predicts a feasible $\hat{\mathbf{y}}$, the loss equals $f$, whereas when it predicts an infeasible $\hat{\mathbf{y}}$, the loss is such that the infeasible solution can quickly be pushed towards a feasible solution (because it uses gradients from $g$). A key advantage of our proposed loss function is that $\alpha_\mathbf{x}$ is not treated as an optimization variable, and a bound on it can be pre-computed for each example $\mathbf{x}$.

		\eat{This contrastive effect of $\alpha_\mathbf{x}$ can be amplified by using weaker bounds on either or both $p^*_\mathbf{x}$ and $q^*_\mathbf{x}$. the worst feasible solution as the upper bound on $p_\mathbf{x}^*$ and the worst infeasible solution as the lower bound. The assignment to the variables in $\mathbf{Y}$ with the lowest weight or an upper bound on it can serve as the worst lower bound on $f_\mathbf{x}(q^*_\mathbf{x})$.}
		
\subsection{Making The Loss Function Smooth and Continuous}
%\sva{Can we explain more about $\gamma$ and $\rho$}

		The loss function defined in Eq. \eqref{eqn:loss_1} is continuous and differential everywhere except at $g_\mathbf{x}(\hat{\mathbf{y}}) = 0$. There is a \textit{jump discontinuity} at $g_\mathbf{x}(\hat{\mathbf{y}}) = 0$ since $\lim_{g_\mathbf{x}(\hat{\mathbf{y}})\rightarrow 0^-} f_\mathbf{x}(\hat{\mathbf{y}}) \neq \lim_{g_\mathbf{x}(\hat{\mathbf{y}})\rightarrow 0^+}\alpha_\mathbf{x} (f_\mathbf{x}(\hat{\mathbf{y}}) + g_\mathbf{x}(\hat{\mathbf{y}}))$. To address this issue, we propose the following continuous approximation %$\mathcal{L}_\mathbf{x}(\mathbf{y})$.
        \begin{align}\label{eqn:loss_cont}
        \widetilde{\mathcal{L}}_\mathbf{x}(\hat{\mathbf{y}}) = & \Bigl((1 - \sigma(\beta g_\mathbf{x}(\hat{\mathbf{y}}))) \cdot [f_\mathbf{x}(\hat{\mathbf{y}})]\Bigr) + \\
        & \Bigl(\sigma (\beta g_\mathbf{x}(\hat{\mathbf{y}})) \nonumber  \cdot [ \alpha_\mathbf{x}(f_\mathbf{x}(\hat{\mathbf{y}}) \nonumber + \max\{0,g_\mathbf{x}(\hat{\mathbf{y}})\})]\Bigr) \nonumber
        \end{align}
		where $\sigma(.)$ is the sigmoid function and $\beta \geq 0$ is a hyper-parameter that controls the steepness of the sigmoid. At a high level, the above continuous approximation uses a sigmoid function to approximate a Heaviside step function. 
  
  %The behavior of  $\Tilde{\mathcal{L}}(\mathbf{y})$ is quite straight-forward: as $g_\mathbf{x}(\hat{\mathbf{y}})$ moves away from 0 towards $+\infty$ or the infeasible region, the neural network will minimize the second loss in equation \ref{eqn:loss_1}. On the other hand, it will minimize the first loss in equation \ref{eqn:loss_1} as $g_\mathbf{x}(\hat{\mathbf{y}})$ moves away from 0 towards $-\infty$ or the feasible region. \eat{$\Tilde{\mathcal{L}}(\mathbf{x})$ is a smooth and continuous approximation of $\mathcal{L}(\mathbf{x})$.}
		
		\eat{
		\subsection{Regularized Loss for Discrete Output}
		Each predicted output $y\in\mathbf{y}$ from the neural network is a continuous quantity in $[0,1]$. One technique to obtain discrete outputs is to round each $y$ to the nearest 0 or 1. Such rounding can lead to poor discrete solutions that may be suboptimal and even infeasible. Instead, we propose to incorporate a regularizer into the loss function that will encourage the neural network model to produce discrete quantities in $\{0, 1\}$.
		The proposed regularizer is the  sum of the entropies of each individual output variable $Y \in \mathbf{Y}$ weighted by the hyperparameter $\gamma \geq 0$.  
		\begin{equation*}\label{eqn:loss_disc}
			\widehat{\mathcal{L}}_\mathbf{x}(\hat{\mathbf{y}}) = \Tilde{\mathcal{L}_{\mathbf{x}}}(\hat{\mathbf{y}}) - \gamma\sum_{\hat{y} \in \hat{\mathbf{y}}} \hat{y} \log(\hat{y}) + (1-\hat{y}) \log(1-\hat{y}) 
		\end{equation*}
  \eat{
	\begin{algorithm}\label{algo:a1}
\caption{Self-Supervised CMPE Solver (\texttt{SS-CMPE})} %($\mathcal{D}, \mathcal{F}_\theta, f, g, \epsilon$)}
\begin{algorithmic}[1]
%\Function{Train}{$NN, X, F, G, eps$}
%\Require{Where \\ $NN$ = Randomly Initialized Auto Encoder, \\
\Require{$\mathcal{F}_\theta$ := A Neural Network with randomly initialized parameters $\theta$, $\mathcal{D}$ := $\{\mathbf{x}_i\}_1^N$, $f,g$ := Log-linear models} %$G$ = Constraint g, \\
%$\epsilon$ := weight to add to the factor,
%$\beta$:=Hyper-parameter}
%\Ensure{$NN$ (Trained NN which outputs a solution that satisfies constraints and optimizes the objective function when a X is given)}
\Ensure{$\mathcal{F}_\theta$ trained on $\mathcal{D}$ to minimize loss in \ref{eqn:loss_1}}
\State Compute $\mathbf{\alpha} = \{\alpha_{\mathbf{x}_i}\}$ for each $\mathbf{x}_i \in \mathcal{D}$ using techniques proposed in \cite{rahman2021novel}
\For{$t = 1$ to $T$}
\Comment{T:=Maximum number of iterations}
\State $\{\hat{\mathbf{y}_i}\}_1^N =  \mathcal{F}_\Theta(\mathcal{D})$
%\State $g_y =  G(y_{pred})$
%\State $mask = (g_y > 0)$ \Comment{$g_y$ is a vector, If %$g_y^{(i)}<0$, $mask^{(i)} = +1$, else $mask^{(i)}=-1$}
\State $\mathcal{L} = 0$ 
\Comment{Loss function}
\For{$i = 1$ to $N$}
\If {$g(\mathbf{x}_i, \hat{\mathbf{y}}_i) < 0$}
\State $\mathcal{L} = \mathcal{L} + f(\mathbf{x}_i, \hat{\mathbf{y}}_i)$ \Comment{Constraint is satisfied} %\Comment{$loss\_satisfied = F(y_{pred})$}
\Else
\State $\mathcal{L} = \mathcal{L} + \alpha_{\mathbf{x}_i} \times (f(\mathbf{x}_i, \hat{\mathbf{y}}_i) + g(\mathbf{x}_i, \hat{\mathbf{y}}_i))$ \Comment{Constraint is violated}
% Add section here
\EndIf
\EndFor
\State Back-propagate the $\mathcal{L}$

\EndFor
%\EndFunction
\end{algorithmic}
\end{algorithm}

\subsection{The Algorithm}	
		Algorithm 1 outlines the steps to learn a self-supervised CMPE solver called \sscmpe. \sscmpe\xspace takes as input a randomly initialized neural network $\mathcal{F}_\Theta$, dataset $\mathcal{D}$ having assignments to the evidence variables and two log-linear models $f$ and $g$ whose parameters have been appropriately transformed for the minimization task (see Preliminaries). The algorithm starts by computing the constants, $\alpha_{\mathbf{x}_i}$ for each $\mathbf{x}_i\in \mathcal{D}$ using methods described in \cite{rahman2021novel}. In each iteration, we compute the predictions $\hat{\mathbf{y}}_i$ for each $\mathbf{x}_i$ made by the deep neural network through a single forward pass. If the prediction is a feasible solution ($g(\mathbf{x}_i, \hat{\mathbf{y}}_i)$), then the loss added for $\mathbf{x}_i$ is $f(\mathbf{x}_i, \hat{\mathbf{y}}_i)$ otherwise it is $\alpha_{\mathbf{x}_i} \times(f(\mathbf{x}_i, \hat{\mathbf{y}}_i)+g(\mathbf{x}_i,\hat{\mathbf{y}}_i))$. Once the loss has been computed for all the samples, it is backpropagated to update the $\Theta$'s. In order to use a continuous loss, we can replace the lines 6-10 with the equation in \ref{eqn:loss_cont} and for the discrete loss with equation \ref{eqn:loss_disc} while providing the appropriate hyper-parameters $\beta$ and $\gamma$ as input. Instead of using the full dataset at each iteration, we used mini-batches in our experiments. At test time, given x and the value q, we adjust the potential function values as described in section 2. \tr{shift the 'q' value to the left} 
}
%\vg{I removed the algorithm. We need a better description. We never compute the loss value. We only compute the gradients.}
  %\subsection{Extension to Multiple Constraints and Adding Penalty}
  %\label{sec:multiple-constraints}
  \paragraph{Extension to Multiple Constraints and Adding Penalty.}
  Our proposed method can be easily extended to handle multiple constraints, and combined with popular approaches such as the penalty method. In particular, when a penalty such as $\max\{0,g_\mathbf{x}(\hat{\mathbf{y}})\}^2$ is added to the expression $ \alpha_{\mathbf{x}}(f_{\mathbf{x}}(\hat{\mathbf{y}})+ g_{\mathbf{x}}(\hat{\mathbf{y}})) \:\:\text{if}\:\: g_\mathbf{x}(\hat{\mathbf{y}}) > 0$ (see Eq. \eqref{eqn:loss_1}), it does not change the set of global optima of the loss function. Details of these extensions are provided in the supplement. \tr{Is this a correct claim @Vibhav Gogate?}
  
  }
  %(specifically using $\sum_{i=1}^{m}\max\left (0,g^i_\mathbf{x}(\hat{\mathbf{y}}) \right )$ where $g^1,\ldots,g^m$ are multilinear expressions instead of $g$
  
  \eat{. Formally, let $g^1,\ldots,g^m$ be the multilinear functions involved in the constraints of the form $g^i_\mathbf{x}(\mathbf{y}) \leq 0$ for $i=1,\ldots,m$. Then, we can use the following loss function instead of the loss function given in Eq. \eqref{eqn:loss_1}. \begin{align}\label{eqn:loss_multiple}
		\mathcal{L}_\mathbf{x}(\hat{\mathbf{y}}) & = \begin{cases}
			& f_{\mathbf{x}}(\hat{\mathbf{y}}) \:\:\text{if}\:\: \sum_{i=1}^{m}\max\left (0,g^i_\mathbf{x}(\hat{\mathbf{y}}) \right ) = 0\\
			& \alpha_{\mathbf{x}}\left (f_{\mathbf{x}}(\hat{\mathbf{y}}) +  \sum_{i=1}^{m}\max\left (0,g^i_\mathbf{x}(\hat{\mathbf{y}}) \right )\right )\:\:\text{if}\:\: \sum_{i=1}^{m}\max\left (0,g^i_\mathbf{x}(\hat{\mathbf{y}}) \right ) > 0\\
		\end{cases}
	\end{align}
 }
% Describe this in more detail in the supplement

\section{EXPERIMENTAL EVALUATION}
\label{sec:experiments}
\begin{figure*}[htbp]  % Notice the asterisk in figure* for spanning two columns
    \centering
    \begin{subfigure}[b]{0.32\textwidth}
        \centering
        \includegraphics[width=\textwidth]{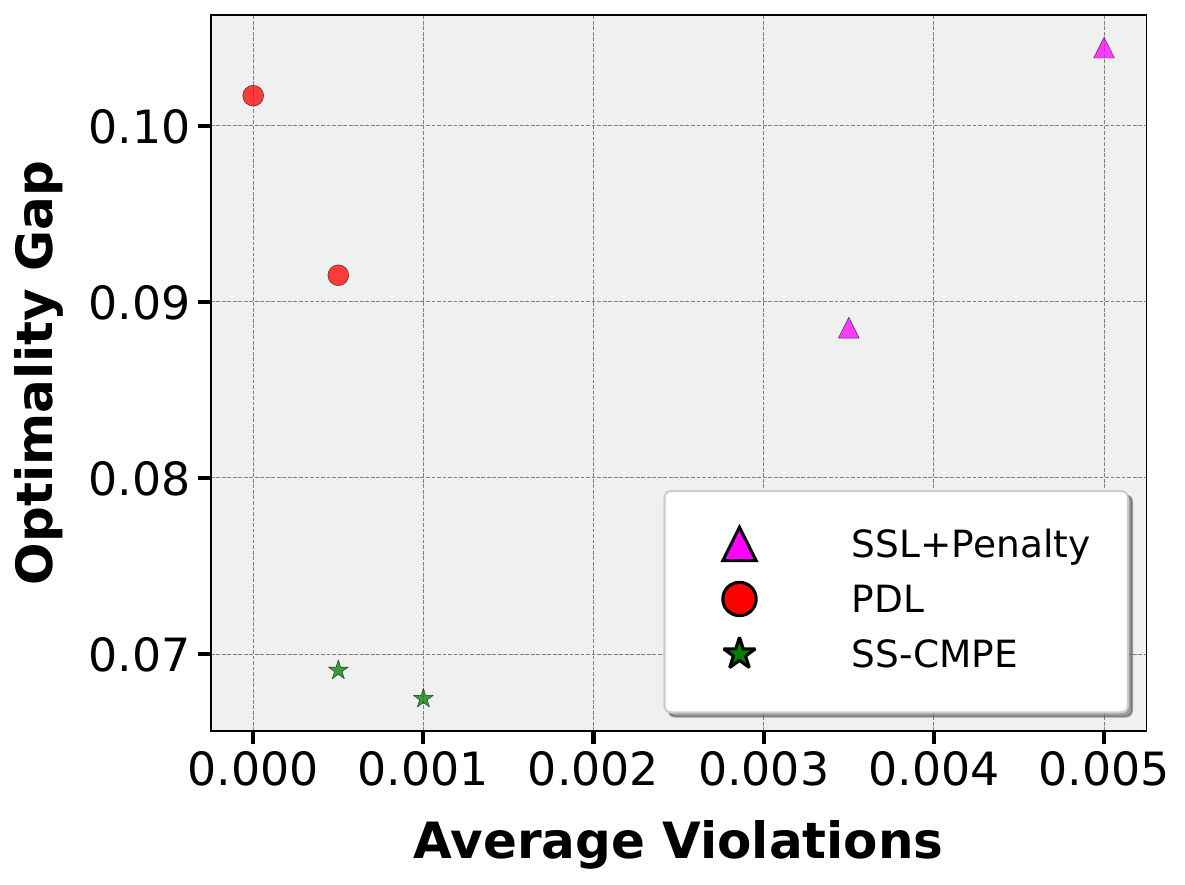}
        \caption{Grids}
    \end{subfigure}
    \hfill
    \begin{subfigure}[b]{0.32\textwidth}
        \centering
        \includegraphics[width=\textwidth]{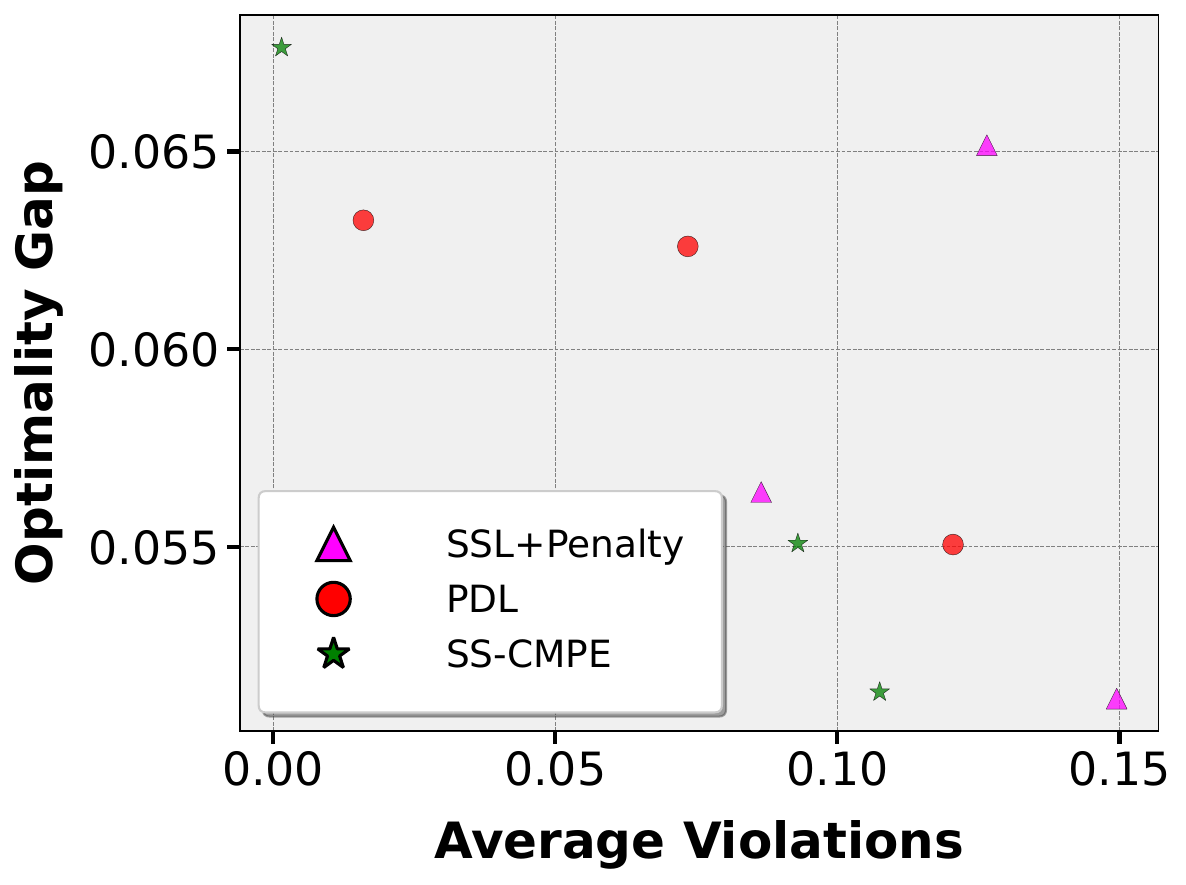}
        \caption{Segmentation}
    \end{subfigure}
    \hfill
    \begin{subfigure}[b]{0.32\textwidth}
        \centering
        \includegraphics[width=\textwidth]{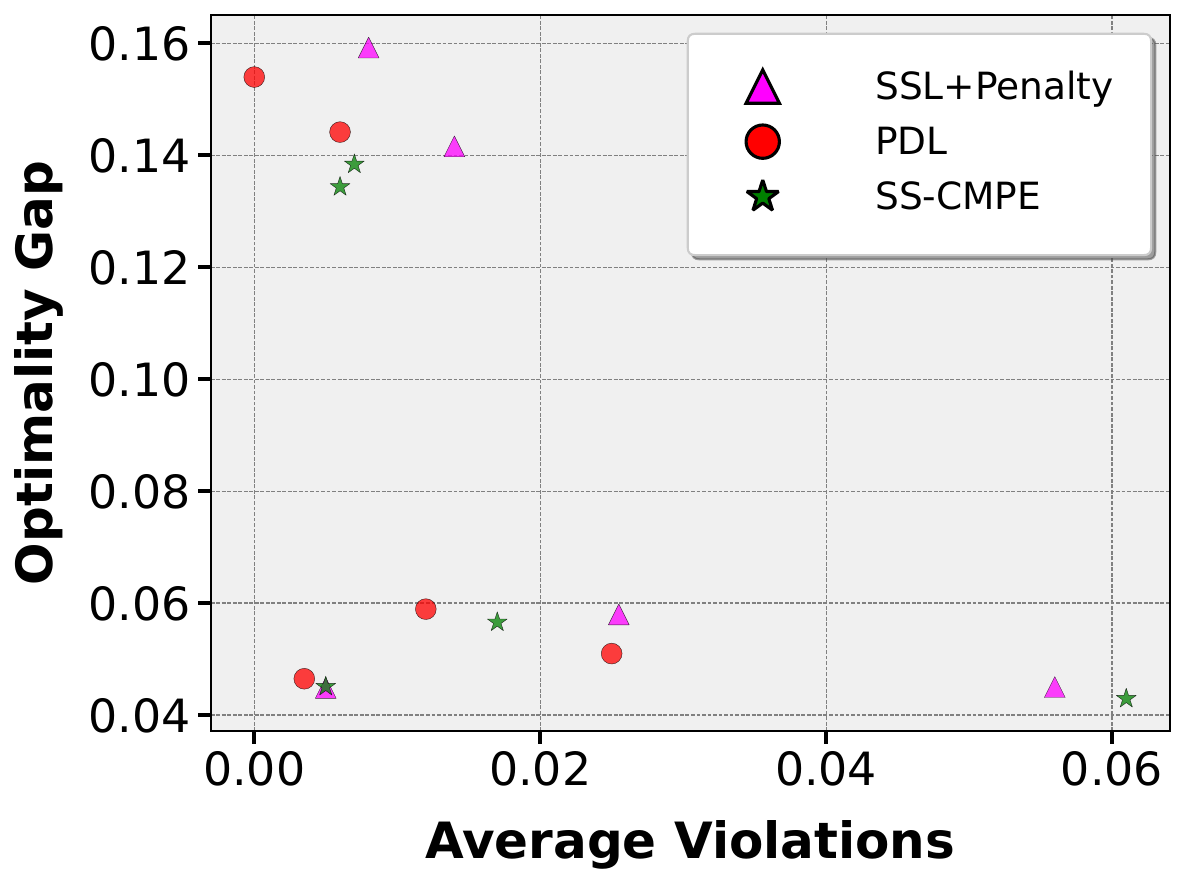}
        \caption{Tractable Models}
    \end{subfigure}
    \caption{\label{fig:combined_gap_vs_violation} Optimality Gap (avg \%) and Average Violations for Self-Supervised methods}
\end{figure*}  % Again, notice

\begin{table*}[t]
\begin{center}
\caption{Average gap and constraint violations over test samples for models from the UAI competition. $\pm$ denotes standard deviation. \textbf{Bold} values indicate the methods with the highest performance. {\ul Underlined} values denote significant violations, particularly those exceeding a threshold of 0.15. For these methods, the gap values are not considered in our analysis.}
\label{tab:uai}
\begin{tabular}{|cc|c|c|c|c|c|}
\hline
\multicolumn{2}{|c|}{Methods}                                                      & Segment12              & Segment14               & Segment15              & Grids17                & Grids18                \\ \hline
\multicolumn{2}{|c|}{ILP Obj.}                                                     & 463.454                & 471.205                 & 514.287                & 2879.469               & 4160.196               \\ \hline
\multicolumn{1}{|c|}{}                                                & Gap        & 0.053 ± 0.043          & 0.053 ± 0.043           & 0.053 ± 0.041          & 0.092 ± 0.070          & 0.082 ± 0.065          \\
\multicolumn{1}{|c|}{\multirow{-2}{*}{\slpen}} & Violations & {\ul 0.238 ± 0.426}          & {\ul 0.248 ± 0.432}           & {\ul 0.153 ± 0.361}          & 0.054 ± 0.226          & 0.053 ± 0.224          \\ \hline
\multicolumn{1}{|c|}{}                                                & Gap        & \textbf{0.051 ± 0.042}          & 0.065 ± 0.048           & \textbf{0.056 ± 0.044} & 0.089 ± 0.055          & 0.104 ± 0.062          \\
\multicolumn{1}{|c|}{\multirow{-2}{*}{\sslpen}}                       & Violations & 0.149 ± 0.357          & 0.127 ± 0.332           & 0.086 ± 0.281          & 0.004 ± 0.059          & 0.005 ± 0.071          \\ \hline
\multicolumn{1}{|c|}{}                                                & Gap        & 0.063 ± 0.050          & 0.055 ± 0.042           & 0.063 ± 0.049          & 0.102 ± 0.059          & 0.092 ± 0.061          \\
\multicolumn{1}{|c|}{\multirow{-2}{*}{\pdl}}                          & Violations & \textbf{0.073 ± 0.261} & 0.120 ± 0.326           & 0.016 ± 0.126          & \textbf{0.000 ± 0.000} & \textbf{0.001 ± 0.022}          \\ \hline
\multicolumn{1}{|c|}{}                         & Gap        & 0.055 ± 0.045          & \textbf{0.051 ± 0.040}  & 0.068 ± 0.051          & \textbf{0.067 ± 0.049} & \textbf{0.069 ± 0.051} \\
\multicolumn{1}{|c|}{\multirow{-2}{*}{\ours}}  & Violations & 0.093 ± 0.291          & \textbf{0.107  ± 0.415} & \textbf{0.002 ± 0.039} & 0.001 ± 0.032          & \textbf{0.001 ± 0.022}         \\ \hline
\end{tabular}
\end{center}
\end{table*}
% Please add the following required packages to your document preamble:
% \usepackage{multirow}
\begin{table*}[t]
\begin{center}
\caption{Average gap and constraint violations over test samples from tractable probabilistic models. $\pm$ denotes standard deviation. \textbf{Bold} values indicate the methods with the highest performance. {\ul Underlined} values denote significant violations, particularly those exceeding a threshold of 0.15. For these methods, the gap values are not considered in our analysis.}
\label{tab:tractable}
\begin{tabular}{|cc|c|c|c|c|c|}
\hline
\multicolumn{2}{|c|}{Methods}                               & AD                     & BBC                    & DNA                    & 20 NewsGroup                  & WebKB1                 \\ \hline
\multicolumn{2}{|c|}{ILP Obj.}                              & 2519.128               & 871.567                & 221.119                & 921.702                & 824.493                \\ \hline
\multicolumn{1}{|c|}{\multirow{2}{*}{\slpen}}  & Gap        & 0.156 ± 0.057          & 0.036 ± 0.027 & 0.143 ± 0.113          & \textbf{0.041 ± 0.031} & \textbf{0.044 ± 0.035} \\
\multicolumn{1}{|c|}{}                         & Violations & 0.135 ± 0.341          & {\ul 0.237 ± 0.425}         & {\ul 0.151 ± 0.358}          & 0.084 ± 0.277          & 0.070 ± 0.254          \\ \hline
\multicolumn{1}{|c|}{\multirow{2}{*}{\sslpen}} & Gap        & 0.159 ± 0.055          & 0.045 ± 0.033          & 0.142 ± 0.116          & 0.045 ± 0.036          & 0.058 ± 0.043          \\
\multicolumn{1}{|c|}{}                         & Violations & 0.008 ± 0.089          & 0.056 ± 0.230          & 0.014 ± 0.118          & 0.005 ± 0.071          & 0.025 ± 0.158          \\ \hline
\multicolumn{1}{|c|}{\multirow{2}{*}{\pdl}}    & Gap        & 0.154 ± 0.055          & 0.051 ± 0.036          & 0.144 ± 0.117          & 0.046 ± 0.035          & 0.059 ± 0.043          \\
\multicolumn{1}{|c|}{}                         & Violations & \textbf{0.000 ± 0.000} & \textbf{0.025 ± 0.156} & \textbf{0.006 ± 0.077} & \textbf{0.004 ± 0.059} & \textbf{0.012 ± 0.109} \\ \hline
\multicolumn{1}{|c|}{\multirow{2}{*}{\ours}}   & Gap        & \textbf{0.134 ± 0.054} & \textbf{0.043 ± 0.033}          & \textbf{0.138 ± 0.112} & 0.045 ± 0.035          & 0.057 ± 0.043          \\
\multicolumn{1}{|c|}{}                         & Violations & 0.006 ± 0.077          & 0.056 ± 0.230          & 0.007 ± 0.083          & 0.005 ± 0.071          & 0.016 ± 0.126          \\ \hline
\end{tabular}
\end{center}

\end{table*}
In this section, we thoroughly evaluate the effectiveness of our proposed neural networks based solvers for CMPE. We evaluate the competing methods on several test problems using three criteria: optimality gap (relative difference between the optimal solution and the one found by the method), constraint violations (percentage of time the method outputs an infeasible solution), and training and inference times. 

\eat{In this study, we conducted a comprehensive comparison of the proposed method with various state-of-the-art approaches that employ neural networks for optimization tasks. The evaluation encompassed three types of \cmpe problems, each representing different levels of complexity. The first type involved tractable probabilistic circuits, specifically cutset networks learned on widely recognized benchmarks utilized by the tractable models community.  The second type comprised intractable models with high treewidth, sourced from the UAI (Uncertainty in Artificial Intelligence) competitions. These models pose significant challenges due to their complex and intractable nature. Lastly, we considered models designed for performing adversarial attacks on classifiers. By conducting experiments across these three types of CMPE problems, we obtained a comprehensive understanding of the proposed method's performance in various contexts, highlighting its efficacy and versatility in tackling optimization tasks across different levels of complexity.}

\eat{In our comparative analysis, we evaluated the proposed \sscmpe method alongside seven baseline methods. Here, we present the results for four of these baselines. The first baseline is \ilp, where the solutions are obtained from SCIP \cite{achterberg2009scip} and Gurobi \cite{gurobi2021gurobi}, and these solutions serve as the optimal ground truth for the supervised methods. The second baseline is \slpen, which extends the supervised approaches by incorporating penalty coefficients. Next, we assess the performance of the self-supervised methods. We begin with \ssl (self-supervised penalty), followed by \pdl (primal-dual self-supervised). Additionally, we compare all of these methods with the method described in Section \ref{our-method}. For more detailed results of the other supervised methods, we have included them in the supplementary material accompanying this paper.}

\subsection{The Loss Functions: Competing Methods}
We trained several neural networks to minimize both supervised and self-supervised loss functions. We evaluated both MSE and MAE supervised losses with and without penalty coefficients (see section 3). In the main paper, we show results on the best performing supervised loss, which is MSE with penalty, denoted by \slpen (results for other supervised loss functions are provided in the supplement). 

For self-supervised loss, we experimented with the following three approaches: (1) penalty-based method, (2) ALM, which uses a primal-dual loss (PDL), and the approach described in section \ref{sec:our-method}. We will refer to these three schemes as \sslpen, \pdl, and \sscmpe, respectively. We used the experimental setup described by \citet{park2022self} for tuning the hyperparameters of \pdl and \sslpen.  For the \sscmpe method, we employed a grid search approach to determine the optimal values for $\beta$. The range of values considered for $\beta$ was $\{0.1, 1.0, 2.0, 5.0, 10.0, 20.0\}$.% and $\{0.001, 0.01, 0.1, 1.0\}$ respectively. 

%proposed by \citet{park2022self}: (1) the self-supervised loss with penalty coefficients which we denote by \sslpen, and (2) the primal-dual loss which we denote by \pdl. We used the experimental setup described in \citet{park2022self} for tuning the hyperparameters of \pdl and \sslpen.  For the \sscmpe method, we employed a grid search approach to determine the optimal values for the hyper-parameters $\beta$ and $\gamma$. The range of values considered for $\beta$ was $\{0.1, 1.0, 2.0, 5.0, 10.0, 20.0\}$.% and $\{0.001, 0.01, 0.1, 1.0\}$ respectively. 
%For self-supervised loss, we used the following two approaches proposed by \citet{park2022self}: (1) the self-supervised loss with penalty coefficients which we denote by \sslpen, and (2) the primal-dual loss which we denote by \pdl. We used the experimental setup described in \citet{park2022self} for tuning the hyperparameters of \pdl and \sslpen.  For the \sscmpe method, we employed a grid search approach to determine the optimal values for the hyper-parameters $\beta$ and $\gamma$. The range of values considered for $\beta$ was $\{0.1, 1.0, 2.0, 5.0, 10.0, 20.0\}$.% and $\{0.001, 0.01, 0.1, 1.0\}$ respectively. 
Note that all methods used the same neural network architecture (described in the supplement) except \pdl, which uses two neural networks.
We obtained the ground-truth for the supervised training by solving the original \ilp problem using SCIP \cite{achterberg2009scip} and Gurobi \cite{gurobi2021gurobi}. We report the objective values of the ILP solutions in each table (see Tables \ref{tab:uai}, \ref{tab:tractable}, and \ref{tab:adv-table}). 

\subsection{Datasets and Benchmarks}
We evaluate the competing algorithms (\slpen, \sslpen, \pdl, and \sscmpe) on a number of log-linear Markov networks ranging from simple models (low treewidth) to high treewidth models. The simple models comprise of learned tractable probabilistic circuits \cite{choi2020probabilistic} without latent variables, specifically, cutset networks \cite{rahman2014cutset} from benchmark datasets used in the literature. The complex, high treewidth models are sourced from past UAI inference competitions \cite{elidan_2010_2010}. Finally, we evaluated all methods on the task of generating adversarial examples for neural network classifiers. 

%By conducting experiments across these three types of CMPE problems, we obtained a comprehensive understanding of the proposed method's performance in various contexts, highlighting its efficacy and versatility in tackling optimization tasks across different levels of complexity.

\eat{\textbf{Results on Networks from the UAI Competition and MPE Tractable Models}}
\subsection{High Tree-Width Markov Networks and Tractable Probabilistic Circuits}
Our initial series of experiments focus on high treewidth Grids and Image Segmentation Markov networks from the UAI inference competitions \cite{uai14competition,uai16competition}. In this investigation, we generated CMPE problems by utilizing \eat{utilized} the model employed in the UAI competitions, denoted as $M_1$. Subsequently, $M_2$ was created by adjusting the parameters of $M_1$ while incorporating a noise parameter $\epsilon$ drawn from a normal distribution with mean 0 and variance $\sigma^2 = 0.1$. To select $q$, we randomly generated 100 samples, sorted them based on their weight, and then selected the weight of the 10th, 30th, 60th, 80th, and 90th sample as a value for $q$. We assess the impact of changing the value of $q$ in the supplement. Experiments in the main body of the paper use $q$ equal to the weight of the 80th random sample. For each network, we randomly chose 60\% of variables as evidence ($\mathbf{X}$) and the remaining as query variables ($\mathbf{Y})$. For both the UAI models and tractable probabilistic circuits, we generated 10K samples from $M_1$, and used 9K for training and 1K for testing. We used 5-fold cross validation for selecting the hyperparameters.

%Furthermore, the impact of $q$ on the experiment was assessed by selecting values of $q$ from the percentile ranges of the 10th, 30th, 60th, 80th, and 90th, respectively. The results for these experiments are provided in the supplementary section. In the following table Segmentation\_12, Segmentation\_14, Segmentation\_15, Grids\_17 and Grids\_18 are from the UAI competitions \cite{elidan_2010_2010}. 

The results for the UAI datasets are shown in Table \ref{tab:uai}. We see that for the majority of the datasets, our method produces solutions with superior gap values compared to the other methods. Even in situations where our methods do not achieve better gap values, they exhibit fewer violations. This demonstrates the effectiveness and robustness of our methods in generating solutions that strike a balance between optimizing the objective function and maintaining constraint adherence. For certain datasets, our methods exhibit significantly lower constraint violations, even up to 10 times less than supervised methods. 

% \begin{figure}
%   \centering
%   \includegraphics[width=0.8\textwidth]{}
%   \caption{\sva{Need to add caption}}
%   \label{fig:gap_vs_violation}
% \end{figure}

\eat{\begin{figure}[htbp]
    \centering
    \begin{subfigure}[b]{0.31\textwidth}
        \centering
        \includegraphics[width=\textwidth]{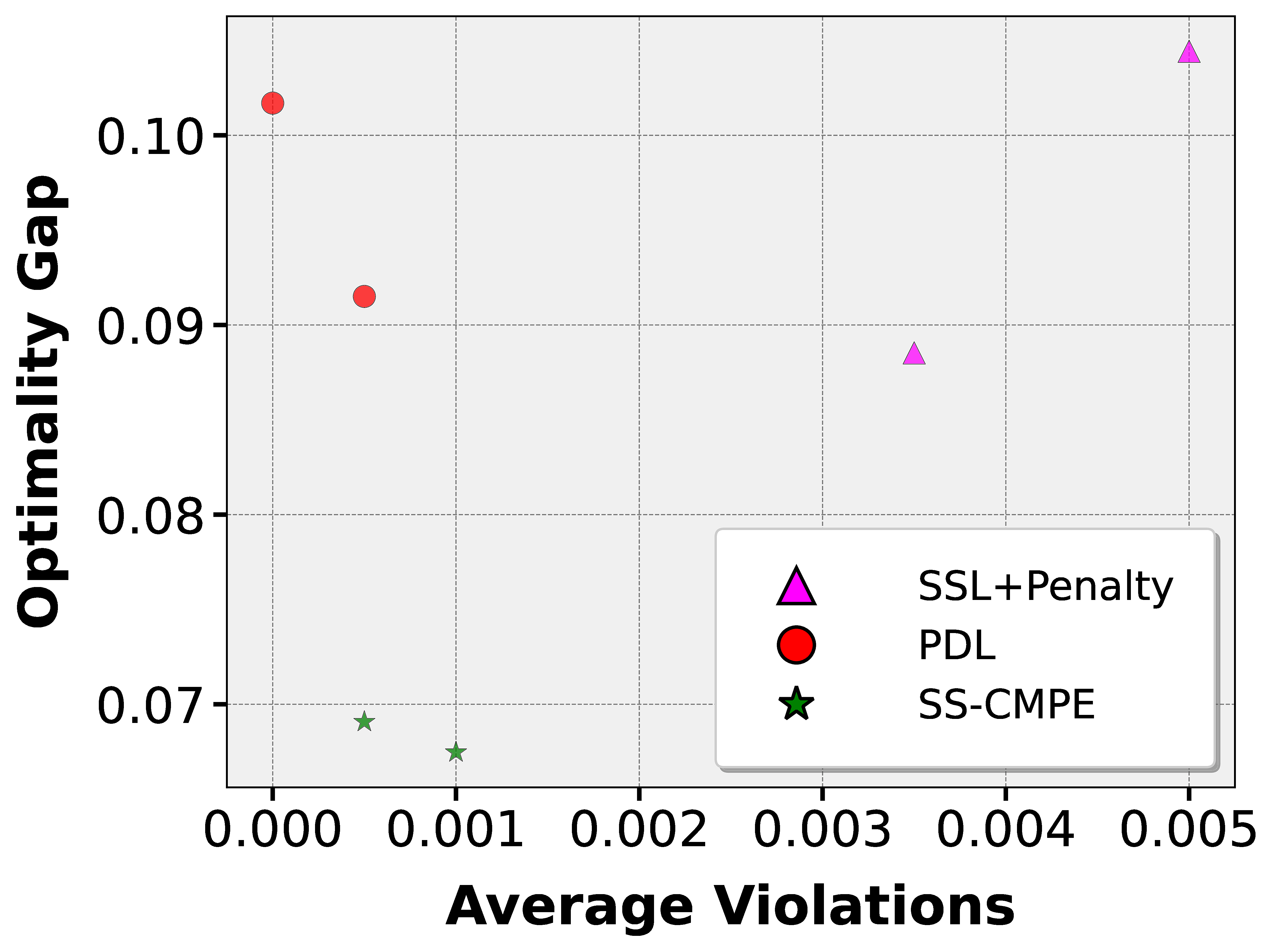}
        \caption{Optimality Gap (avg \%) and  Average Violations for Grids UAI networks}
    \end{subfigure}
        \hfill
    \begin{subfigure}[b]{0.31\textwidth}
        \centering
        \includegraphics[width=\textwidth]{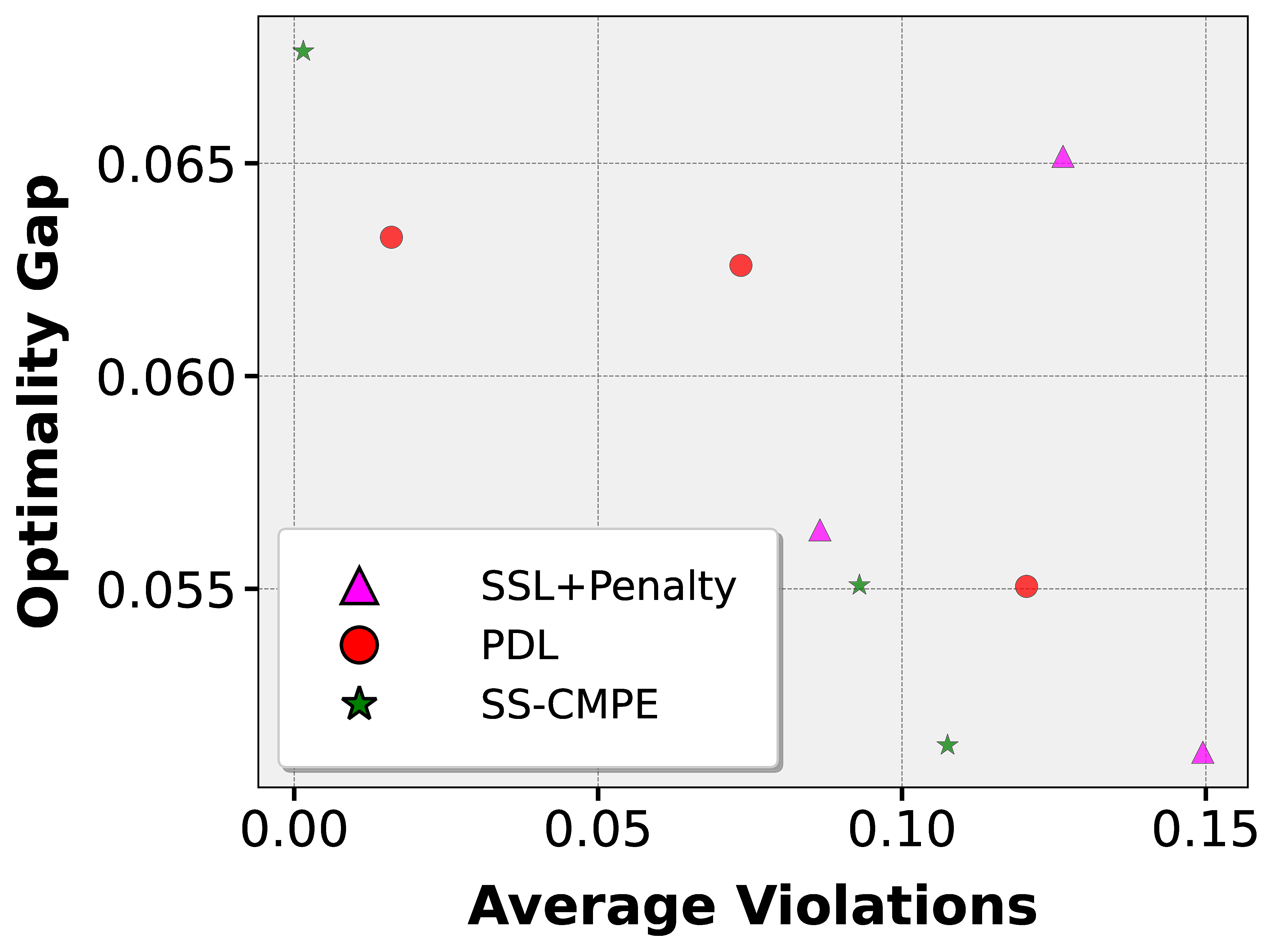}
        \caption{Optimality Gap (avg \%) and  Average Violations for Segmentation UAI networks}
    \end{subfigure}
    \hfill
    \begin{subfigure}[b]{0.31\textwidth}
        \centering
        \includegraphics[width=\textwidth]{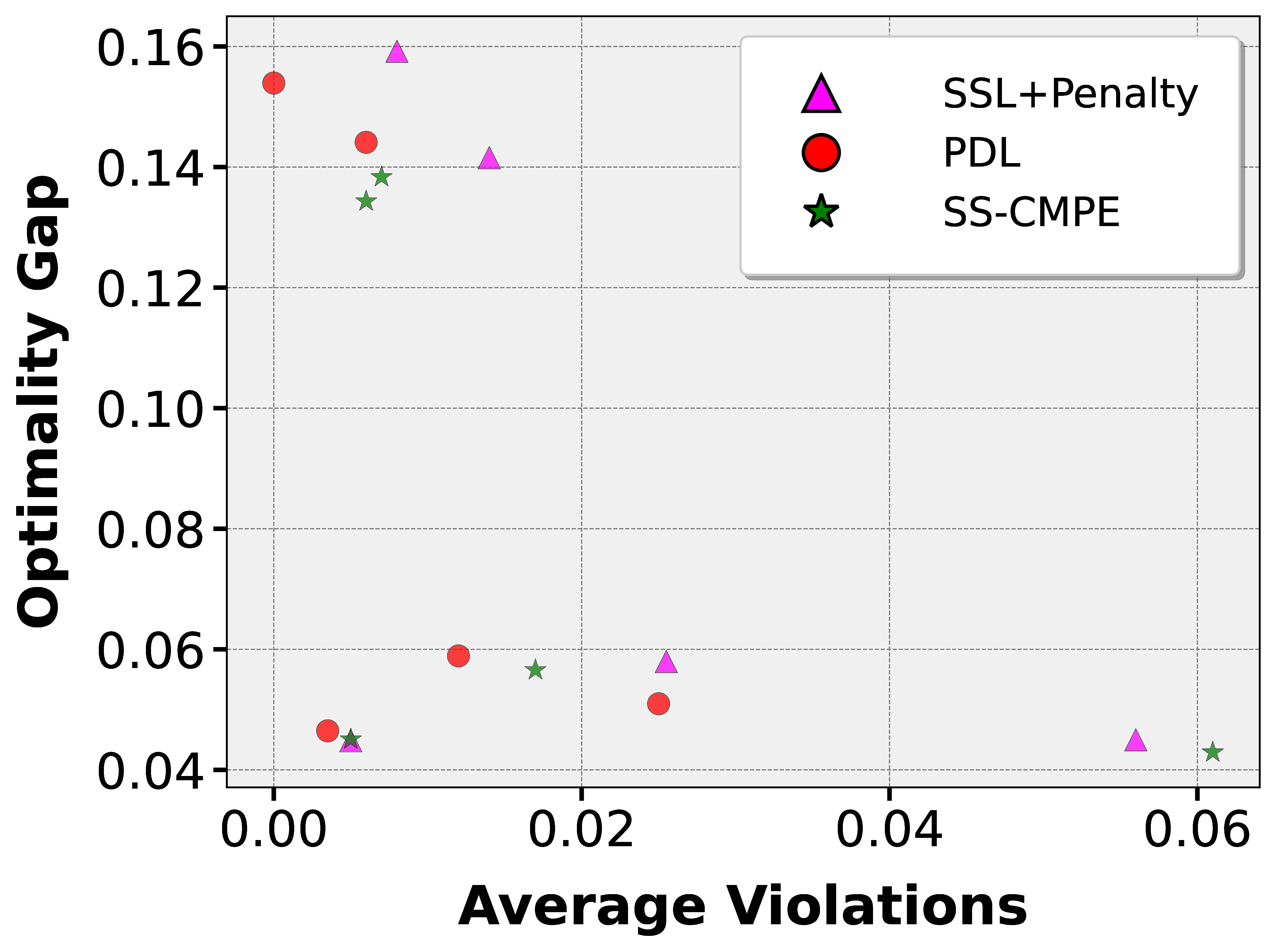}
        \caption{Optimality Gap (avg \%) and  Average Violations for Tractable Models}
    \end{subfigure}
\end{figure}}

\begin{figure*}[!t]
\setlength{\tabcolsep}{1.25pt}
\centering

\begin{tabular}{llllllll}
\multicolumn{1}{c}{\ilp} & \multicolumn{1}{c}{MSE } & \multicolumn{1}{c}{MSE$_{pen}$ } & \multicolumn{1}{c}{MAE} & \multicolumn{1}{c}{MAE$_{pen}$} & \multicolumn{1}{c}{\sslpen} & \multicolumn{1}{c}{\pdl} & \multicolumn{1}{c}{\sscmpe } \\

% ILP                                                 & \slpen                                                          & \sslpen                                                         & \pdl                                                 & \sscmpe                                            \\
\includegraphics[scale=0.80]{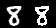} & \includegraphics[scale=0.80]{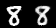} & \includegraphics[scale=0.80]{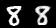} & \includegraphics[scale=0.80]{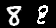} & \includegraphics[scale=0.80]{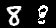}   & \includegraphics[scale=0.80]{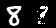}   & \includegraphics[scale=0.80]{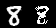}   & \includegraphics[scale=0.80]{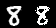}   \\
\includegraphics[scale=0.80]{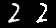} & \includegraphics[scale=0.80]{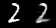} &  \includegraphics[scale=0.80]{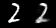} & \includegraphics[scale=0.80]{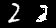} &  \includegraphics[scale=0.80]{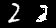} & \includegraphics[scale=0.80]{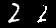} & \includegraphics[scale=0.80]{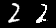} & \includegraphics[scale=0.80]{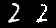} \\
\includegraphics[scale=0.80]{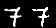} & \includegraphics[scale=0.80]{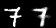}  & \includegraphics[scale=0.80]{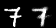} & \includegraphics[scale=0.80]{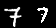}  & \includegraphics[scale=0.80]{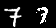} & \includegraphics[scale=0.80]{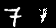} & \includegraphics[scale=0.80]{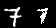} & \includegraphics[scale=0.80]{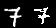} \\
\includegraphics[scale=0.80]{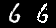} & \includegraphics[scale=0.80]{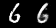} & \includegraphics[scale=0.80]{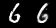} & \includegraphics[scale=0.80]{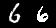} & \includegraphics[scale=0.80]{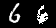} & \includegraphics[scale=0.80]{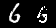} & \includegraphics[scale=0.80]{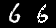} & \includegraphics[scale=0.80]{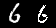} \\
\includegraphics[scale=0.80]{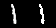} & \includegraphics[scale=0.80]{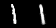} & \includegraphics[scale=0.80]{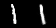} & \includegraphics[scale=0.80]{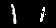} & \includegraphics[scale=0.80]{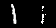} & \includegraphics[scale=0.80]{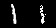} & \includegraphics[scale=0.80]{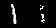} & \includegraphics[scale=0.80]{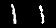}
\end{tabular}
\caption{\label{tab:mnist_all}Qualitative results on the adversarially generated MNIST digits. Each row represents an original image followed by a corresponding image generated adversarially by 8 different methods: \ilp, \mse, \msepen, \mae, \maepen, \sslpen, \pdl, and \ours.}
\end{figure*}

In the next phase of our study, we employed MPE (Most Probable Explanation) tractable models, which were learned on five high-dimensional datasets (see \citet{Lowd2010Dec_1} for details in the datasets): DNA, NewsGroup (c20ng), WebKB1 (cwebkb), AD, and BBC. These learned models served as $\mathcal{M}_1$. We then applied Gaussian noise as described earlier to generate $\mathcal{M}_2$ based on $\mathcal{M}_1$. A similar trend can be observed for tractable probabilistic models in Table \ref{tab:tractable}, where our method consistently outperforms the other self-supervised methods across all datasets. Not only does our approach exhibit superior performance in terms of gap values, but it also demonstrates \eat{either }comparable \eat{ or even fewer} constraint violations.  When comparing with the supervised method, our proposed algorithm exhibits significantly fewer constraint violations while maintaining a better or comparable gap value. This emphasizes the strength of our method in effectively balancing the optimization objectives and constraint adherence, thereby offering improved overall performance compared to both the self-supervised and supervised approaches in the context of tractable probabilistic models. In Figure \ref{fig:combined_gap_vs_violation}, we present the average optimality gap and average violations for different dataset groups. It is important to note that results closer to the origin indicate better performance.

%DNA \citep{VanHaaren2012_2, Ucla-Starai2023May}, NewsGroup (c20ng) \citep{Lowd2010Dec_1, Ucla-Starai2023May}, WebKB1 (cwebkb) \citep{Lowd2010Dec_1, Ucla-Starai2023May}, AD \citep{VanHaaren2012_2, Ucla-Starai2023May}, and BBC \citep{VanHaaren2012_2, Ucla-Starai2023May}. These learned models served as $M_1$. We then applied Gaussian noise as described earlier to generate $M_2$ based on $M_1$. A similar trend can be observed for tractable probabilistic models in Table \ref{tab:tractable}, where our method consistently outperforms the other self-supervised methods across all datasets. Not only does our approach exhibit superior performance in terms of gap values, but it also demonstrates \eat{either }comparable \eat{ or even fewer} constraint violations.  When comparing with the supervised method, our proposed algorithm exhibits significantly fewer constraint violations while maintaining a better or comparable gap value. This emphasizes the strength of our method in effectively balancing the optimization objectives and constraint adherence, thereby offering improved overall performance compared to both the self-supervised and supervised approaches in the context of tractable probabilistic models. In Figure \ref{fig:combined_gap_vs_violation}, we present the average optimality gap and average violations for different dataset groups. It is important to note that results closer to the origin indicate better performance.
\subsection{Adversarial Modification on the MNIST Dataset}
\begin{table}[!hbt]
\centering
\small
\caption{Performance comparison of supervised and self-supervised methods. The table presents the average objective value, gap, and constraint violations over the test examples, along with the training and inference time required for each method for adversarial example generation. \textbf{Bold} values signify the methods that achieved the best scores.}
\resizebox{0.6\linewidth}{!}{%
\begin{tabular}{|l|l|l|l|ll|}
\hline
\multirow{2}{*}{Methods} & \multirow{2}{*}{Obj. Value} & \multirow{2}{*}{Gap} & \multirow{2}{*}{Violation} & \multicolumn{2}{c|}{Time in seconds} \\ \cline{5-6} 
 &  &  &  & \multicolumn{1}{l|}{Train} & Inf. \\ \hline
\ilp & 30.794 & 0.000 & 0.000 & \multicolumn{1}{l|}{NA} & 5.730 \\ \hline
% \sul & 63.705 & 1.071 & 0.307 & \multicolumn{1}{l|}{57534.464} & 0.003 \\ \hline
\slpen & 63.670 & 1.069 & 0.071 & \multicolumn{1}{l|}{57534.4} & 0.003 \\ \hline
\sslpen & 76.316 & 1.480 & 0.052 & \multicolumn{1}{l|}{\textbf{469.540}} & 0.003 \\ \hline
\pdl & 66.400 & 1.158 & 0.055 & \multicolumn{1}{l|}{839.025} & 0.003 \\ \hline
\ours & \textbf{62.400} & \textbf{1.028} & \textbf{0.021} & \multicolumn{1}{l|}{520.149} & 0.003 \\ \hline
\end{tabular}%
}
\label{tab:adv-table}
\end{table}
We also evaluated our approach on the task of adversarial example generation for discriminative classifiers, specifically neural networks.  Adversarial examples play a crucial role in assessing the robustness of models and facilitating the training of more resilient models.  The task of Adversarial Example Generation involves producing new images by making minimal modifications to input images that are mis-classified by the model. 
\eat{
\citet{rahman2021novel} showed that this problem could be reduced to \cmpe. Formally, let $\mathcal{G}$ be a differentiable, continuous function defined over a set of inputs $\mathcal{X}$. Given an assignment $\mathcal{X} = x$, we introduce the decision variable $\mathcal{D}$, which takes the value $d$ when $\mathcal{G} > 0$ and $\Bar{d}$ otherwise. In the context of adversarial attacks, our objective is to make minimal alterations to the pixel values of $x$ to generate a new image $x'$ such that the decision is flipped. We represent the distance between
the two images $x$ and $x'$ using a log-linear model

that causes a change in the decision outcome. We represent the distance between
the two images $x$ and $x'$ using a log-linear model $\mathcal{F} = \{f_1, \dots, f_n\}$. Each weight in the log-linear model is associated with a feature over $x$ and $x'$ such that $x \in \mathcal{X}$. Notably, each pair of images has four associated features: $x \cap x'$, $x \cap \Bar{x'}$, $\Bar{x} \cap x'$, and $\Bar{x} \cap \Bar{x'}$. Given an assignment $\mathcal{X} = x$ satisfying $\mathcal{G} > 0$, as well as the sets of functions $\mathcal{F}$ and $\mathcal{G}$, the task of adversarial example generation can be formulated as the following CMPE problem: maximize $\sum_{f \in \mathcal{F}} f\left(x^{\prime}\right)$ s.t. $\mathcal{G}\left(x^{\prime}\right) \leq 0$. 
}
\citet{rahman2021novel} showed that this problem can be reduced to \cmpe. Formally, let $\mathcal{G}$ be a differentiable, continuous function defined over a set of inputs $\mathcal{X}$. Given an assignment $\mathcal{X} = x$, we introduce the decision variable $\mathcal{D}$, which takes the value $d$ when $\mathcal{G} > 0$ and $\Bar{d}$ otherwise. In the context of adversarial attacks, given an image $x$, our objective is to generate a new image $x'$ such that the distance between $x$ and $x'$ is minimized and the decision is flipped (namely $\mathcal{G}<0$). We used a log-linear model $\mathcal{F}$ to represent the sum absolute distance between the pixels. Then the task of adversarial example generation can be formulated as the following CMPE problem: $\text{maximize} \sum_{f \in \mathcal{F}} f\left(x^{\prime}|x\right)$ s.t. $\mathcal{G}\left(x^{\prime}|x\right) \leq 0$.

We evaluated the algorithms using the MNIST handwritten digit dataset \citep{lecun-mnisthandwrittendigit-2010}. We trained a multi-layered neural network having >95\% test accuracy and used it as our $\mathcal{G}$ function. To generate adversarial examples corresponding to a given test example, we trained an autoencoder $A: X \longrightarrow X'$ using four loss functions corresponding to \slpen, \sslpen, \pdl and \sscmpe. We used the default train-test split (10K examples for testing and 60K for training).

Table \ref{tab:adv-table} shows quantitative results comparing our proposed \sscmpe method with other competing methods. We can clearly see that \sscmpe is superior to competing self-supervised (\sslpen and \pdl) and supervised methods (\slpen) in terms of both constraint violations and optimality gap. The second best method in terms of optimality gap is \slpen. However, its constraint violations are much higher, and its training time is significantly larger because it needs access to labeled data, which in turn requires using computationally expensive ILP solvers. The training time of \sscmpe is much smaller than \pdl (because the latter uses two networks) and is only slightly larger than \sslpen. 

%\subsection{ADVERSARIAL GENERATION OF MNIST DIGITS: A QUALITATIVE STUDY}
Figure \ref{tab:mnist_all} shows qualitative results on adversarial modification to the MNIST digits $\{1,2,6,7,8\}$ by all the eight methods. The CMPE task minimally changes an input image such that the corresponding class is flipped according to a discriminative classifier. \mse and our proposed method \sscmpe are very competitive and were able to generate visually indistinguishable, high-quality modifications whereas the other methods struggled to do so.

\textbf{Summary:} Our experiments show that \sscmpe consistently outperforms competing self-supervised methods, \pdl and \sslpen, in terms of optimality gap and is comparable to \pdl in terms of constraint violations. The training time of \sscmpe is smaller than \pdl (by half as much) and is slightly larger than \sslpen. However, it is considerably better than \sslpen in terms of constraint violations. \sscmpe also employs fewer hyperparameters as compared to \pdl.

\eat{\section{RELATED WORK}
\label{section:related_work}
\tr{
\begin{enumerate}
    \item our previous CMPE papers 
    \item Self-supervised paper and citations therein. 
    \item Neural set paper and citations therein.
\end{enumerate}} 
}
\section{CONCLUSION AND FUTURE WORK}
\label{sec:conclusion}
In this paper, we proposed a new self-supervised learning algorithm for solving the constrained most probable explanation task which at a high level is the task of optimizing a multilinear polynomial subject to a multilinear constraint. Our main contribution is a new loss function for self-supervised learning which is derived from first principles, has the same set of global optima as the CMPE task, and operates exclusively on the primal variables. It also uses only one hyperparameter in the continuous case and two hyperparameters in the discrete case.  Experimentally, we evaluated our new self-supervised method with penalty-based and Lagrangian duality-based methods proposed in literature and found that our method is often superior in terms of optimality gap and training time (also requires less hyperparameter tuning) to the Lagrangian duality-based methods and also superior in terms of optimality gap and the number of constraint violations to the penalty-based methods.  

Our proposed method has several limitations and we will address them in future work. First, it requires a bound for $\alpha_{\mathbf{x}}$. This bound is easy to obtain for graphical models/multilinear objectives but may not be straightforward to obtain for arbitrary non-convex functions. Second, the ideal objective in the infeasible region should be proportional to $g_\mathbf{x}(\mathbf{y})$ but our method uses $\alpha_{\mathbf{x}} (f_\mathbf{x}(\mathbf{y})+ g_\mathbf{x}(\mathbf{y}))$. %Finally, if the feasible region is small or ``hard to find'', our method is likely to exhibit slow convergence. 

\section*{ACKNOWLEDGMENTS}
This work was supported in part by the DARPA Perceptually-Enabled Task Guidance (PTG) Program under contract number HR00112220005, by the DARPA Assured Neuro Symbolic Learning and Reasoning (ANSR) under contract number HR001122S0039 and by the National Science Foundation grant IIS-1652835.

\bibliography{main}

\clearpage
\renewcommand\thesection{\Alph{section}}
\renewcommand\thesubsection{\thesection.\Alph{subsection}}
\setcounter{section}{0}

\renewcommand{\algorithmicrequire}{\textbf{Input:}}
\renewcommand{\algorithmicensure}{\textbf{Output:}}
\renewcommand{\thefigure}{SF\arabic{figure}} % Change figure numbering to S1, S2, etc.
\renewcommand{\thetable}{ST\arabic{table}} % Change table numbering to S1, S2, etc.
%\newtheorem{theorem}{Theorem}
% \aistatsaddress{ Institution 1 \And  Institution 2 \And Institution 3 }
\eat{
\section{ASSUMPTIONS AND PROOF OF PROPOSITION 1}
\paragraph{Assumptions.}

We make the following assumptions in order to ensure that our proposed algorithm is \textit{well-behaved}.

We assume that $f$ and $g$ are bounded functions, namely for any assignment $(\mathbf{x},\mathbf{y})$, $l_f\leq f(\mathbf{x},\mathbf{y}) \leq u_f$ and $l_g\leq g(\mathbf{x},\mathbf{y}) \leq u_g$ where $-\infty < s < \infty$ and $s \in \{l_f,u_f,l_g,u_g\}$. Also, for simplicity, we assume that $f$ is a strictly positive function, namely $l_f >0$. Note that we can always ensure that $f$ satisfies this criteria, because we can always add a constant to each log-potential of a Markov network and its distribution remains unchanged. 

Thus, based on the assumptions given above, we have
\[\frac{p^*_{\mathbf{x}}}{q^*_{\mathbf{x}}} \leq \frac{u_f}{l_f} \]
and 
\[0 <\alpha_{\mathbf{x}}\leq \frac{u_f}{l_f} \]

The above assumptions will ensure that the gradients are bounded, because  $\alpha_\mathbf{x}$, $f$ and $g$ are bounded.

\textbf{Proof of Proposition 4.1}: 
\begin{align}
\nonumber
    \alpha_\mathbf{x} > \frac{p^*_{\mathbf{x}}}{q^*_{\mathbf{x}}}\\
    \nonumber
    \therefore \:\alpha_\mathbf{x} q^*_{\mathbf{x}} > p^*_{\mathbf{x}}\\
    \nonumber
    \therefore\: \min \left \{p^*_{\mathbf{x}},\alpha_\mathbf{x} q^*_{\mathbf{x}}\right \} =   p^*_{\mathbf{x}}
\end{align}
}
\section{DERIVING UPPER BOUNDS FOR $p^*_{\mathbf{x}}$ AND LOWER BOUNDS FOR $q^*_{\mathbf{x}}$}
\label{sec:supp_upper_lower_bound}
To compute the value of $ \alpha_{\mathbf{x}} $, we utilize the following equation:
\begin{equation}
       \alpha_{\mathbf{x}} > \frac{p^*_{\mathbf{x}}}{q^*_{\mathbf{x}}}  
\end{equation}

We choose to set a lower bound for the denominator $ q_{\mathbf{x}}^* $ and an upper bound for the numerator $ p_{\mathbf{x}}^* $ due to the computational challenges of finding exact solutions for the CMPE task.

To estimate an upper bound for the optimal value ($ p_{\mathbf{x}}^* $) of the constrained optimization problem

\begin{equation}    
\underset{\hat{\mathbf{y}}}{\text{min}} \: f_{\mathbf{x}}(\hat{\mathbf{y}}) \: \text{s.t.} \: g_{\mathbf{x}}(\hat{\mathbf{y}}) \leq 0,
\end{equation}

we begin by seeking a loose upper bound through solving the unconstrained task $ \max_\mathbf{y}\:f_\mathbf{x}(\hat{\mathbf{y}}) $ by utilizing  mini-bucket elimination \cite{dechter2003mini}. Subsequently, feasible solutions are tracked during batch-style gradient descent to refine the initial upper bound (note that the weight of any feasible solution is an upper bound on $p^*_\mathbf{x}$). For each iteration, the feasible solution with the optimal objective value for each example is stored and subsequently utilized.

To derive a lower bound for $ q_{\mathbf{x}}^* $, which represents the optimal solution for the following constrained optimization problem,

\begin{equation}
\min_{\hat{\mathbf{y}}} f_\mathbf{x}(\hat{\mathbf{y}}) + g_\mathbf{x}(\hat{\mathbf{y}}) \quad \text{s.t.} \quad g_\mathbf{x}(\hat{\mathbf{y}}) > 0,    
\end{equation}

we can employ the methodologies delineated in \citet{rahman2021novel}. These techniques provide a mechanism for either upper bounding or lower bounding the CMPE task, contingent on whether it is formulated as a maximization or minimization problem, respectively.

To establish a lower bound for $ q_{\mathbf{x}}^* $, the constrained optimization task is initially transformed into an unconstrained formulation via Lagrange Relaxation. This results in the following optimization problem:

\begin{align}
\label{eq:sup_lagrange-cmpe_min}
    \max_{\mu \geq 0} \min_{\hat{\mathbf{y}}} f_\mathbf{x}(\hat{\mathbf{y}}) + (1-\mu) \times g_\mathbf{x}(\hat{\mathbf{y}})
\end{align}

Here, $ \mu $ denotes the Lagrangian multiplier. By addressing this dual optimization problem, we enhance the precision of the lower bound for $ q_{\mathbf{x}}^* $. For the inner minimization task, the mini-bucket elimination method is employed. The outer maximization is solved through the utilization of sub-gradient descent.

\section{EXTENSIONS}
\label{sec:extensions}
\paragraph{Adding a Penalty for Constraint Violations.}
A penalty of the form $\max\{0,g_\mathbf{x}(\hat{\mathbf{y}})\}^2$ can be easily added to the loss function as described by the following equation
\begin{align*}\label{eqn:loss_1}
		\mathcal{L}_\mathbf{x}(\hat{\mathbf{y}}) & = \begin{cases}
			& f_{\mathbf{x}}(\hat{\mathbf{y}}) \:\:\text{if}\:\: g_\mathbf{x}(\hat{\mathbf{y}}) \leq 0\\
			& \alpha_{\mathbf{x}}(f_{\mathbf{x}}(\hat{\mathbf{y}}) + g_{\mathbf{x}}(\hat{\mathbf{y}})) + \rho  \max\{0,g_\mathbf{x}(\hat{\mathbf{y}})\}^2\:\:\text{if}\:\: g_\mathbf{x}(\hat{\mathbf{y}}) > 0\\
		\end{cases}
	\end{align*}
where $\rho \geq 0 $ is a hyperparameter.
\eat{
\paragraph{Multiple Constraints.}
Let $g^1,\cdots,g^m$ be the multilinear functions involved in the constraints of the form $g^i_\mathbf{x}(\mathbf{y}) \leq 0$ for $i=1,\ldots,m$. Then, we can use the following loss function instead of the original loss function given in Eq. (4) (see the main paper). \begin{align}%\label{eqn:loss_multiple}
\nonumber
\mathcal{L}_\mathbf{x}(\hat{\mathbf{y}}) & = \begin{cases}
			& f_{\mathbf{x}}(\hat{\mathbf{y}}) \:\:\text{if}\:\: \sum_{i=1}^{m}\max\left \{0,g^i_\mathbf{x}(\hat{\mathbf{y}}) \right \} = 0\\
			& \alpha_{\mathbf{x}}\left (f_{\mathbf{x}}(\hat{\mathbf{y}}) +  \sum_{i=1}^{m}\max\left \{0,g^i_\mathbf{x}(\hat{\mathbf{y}}) \right \}\right )\:\:\text{if}\:\: \sum_{i=1}^{m}\max\left \{0,g^i_\mathbf{x}(\hat{\mathbf{y}}) \right \} > 0\\
		\end{cases}
	\end{align}
It is also possible to consider a separate hyperparameter $\alpha_\mathbf{x}^1$ for each constraint
%In future work, we will explore a combinatorial model over the constraints that has several tuneable parameters. For instance given two functions $g^1$ and $g^2$, we can consider the following combinations:
%\begin{align}
%\end{align}
}
%\subsection{Proposition}
\section{EXPERIMENTAL SETUP AND DETAILS}
\subsection{Dataset and Model Description}
\eat{We conducted our experimentation using a selection of trained tractable probabilistic circuits trained on DNA \citep{VanHaaren2012_2, Ucla-Starai2023May}, NewsGroup (c20ng) \citep{Lowd2010Dec_1, Ucla-Starai2023May}, WebKB1 (cwebkb) \citep{Lowd2010Dec_1, Ucla-Starai2023May}, AD \citep{VanHaaren2012_2, Ucla-Starai2023May}, and BBC \citep{VanHaaren2012_2, Ucla-Starai2023May}.} 

Table \ref{tab:supp-dataset-descrip} provides a comprehensive overview of the characteristics of each binary dataset, including the number of variables and functions present in each dataset. These datasets were specifically chosen to provide diverse and representative examples for evaluating the performance and scalability of our algorithms.

We used the following two classes of Markov networks from the UAI competitions \cite{elidan_2010_2010,uai14competition,uai16competition}: Ising models (Grids) and Image Segmentation networks. Specifically, we used the Grids\_17 and Grids\_18 networks and Segmentation\_12, Segmentation\_14 and  Segmentation\_15 networks. 

We learned MPE tractable cutset networks without latent variables using the scheme of \citet{rahman2014cutset} on five high-dimensional datasets:
DNA \citep{VanHaaren2012_2, Ucla-Starai2023May}, NewsGroup (c20ng) \citep{Lowd2010Dec_1, Ucla-Starai2023May}, WebKB1 (cwebkb) \citep{Lowd2010Dec_1, Ucla-Starai2023May}, AD \citep{VanHaaren2012_2, Ucla-Starai2023May}, and BBC \citep{VanHaaren2012_2, Ucla-Starai2023May}. These datasets are widely used in the probabilistic circuits literature \cite{Lowd2010Dec_1}. Note that CMPE is intractable on these models even though MPE is tractable.
\begin{table*}[thb]
\centering
\caption{Dataset and Model Descriptions}
\label{tab:supp-dataset-descrip}
\begin{tabular}{|ccc|}
\hline
\multicolumn{1}{|c|}{Dataset}        & \multicolumn{1}{c|}{\textbf{Number of Variables}} & \textbf{Number of Functions} \\ \hline
\multicolumn{3}{|c|}{Tractable Probabilistic Circuits}                                                                  \\ \hline
\multicolumn{1}{|c|}{AD}             & \multicolumn{1}{c|}{1556}                         & 1556                         \\ \hline
\multicolumn{1}{|c|}{BBC}            & \multicolumn{1}{c|}{1058}                         & 1058                         \\ \hline
\multicolumn{1}{|c|}{20NewsGroup}    & \multicolumn{1}{c|}{910}                          & 910                          \\ \hline
\multicolumn{1}{|c|}{WebKB}          & \multicolumn{1}{c|}{839}                          & 839                          \\ \hline
\multicolumn{1}{|c|}{DNA}            & \multicolumn{1}{c|}{180}                          & 180                          \\ \hline
\multicolumn{3}{|c|}{High Tree-Width Markov Networks}                                                                   \\ \hline
\multicolumn{1}{|c|}{Grids17}        & \multicolumn{1}{c|}{400}                          & 1160                         \\ \hline
\multicolumn{1}{|c|}{Grids18}        & \multicolumn{1}{c|}{400}                          & 1160                         \\ \hline
\multicolumn{1}{|c|}{Segmentation12} & \multicolumn{1}{c|}{229}                          & 851                          \\ \hline
\multicolumn{1}{|c|}{Segmentation14} & \multicolumn{1}{c|}{226}                          & 845                          \\ \hline
\multicolumn{1}{|c|}{Segmentation15} & \multicolumn{1}{c|}{232}                          & 863                          \\ \hline
\end{tabular}
\end{table*}
% Please add the following required packages to your document preamble:
% \usepackage{graphicx}
% Please add the following required packages to your document preamble:
% \usepackage{graphicx}
% Please add the following required packages to your document preamble:
% \usepackage{graphicx}

\subsection{Data Generation}

Recall that the CMPE problem uses two Markov networks $M_1$ and $M_2$, and a value of $q$. We used the original Markov networks (chosen from the UAI competitions or learned from data) as $M_1$ and generated $M_2$ by adding a value $v$, which was randomly sampled from a normal distribution with mean $0$ and variance $0.1$, to each entry in each potential in $M_1$. We used the following strategy to generate $q$. We generated 100 random samples from $M_2$, sorted them according to their weight w.r.t. $M_2$, and then chose the 10th, 30th, 60th, and 90th sample as a value for $q$. At a high level, as we go from the 10th sample to the 90th sample, namely as $q$ increases, the constraint (weight w.r.t. $M_2$ is less than or equal to $q$) becomes less restrictive. In other words, as we increase $q$, the set of feasible solutions increases (or stays the same). \eat{

For each value of $q$, we use either $30$ or $60$\% of the variables as evidence variables $\mathbf{X}$ and the remaining as $\mathbf{Y}$.}
For each value of $q$, we use $60$\% of the variables as evidence variables $\mathbf{X}$ and the remaining as $\mathbf{Y}$.
\eat{
As the number of evidence variables increases, the number of feasible solutions is likely to decrease (because it is more constrained); however, the problem may become easier to optimize because we need to predict values of fewer variables (since the number of query variables is reduced). Thus, in our setup, each UAI network yields $10$ CMPE problems ($5$ values for $q$ and $2$ values for number of evidence variables).}

For each CMPE problem, we generated 10000 samples, used the first 9000 samples for training and the remaining 1000 samples for testing. For the supervised methods, we generated the optimal assignment to $\mathbf{Y}$ using an integer linear programming solver called SCIP \cite{achterberg2009scip}. 

For our proposed scheme, which we call \sscmpe, we used approach described in Section \ref{sec:supp_upper_lower_bound} to find the upper bound of $p^*_\mathbf{x}$ and the lower bound of $q^*_\mathbf{x}$

\eat{
For our proposed scheme, which we call \sscmpe, we used the following approach to generate an upper bound on $\frac{p^*_\mathbf{x}}{q^*_\mathbf{x}}$ (recall that the condition in our algorithm is $\alpha_\mathbf{x} > \frac{p^*_\mathbf{x}}{q^*_\mathbf{x}}$). For upper bounding $p^*_\mathbf{x}$, we first find an upper bound on $\max_\mathbf{y} f_\mathbf{x}(\mathbf{y})\:\:\text{s.t.}\:\: g_\mathbf{x}(\mathbf{y})\geq 0$ using the method discussed in \citet{rahman2021novel}, and later refine this bound while training as the neural network starts generating feasible solutions (note that the weight of any feasible solution is an upper bound on $p^*_\mathbf{x}$). For lower bounding $q^*_\mathbf{x}$, we again use the methods described in \citet{rahman2021novel}. More specifically, \citet{rahman2021novel} describe two approaches for upper bounding CMPE (or lower bounding CMPE if we express it as a minimization problem). In order to improve the quality of upper and lower bounds, when we compute upper bounds, we use the minimum over the upper bounds output by the two approaches, and when we compute lower bounds, we use the maximum.
}

Note that CMPE is a much harder task than MPE. Our scheme can be easily adapted to MPE, all we have to do is use $f$ to yield a supervised scheme.

\eat{
%\vg{add approp}
\textbf{Tractable Probabilistic Circuits}\\
 We used these networks as $M_1$. To construct $M_2$, we modified the
parameters of $M_1$ using a noise parameter sampled from a  Gaussian distribution with $0$ mean and a variance of $0.1$. 

Then, we used the same approach that we used for Grids and Image segmentation benchmarks (described above) to generate \sva{5} \eat{10} CMPE problems per network as well as samples, exact values and upper bound on $\alpha_\mathbf{x}$.
}

\subsection{Architecture Design and Training Procedure}
In our experimental evaluations, we employed a Multi-Layer Perceptron (MLP) with a Rectified Linear Unit (ReLU) activation function for all hidden layers. The final layer of the MLP utilized a sigmoid activation function, as it was necessary to obtain outputs within the range of [0, 1] for all our experiments. Each fully connected neural network in our study consisted of three hidden layers with respective sizes of [128, 256, and 512]. We maintained this consistent architecture across all our supervised \cite{zamzam2019learning, nellikkath2021physicsinformed} and self-supervised \cite{park2022self, donti2021dc3} methods. It is important to highlight that in the adversarial modification experiments, the neural network possessed an equal number of inputs and outputs, specifically set to $28 \times 28$ (size of an image in MNIST). However, in the remaining two experiments concerning probabilistic graphical models, the input size was the number of evidence variables ($|\mathbf{X}|$), while the output size was $|\mathbf{Y}|$. 

For PDL \cite{park2022self}, the dual network had one hidden layer with 128 nodes. The number of outputs of the dual network corresponds to the number of constraints in the optimization problem.  It is worth emphasizing that our method is not constrained to the usage of Multi-Layer Perceptrons (MLPs) exclusively, and we have the flexibility to explore various neural network architectures. This flexibility allows us to consider and utilize alternative architectures that may better suit the requirements and objectives of other optimization tasks.

Regarding the training process, all methods underwent 300 epochs using the Adam optimizer \citep{Kingma2014Dec} with a learning rate of $1e^{-3}$. 
We employed a Learning Rate Scheduler to dynamically adapt the learning rate as the loss reaches a plateau. The training and testing processes for all models were conducted on a single NVIDIA A40 GPU.

\subsection{Hyper-parameters}
The number of instances in the minibatch was set to 128 for all the experiments. We decay the learning rate in all the experiments by $0.9$ when the loss becomes a plateau. Given the empirical observations that learning rate decay often leads to early convergence in most cases and does not yield beneficial results for the supervised baselines, we have made the decision not to apply learning rate decay to these methods. This choice is based on the understanding that the baselines perform optimally without this particular form of learning rate adjustment. 
For detailed information regarding the hyper-parameters utilized in the benchmarking methods, we refer readers to the corresponding papers associated with each method. As stated in the main text, the optimal hyperparameters were determined using a grid search approach. For the \sscmpe method, for each dataset, the hyperparameters were selected from the following available options -
\begin{itemize}
    \item $\beta$ - $\{0.1, 1.0, 2.0, 5.0, 10.0, 20.0\}$
    % \item $\gamma$ - $\{0.001, 0.01, 0.1, 1.0\}$
    \item $\rho$ - $\{0.01, 0.1, 1, 10, 100\}$
\end{itemize}
In the optimization problem of \sscmpe, the parameter $\rho$ is employed to penalize the violation of constraints. The methodology for this approach, denoted as \sscmpepen, is explained in detail in Section \ref{sec:extensions}. The corresponding experiments are presented in tables \ref{tab:sup-seg12-q} through \ref{dna-q}.
% For further details regarding the specific model architectures, hyperparameters, training procedures, and comprehensive results, we refer readers to the supplementary material accompanying this paper. 

\subsection{The Loss Function: Competing Methods}

We evaluated eight different loss functions, including our method to train a deep neural network to solve our CMPE tasks. The loss functions used for supervised training are 1) Mean-Squared-Error (\mse), and 2) Mean-Absolute-Error (\mae). Both of these losses were then extended to incorporate penalty terms as suggested by \cite{nellikkath2021physicsinformed}. We denote them as \msepen and \maepen. We evaluated the self-supervised loss proposed by \cite{donti2021dc3} (\sslpen) and by \cite{park2022self} (\pdl). Finally, we extend our self-supervised CMPE loss function to incorporate the penalty term $\max\{0,g_\mathbf{x}(\hat{\mathbf{y}})\}^2$ (see section \ref{sec:extensions}). We denote it as \sscmpepen.

%the expression $ \alpha_{\mathbf{x}}(f_{\mathbf{x}}(\hat{\mathbf{y}})+ g_{\mathbf{x}}(\hat{\mathbf{y}})) \:\:\text{if} \: \: g_\mathbf{x}(\hat{\mathbf{y}}) > 0$ and denote it as l \sslpen. (\tr{we should describe more here})

% \section{Effect of \textit{q}: A Study on the Robustness of Our Proposed Method}

\section{EXAMINING THE INFLUENCE OF \textit{q}: EVALUATING THE PERFORMANCE OF OUR PROPOSED METHOD FOR CHALLENGING PROBLEM INSTANCES}
To determine the value of $q$, a total of 100 random samples were generated, and their weights were calculated. Subsequently, the samples were sorted based on their weight in ascending order. The weight values corresponding to the 10th, 30th, 60th, and 90th samples were then chosen as the values for $q$. For each value of $q$, we compare the average gap and violations obtained by our method (\sscmpe and \sscmpepen) against six other supervised and self-supervised methods. \eat{The subsequent tables present an examination of the impact of different values of $q$ on the performance of both supervised and self-supervised methods.}
Tables \ref{tab:sup-seg12-q} through \ref{dna-q} show the scores obtained by each of the eight methods along with their standard deviations on the generated test problems. This study investigates the performance of each method in finding near-optimal feasible solutions for difficult problems which is directly controlled by the percentile rank of $q$; problems with a $q$ value in the 10th and 30th percentile are considered harder problems to solve as the size of the feasible region is considerably smaller than the size of the infeasible region. As a result all methods have higher violations on these problems than the problems with a $q$ value in the 60th and 90th percentile.

Table \ref{tab:summary_cmpe_vs_sup} presents a summary of the performances of \sscmpe and other \textit{supervised} methods based on their average gap and the average number of violations on the test data for different values of q. We compute the minimum gap and violations achieved among the four supervised methods \mse, \msepen, \mae, and \maepen and label them as the \texttt{best supervised} method. We choose the minimum gap and violations among \sscmpe and \sscmpepen and label them under the unified term \texttt{best \sscmpe}. We observe that \texttt{best \sscmpe}  consistently has significantly lower violations than the best supervised method in all the problem instances, and its gap is often comparable to the gap achieved by the best supervised method, winning when compared to the average gap. 

Table \ref{tab:summary_cmpe_vs_ssl} presents a similar summary of the performances of \sscmpe and other \textit{self-supervised} methods. We compute the minimum gap and violations achieved among the 2 self-supervised methods \sslpen and \pdl and label them as the \texttt{best SSL} method. As before, we choose the minimum gap and violations among \sscmpe and \sscmpepen and label them under the unified term \texttt{best \sscmpe}. Although self-supervised methods have larger gaps compared to supervised methods but lesser violations, we observe that \sscmpe continues to consistently achieve significantly lower violations than the best performing self-supervised method in all the problem instances, and its gap is often comparable to the gap achieved by the best supervised method, winning when compared to the average gap.\\

Finally, in table \ref{tab:summary}, we present a quick summary of the performances of our best \sscmpe\xspace method vs all \texttt{other} methods. We choose the minimum gap and violations achieved among the 6 \texttt{other} supervised and self-supervised methods and the minimum gap and violations among \sscmpe and \sscmpepen. In all problems and $q$ values, the best \sscmpe has significantly lower violations compared to other methods while having a very competitive gap.  

\eat{

\section{Investigating the Impact of Incorporating Penalty Terms into our Loss Function}
Furthermore, we will investigate the behavior of \sscmpe when penalties are incorporated into the loss function. As previously stated in the main paper, the inclusion of a penalty term, such as $\max(0,g_\mathbf{x}(\hat{\mathbf{y}}))^2$, to the expression $ \alpha_{\mathbf{x}}(f_{\mathbf{x}}(\hat{\mathbf{y}})+ g_{\mathbf{x}}(\hat{\mathbf{y}})) ::\text{if}:: g_\mathbf{x}(\hat{\mathbf{y}}) > 0$, has no impact on the set of global optima for the loss function. By analyzing the performance and behavior of \sscmpe with the addition of penalty terms, we aim to gain insights into how the incorporation of penalties affects its optimization properties and overall performance.}

\subsection{\eat{Comparative Analysis of Optimality Gaps on Feasible Solutions: Evaluating the Performance of Self-Supervised Methods}
The Feasible-Only Optimality Gaps: Comparing Self-Supervised Approaches}

From the results presented in tables \ref{tab:summary_cmpe_vs_sup} through \ref{dna-q}, we observe that self-supervised approaches produce more feasible solutions compared to supervised approaches. In this section, we present the results of a controlled study that shows how each of the self-supervised approaches perform in terms of finding optimal solutions in the feasible region. \\

We selected a subset of problems from the test set on which all self-supervised methods, namely, \sslpen \cite{donti2021dc3}, \pdl \cite{park2022self} and our method \sscmpe and \sscmpepen, obtained feasible solutions and this was done for each possible value of $q$. We then computed their gaps and compare them via figure \ref{fig:supp_only_feasible}. Among the three methods analyzed, \sscmpe and \sscmpepen consistently exhibits superior performance across the majority of cases. Its optimality gaps are significantly smaller compared to the other two methods. This finding suggests that \sscmpe is more effective in minimizing the objective value and achieving solutions closer to optimality for the given examples.

\eat{
Figure \ref{fig:supp_only_feasible}, we examine and compare the optimality gaps for a specific set of examples on which all the self-supervised methods have attain feasible solutions. \eat{These examples were carefully selected to ensure that only feasible solutions were generated by all the methods under consideration.} By focusing on this subset, we aim to provide a fair and unbiased evaluation of the self-supervised methods.
}

\eat{
The selection of examples with feasible solutions for all methods allows for a meaningful and equitable comparison. Since all the problems have optimal solutions, any differences in the optimality gaps can be attributed to the inherent capabilities of the self-supervised methods themselves. This controlled evaluation provides valuable insights into the relative strengths and weaknesses of each method in optimizing the objective value.

Overall, the results depicted in Figure \ref{fig:supp_only_feasible} highlight the advantage of \sscmpe in terms of minimizing the objective value and support the claim that it outperforms the other two self-supervised methods across a wide range of scenarios.
}

% Please add the following required packages to your document preamble:
% \usepackage{multirow}
% \usepackage[normalem]{ulem}
% \useunder{\uline}{\ul}{}
\begin{table*}[ht!]
\centering
\setlength{\tabcolsep}{1pt}
\caption{\label{tab:summary_cmpe_vs_sup} Summary: best \sscmpe vs \texttt{other supervised} methods including \mse, \msepen, \mae, and \maepen. Bold represents the minimum gap, while underlined means the least violations}

\begin{tabular}{|cc|cc|cc|cc|cc|}
\hline
\multicolumn{2}{|c|}{q}                                                                                                                      & \multicolumn{2}{c|}{10}                                                                         & \multicolumn{2}{c|}{30}                                                                         & \multicolumn{2}{c|}{60}                                                                         & \multicolumn{2}{c|}{90}                                                                         \\ \hline
\multicolumn{1}{|c|}{\begin{tabular}[c]{@{}c@{}}Models\\ /Dataset\end{tabular}} & \begin{tabular}[c]{@{}c@{}}Gap /\\ Violations\end{tabular} & \multicolumn{1}{c|}{\begin{tabular}[c]{@{}c@{}}best\\ \small{\sscmpe}\end{tabular}}    & \begin{tabular}[c]{@{}c@{}}\small{best}\\ \small{supervised}\end{tabular} & \multicolumn{1}{c|}{\begin{tabular}[c]{@{}c@{}}best\\ \small{\sscmpe}\end{tabular}} & \begin{tabular}[c]{@{}c@{}}\small{best}\\ \small{supervised}\end{tabular} & \multicolumn{1}{c|}{\begin{tabular}[c]{@{}c@{}}best\\ \small{\sscmpe}\end{tabular}} & \begin{tabular}[c]{@{}c@{}}\small{best}\\ \small{supervised}\end{tabular} & \multicolumn{1}{c|}{\begin{tabular}[c]{@{}c@{}}best\\ \small{\sscmpe}\end{tabular}} & \begin{tabular}[c]{@{}c@{}}\small{best}\\ \small{supervised}\end{tabular} \\ \hline
\multicolumn{1}{|c|}{\multirow{2}{*}{Segment-12}}                               & gap                                                        & \multicolumn{1}{c|}{0.057}          & \textbf{0.054}                                            & \multicolumn{1}{c|}{\textbf{0.051}} & 0.052                                                     & \multicolumn{1}{c|}{0.053}          & \textbf{0.051}                                            & \multicolumn{1}{c|}{\textbf{0.050}} & 0.051                                                     \\
\multicolumn{1}{|c|}{}                                                          & violations                                                 & \multicolumn{1}{c|}{{\ul 0.152}}    & 0.511                                                     & \multicolumn{1}{c|}{{\ul 0.166}}    & 0.500                                                     & \multicolumn{1}{c|}{{\ul 0.084}}    & 0.348                                                     & \multicolumn{1}{c|}{{\ul 0.021}}    & 0.131                                                     \\ \hline
\multicolumn{1}{|c|}{\multirow{2}{*}{Segment-14}}                               & gap                                                        & \multicolumn{1}{c|}{0.050}          & \textbf{0.049}                                            & \multicolumn{1}{c|}{\textbf{0.047}} & 0.049                                                     & \multicolumn{1}{c|}{0.048}          & 0.048                                                     & \multicolumn{1}{c|}{0.051}          & 0.051                                                     \\
\multicolumn{1}{|c|}{}                                                          & violations                                                 & \multicolumn{1}{c|}{{\ul 0.134}}    & 0.691                                                     & \multicolumn{1}{c|}{{\ul 0.088}}    & 0.616                                                     & \multicolumn{1}{c|}{{\ul 0.066}}    & 0.410                                                     & \multicolumn{1}{c|}{{\ul 0.046}}    & 0.207                                                     \\ \hline
\multicolumn{1}{|c|}{\multirow{2}{*}{Segment-15}}                               & gap                                                        & \multicolumn{1}{c|}{0.051}          & 0.051                                                     & \multicolumn{1}{c|}{\textbf{0.050}} & 0.051                                                     & \multicolumn{1}{c|}{0.052}          & \textbf{0.051}                                            & \multicolumn{1}{c|}{\textbf{0.051}} & 0.052                                                     \\
\multicolumn{1}{|c|}{}                                                          & violations                                                 & \multicolumn{1}{c|}{{\ul 0.060}}    & 0.570                                                     & \multicolumn{1}{c|}{{\ul 0.060}}    & 0.417                                                     & \multicolumn{1}{c|}{{\ul 0.049}}    & 0.248                                                     & \multicolumn{1}{c|}{{\ul 0.001}}    & 0.061                                                     \\ \hline
\multicolumn{1}{|c|}{\multirow{2}{*}{Grids-17}}                                 & gap                                                        & \multicolumn{1}{c|}{0.072}          & \textbf{0.054}                                            & \multicolumn{1}{c|}{0.069}          & \textbf{0.057}                                            & \multicolumn{1}{c|}{\textbf{0.066}} & 0.067                                                     & \multicolumn{1}{c|}{0.063}          & \textbf{0.058}                                            \\
\multicolumn{1}{|c|}{}                                                          & violations                                                 & \multicolumn{1}{c|}{{\ul 0.035}}    & 0.304                                                     & \multicolumn{1}{c|}{{\ul 0.013}}    & 0.125                                                     & \multicolumn{1}{c|}{{\ul 0.002}}    & 0.044                                                     & \multicolumn{1}{c|}{{\ul 0.001}}    & 0.002                                                     \\ \hline
\multicolumn{1}{|c|}{\multirow{2}{*}{Grids-18}}                                 & gap                                                        & \multicolumn{1}{c|}{0.064}          & \textbf{0.056}                                            & \multicolumn{1}{c|}{0.067}          & \textbf{0.060}                                            & \multicolumn{1}{c|}{\textbf{0.060}} & 0.065                                                     & \multicolumn{1}{c|}{0.065}          & \textbf{0.064}                                            \\
\multicolumn{1}{|c|}{}                                                          & violations                                                 & \multicolumn{1}{c|}{{\ul 0.017}}    & 0.210                                                     & \multicolumn{1}{c|}{{\ul 0.019}}    & 0.087                                                     & \multicolumn{1}{c|}{{\ul 0.000}}    & 0.025                                                     & \multicolumn{1}{c|}{{\ul 0.000}}    & 0.015                                                     \\ \hline
\multicolumn{1}{|c|}{\multirow{2}{*}{DNA}}                                      & gap                                                        & \multicolumn{1}{c|}{0.138}          & \textbf{0.135}                                            & \multicolumn{1}{c|}{0.138}          & \textbf{0.136}                                            & \multicolumn{1}{c|}{0.137}          & \textbf{0.136}                                            & \multicolumn{1}{c|}{0.139}          & \textbf{0.137}                                            \\
\multicolumn{1}{|c|}{}                                                          & violations                                                 & \multicolumn{1}{c|}{{\ul 0.013}}    & 0.434                                                     & \multicolumn{1}{c|}{{\ul 0.002}}    & 0.448                                                     & \multicolumn{1}{c|}{{\ul 0.001}}    & 0.281                                                     & \multicolumn{1}{c|}{{\ul 0.001}}    & 0.089                                                     \\ \hline
\multicolumn{1}{|c|}{\multirow{2}{*}{20NewsGr.}}                                & gap                                                        & \multicolumn{1}{c|}{\textbf{0.043}} & 0.044                                                     & \multicolumn{1}{c|}{\textbf{0.045}} & 0.046                                                     & \multicolumn{1}{c|}{\textbf{0.044}} & 0.046                                                     & \multicolumn{1}{c|}{0.044}          & 0.044                                                     \\
\multicolumn{1}{|c|}{}                                                          & violations                                                 & \multicolumn{1}{c|}{{\ul 0.069}}    & 0.455                                                     & \multicolumn{1}{c|}{{\ul 0.054}}    & 0.176                                                     & \multicolumn{1}{c|}{{\ul 0.007}}    & 0.046                                                     & \multicolumn{1}{c|}{0.001}          & 0.001                                                     \\ \hline
\multicolumn{1}{|c|}{\multirow{2}{*}{WebKB}}                                    & gap                                                        & \multicolumn{1}{c|}{0.059}          & \textbf{0.054}                                            & \multicolumn{1}{c|}{0.058}          & \textbf{0.054}                                            & \multicolumn{1}{c|}{0.056}          & \textbf{0.054}                                            & \multicolumn{1}{c|}{\textbf{0.053}} & 0.054                                                     \\
\multicolumn{1}{|c|}{}                                                          & violations                                                 & \multicolumn{1}{c|}{{\ul 0.074}}    & 0.471                                                     & \multicolumn{1}{c|}{{\ul 0.029}}    & 0.378                                                     & \multicolumn{1}{c|}{{\ul 0.001}}    & 0.174                                                     & \multicolumn{1}{c|}{{\ul 0.001}}    & 0.018                                                     \\ \hline
\multicolumn{1}{|c|}{\multirow{2}{*}{BBC}}                                      & gap                                                        & \multicolumn{1}{c|}{0.038}          & \textbf{0.036}                                            & \multicolumn{1}{c|}{0.043}          & \textbf{0.036}                                            & \multicolumn{1}{c|}{0.042}          & \textbf{0.037}                                            & \multicolumn{1}{c|}{0.040}          & \textbf{0.037}                                            \\
\multicolumn{1}{|c|}{}                                                          & violations                                                 & \multicolumn{1}{c|}{{\ul 0.074}}    & 0.657                                                     & \multicolumn{1}{c|}{{\ul 0.067}}    & 0.557                                                     & \multicolumn{1}{c|}{{\ul 0.056}}    & 0.384                                                     & \multicolumn{1}{c|}{{\ul 0.002}}    & 0.151                                                     \\ \hline
\multicolumn{1}{|c|}{\multirow{2}{*}{Ad}}                                       & gap                                                        & \multicolumn{1}{c|}{\textbf{0.129}} & 0.204                                                     & \multicolumn{1}{c|}{\textbf{0.131}} & 0.201                                                     & \multicolumn{1}{c|}{\textbf{0.130}} & 0.204                                                     & \multicolumn{1}{c|}{\textbf{0.131}} & 0.213                                                     \\
\multicolumn{1}{|c|}{}                                                          & violations                                                 & \multicolumn{1}{c|}{{\ul 0.017}}    & 0.085                                                     & \multicolumn{1}{c|}{{\ul 0.004}}    & 0.041                                                     & \multicolumn{1}{c|}{{\ul 0.000}}    & 0.021                                                     & \multicolumn{1}{c|}{{\ul 0.000}}    & 0.005                                                     \\ \hline
\multicolumn{1}{|c|}{\multirow{2}{*}{Average}}                                  & gap                                                        & \multicolumn{1}{c|}{\textbf{0.070}} & 0.074                                                     & \multicolumn{1}{c|}{\textbf{0.070}} & 0.074                                                     & \multicolumn{1}{c|}{\textbf{0.069}} & 0.076                                                     & \multicolumn{1}{c|}{\textbf{0.069}} & 0.076                                                     \\
\multicolumn{1}{|c|}{}                                                          & violations                                                 & \multicolumn{1}{c|}{{\ul 0.065}}    & 0.439                                                     & \multicolumn{1}{c|}{{\ul 0.050}}    & 0.335                                                     & \multicolumn{1}{c|}{{\ul 0.027}}    & 0.198                                                     & \multicolumn{1}{c|}{{\ul 0.007}}    & 0.068                                                     \\ \hline
\end{tabular}
\end{table*}
\eat{% Please add the following required packages to your document preamble:
% \usepackage{multirow}
% \usepackage[normalem]{ulem}
% \useunder{\uline}{\ul}{}
\begin{table}[tbh]
\centering
\setlength{\tabcolsep}{1pt}
\caption{\label{tab:summary_cmpe_vs_sup} Summary: \sscmpe vs \texttt{other supervised} methods including \mse, \msepen, \mae, and \maepen.}
\begin{tabular}{|cc|cc|cc|cc|cc|}
\hline
\multicolumn{2}{|c|}{q}                                                                                                                      & \multicolumn{2}{c|}{10}                              & \multicolumn{2}{c|}{30}                              & \multicolumn{2}{c|}{60}                              & \multicolumn{2}{c|}{90}                              \\ \hline
\multicolumn{1}{|c|}{\begin{tabular}[c]{@{}c@{}}Models\\ /Dataset\end{tabular}} & \begin{tabular}[c]{@{}c@{}}Gap /\\ Violations\end{tabular} & \multicolumn{1}{c|}{\sscmpe}        & others         & \multicolumn{1}{c|}{\sscmpe}        & others         & \multicolumn{1}{c|}{\sscmpe}        & others         & \multicolumn{1}{c|}{\sscmpe}        & others         \\ \hline
\multicolumn{1}{|c|}{\multirow{2}{*}{Segment-12}}                               & gap                                                        & \multicolumn{1}{c|}{0.057}          & \textbf{0.054} & \multicolumn{1}{c|}{\textbf{0.051}} & 0.052          & \multicolumn{1}{c|}{0.053}          & \textbf{0.051} & \multicolumn{1}{c|}{\textbf{0.050}} & 0.051          \\
\multicolumn{1}{|c|}{}                                                          & violations                                                 & \multicolumn{1}{c|}{{\ul 0.152}}    & 0.511          & \multicolumn{1}{c|}{{\ul 0.166}}    & 0.500          & \multicolumn{1}{c|}{{\ul 0.084}}    & 0.348          & \multicolumn{1}{c|}{{\ul 0.021}}    & 0.131          \\ \hline
\multicolumn{1}{|c|}{\multirow{2}{*}{Segment-14}}                               & gap                                                        & \multicolumn{1}{c|}{0.050}          & \textbf{0.049} & \multicolumn{1}{c|}{\textbf{0.047}} & 0.049          & \multicolumn{1}{c|}{0.048}          & 0.048          & \multicolumn{1}{c|}{0.051}          & 0.051          \\
\multicolumn{1}{|c|}{}                                                          & violations                                                 & \multicolumn{1}{c|}{{\ul 0.134}}    & 0.691          & \multicolumn{1}{c|}{{\ul 0.088}}    & 0.616          & \multicolumn{1}{c|}{{\ul 0.066}}    & 0.410          & \multicolumn{1}{c|}{{\ul 0.046}}    & 0.207          \\ \hline
\multicolumn{1}{|c|}{\multirow{2}{*}{Segment-15}}                               & gap                                                        & \multicolumn{1}{c|}{0.051}          & 0.051          & \multicolumn{1}{c|}{\textbf{0.050}} & 0.051          & \multicolumn{1}{c|}{0.052}          & \textbf{0.051} & \multicolumn{1}{c|}{\textbf{0.051}} & 0.052          \\
\multicolumn{1}{|c|}{}                                                          & violations                                                 & \multicolumn{1}{c|}{{\ul 0.060}}    & 0.570          & \multicolumn{1}{c|}{{\ul 0.060}}    & 0.417          & \multicolumn{1}{c|}{{\ul 0.049}}    & 0.248          & \multicolumn{1}{c|}{{\ul 0.001}}    & 0.061          \\ \hline
\multicolumn{1}{|c|}{\multirow{2}{*}{Grids-17}}                                 & gap                                                        & \multicolumn{1}{c|}{0.072}          & \textbf{0.054} & \multicolumn{1}{c|}{0.069}          & \textbf{0.057} & \multicolumn{1}{c|}{\textbf{0.066}} & 0.067          & \multicolumn{1}{c|}{0.063}          & \textbf{0.058} \\
\multicolumn{1}{|c|}{}                                                          & violations                                                 & \multicolumn{1}{c|}{{\ul 0.035}}    & 0.304          & \multicolumn{1}{c|}{{\ul 0.013}}    & 0.125          & \multicolumn{1}{c|}{{\ul 0.002}}    & 0.044          & \multicolumn{1}{c|}{{\ul 0.001}}    & 0.002          \\ \hline
\multicolumn{1}{|c|}{\multirow{2}{*}{Grids-18}}                                 & gap                                                        & \multicolumn{1}{c|}{0.064}          & \textbf{0.056} & \multicolumn{1}{c|}{0.067}          & \textbf{0.060} & \multicolumn{1}{c|}{\textbf{0.060}} & 0.065          & \multicolumn{1}{c|}{0.065}          & \textbf{0.064} \\
\multicolumn{1}{|c|}{}                                                          & violations                                                 & \multicolumn{1}{c|}{{\ul 0.017}}    & 0.210          & \multicolumn{1}{c|}{{\ul 0.019}}    & 0.087          & \multicolumn{1}{c|}{{\ul 0.000}}    & 0.025          & \multicolumn{1}{c|}{{\ul 0.000}}    & 0.015          \\ \hline
\multicolumn{1}{|c|}{\multirow{2}{*}{DNA}}                                      & gap                                                        & \multicolumn{1}{c|}{0.138}          & \textbf{0.135} & \multicolumn{1}{c|}{0.138}          & \textbf{0.136} & \multicolumn{1}{c|}{0.137}          & \textbf{0.136} & \multicolumn{1}{c|}{0.139}          & \textbf{0.137} \\
\multicolumn{1}{|c|}{}                                                          & violations                                                 & \multicolumn{1}{c|}{{\ul 0.013}}    & 0.434          & \multicolumn{1}{c|}{{\ul 0.002}}    & 0.448          & \multicolumn{1}{c|}{{\ul 0.001}}    & 0.281          & \multicolumn{1}{c|}{{\ul 0.001}}    & 0.089          \\ \hline
\multicolumn{1}{|c|}{\multirow{2}{*}{20NewsGr.}}                                & gap                                                        & \multicolumn{1}{c|}{\textbf{0.043}} & 0.044          & \multicolumn{1}{c|}{\textbf{0.045}} & 0.046          & \multicolumn{1}{c|}{\textbf{0.044}} & 0.046          & \multicolumn{1}{c|}{0.044}          & 0.044          \\
\multicolumn{1}{|c|}{}                                                          & violations                                                 & \multicolumn{1}{c|}{{\ul 0.069}}    & 0.455          & \multicolumn{1}{c|}{{\ul 0.054}}    & 0.176          & \multicolumn{1}{c|}{{\ul 0.007}}    & 0.046          & \multicolumn{1}{c|}{0.001}          & 0.001          \\ \hline
\multicolumn{1}{|c|}{\multirow{2}{*}{WebKB}}                                    & gap                                                        & \multicolumn{1}{c|}{0.059}          & \textbf{0.054} & \multicolumn{1}{c|}{0.058}          & \textbf{0.054} & \multicolumn{1}{c|}{0.056}          & \textbf{0.054} & \multicolumn{1}{c|}{\textbf{0.053}} & 0.054          \\
\multicolumn{1}{|c|}{}                                                          & violations                                                 & \multicolumn{1}{c|}{{\ul 0.074}}    & 0.471          & \multicolumn{1}{c|}{{\ul 0.029}}    & 0.378          & \multicolumn{1}{c|}{{\ul 0.001}}    & 0.174          & \multicolumn{1}{c|}{{\ul 0.001}}    & 0.018          \\ \hline
\multicolumn{1}{|c|}{\multirow{2}{*}{BBC}}                                      & gap                                                        & \multicolumn{1}{c|}{0.038}          & \textbf{0.036} & \multicolumn{1}{c|}{0.043}          & \textbf{0.036} & \multicolumn{1}{c|}{0.042}          & \textbf{0.037} & \multicolumn{1}{c|}{0.040}          & \textbf{0.037} \\
\multicolumn{1}{|c|}{}                                                          & violations                                                 & \multicolumn{1}{c|}{{\ul 0.074}}    & 0.657          & \multicolumn{1}{c|}{{\ul 0.067}}    & 0.557          & \multicolumn{1}{c|}{{\ul 0.056}}    & 0.384          & \multicolumn{1}{c|}{{\ul 0.002}}    & 0.151          \\ \hline
\multicolumn{1}{|c|}{\multirow{2}{*}{Ad}}                                       & gap                                                        & \multicolumn{1}{c|}{\textbf{0.129}} & 0.204          & \multicolumn{1}{c|}{\textbf{0.131}} & 0.201          & \multicolumn{1}{c|}{\textbf{0.130}} & 0.204          & \multicolumn{1}{c|}{\textbf{0.131}} & 0.213          \\
\multicolumn{1}{|c|}{}                                                          & violations                                                 & \multicolumn{1}{c|}{{\ul 0.017}}    & 0.085          & \multicolumn{1}{c|}{{\ul 0.004}}    & 0.041          & \multicolumn{1}{c|}{{\ul 0.000}}    & 0.021          & \multicolumn{1}{c|}{{\ul 0.000}}    & 0.005          \\ \hline
\multicolumn{1}{|c|}{\multirow{2}{*}{Average}}                                  & gap                                                        & \multicolumn{1}{c|}{\textbf{0.070}} & 0.074          & \multicolumn{1}{c|}{\textbf{0.070}} & 0.074          & \multicolumn{1}{c|}{\textbf{0.069}} & 0.076          & \multicolumn{1}{c|}{\textbf{0.069}} & 0.076          \\
\multicolumn{1}{|c|}{}                                                          & violations                                                 & \multicolumn{1}{c|}{{\ul 0.065}}    & 0.439          & \multicolumn{1}{c|}{{\ul 0.050}}    & 0.335          & \multicolumn{1}{c|}{{\ul 0.027}}    & 0.198          & \multicolumn{1}{c|}{{\ul 0.007}}    & 0.068          \\ \hline
\end{tabular}
\end{table}
}
% Please add the following required packages to your document preamble:
% \usepackage{multirow}
% \usepackage[normalem]{ulem}
% \useunder{\uline}{\ul}{}
\begin{table*}[ht]
\centering
\setlength{\tabcolsep}{1pt}
\caption{\label{tab:summary_cmpe_vs_ssl} Summary: best \sscmpe vs \texttt{other self-supervised} methods including \slpen, and \pdl. Bold represents the minimum gap, while underlined means the least violations}

\begin{tabular}{|cc|cc|cc|cc|cc|}
\hline
\multicolumn{2}{|c|}{q}                                                                                                                     & \multicolumn{2}{c|}{10}                                                                  & \multicolumn{2}{c|}{30}                                                                  & \multicolumn{2}{c|}{60}                                                                  & \multicolumn{2}{c|}{90}                                                                  \\ \hline
\multicolumn{1}{|c|}{\begin{tabular}[c]{@{}c@{}}Models/\\ Dataset\end{tabular}} & \begin{tabular}[c]{@{}c@{}}Gap/\\ Violations\end{tabular} & \multicolumn{1}{c|}{\begin{tabular}[c]{@{}c@{}}best\\ \small{\sscmpe}\end{tabular}}        & \begin{tabular}[c]{@{}c@{}}best\\ SSL\end{tabular} & \multicolumn{1}{c|}{\begin{tabular}[c]{@{}c@{}}best\\ \small{\sscmpe}\end{tabular}}     & \begin{tabular}[c]{@{}c@{}}best\\ SSL\end{tabular} & \multicolumn{1}{c|}{\begin{tabular}[c]{@{}c@{}}best\\ \small{\sscmpe}\end{tabular}}   & \begin{tabular}[c]{@{}c@{}}best\\ SSL\end{tabular} & \multicolumn{1}{c|}{\begin{tabular}[c]{@{}c@{}}best\\ \small{\sscmpe}\end{tabular}}  & \begin{tabular}[c]{@{}c@{}}best\\ SSL\end{tabular} \\ \hline
\multicolumn{1}{|c|}{\multirow{2}{*}{Segment-12}}                               & gap                                                       & \multicolumn{1}{c|}{0.057}          & \textbf{0.054}                                     & \multicolumn{1}{c|}{\textbf{0.051}} & 0.052                                              & \multicolumn{1}{c|}{0.053}          & \textbf{0.051}                                     & \multicolumn{1}{c|}{\textbf{0.050}} & 0.051                                              \\
\multicolumn{1}{|c|}{}                                                          & violations                                                & \multicolumn{1}{c|}{{\ul 0.152}}    & 0.545                                              & \multicolumn{1}{c|}{{\ul 0.166}}    & 0.622                                              & \multicolumn{1}{c|}{{\ul 0.084}}    & 0.486                                              & \multicolumn{1}{c|}{{\ul 0.021}}    & 0.163                                              \\ \hline
\multicolumn{1}{|c|}{\multirow{2}{*}{Segment-14}}                               & gap                                                       & \multicolumn{1}{c|}{\textbf{0.050}} & 0.058                                              & \multicolumn{1}{c|}{\textbf{0.047}} & 0.050                                              & \multicolumn{1}{c|}{\textbf{0.048}} & 0.050                                              & \multicolumn{1}{c|}{\textbf{0.051}} & 0.053                                              \\
\multicolumn{1}{|c|}{}                                                          & violations                                                & \multicolumn{1}{c|}{{\ul 0.134}}    & 0.507                                              & \multicolumn{1}{c|}{{\ul 0.088}}    & 0.414                                              & \multicolumn{1}{c|}{{\ul 0.066}}    & 0.394                                              & \multicolumn{1}{c|}{{\ul 0.046}}    & 0.207                                              \\ \hline
\multicolumn{1}{|c|}{\multirow{2}{*}{Segment-15}}                               & gap                                                       & \multicolumn{1}{c|}{0.051}          & 0.051                                              & \multicolumn{1}{c|}{\textbf{0.050}} & 0.053                                              & \multicolumn{1}{c|}{0.052}          & \textbf{0.051}                                     & \multicolumn{1}{c|}{\textbf{0.051}} & 0.054                                              \\
\multicolumn{1}{|c|}{}                                                          & violations                                                & \multicolumn{1}{c|}{{\ul 0.060}}    & 0.676                                              & \multicolumn{1}{c|}{{\ul 0.060}}    & 0.360                                              & \multicolumn{1}{c|}{{\ul 0.049}}    & 0.274                                              & \multicolumn{1}{c|}{{\ul 0.001}}    & 0.086                                              \\ \hline
\multicolumn{1}{|c|}{\multirow{2}{*}{Grids-17}}                                 & gap                                                       & \multicolumn{1}{c|}{\textbf{0.072}} & 0.086                                              & \multicolumn{1}{c|}{\textbf{0.069}} & 0.079                                              & \multicolumn{1}{c|}{\textbf{0.066}} & 0.093                                              & \multicolumn{1}{c|}{\textbf{0.063}} & 0.087                                              \\
\multicolumn{1}{|c|}{}                                                          & violations                                                & \multicolumn{1}{c|}{{\ul 0.035}}    & 0.043                                              & \multicolumn{1}{c|}{{\ul 0.013}}    & 0.003                                              & \multicolumn{1}{c|}{0.002}          & {\ul 0.001}                                        & \multicolumn{1}{c|}{{\ul 0.001}}    & 0.014                                              \\ \hline
\multicolumn{1}{|c|}{\multirow{2}{*}{Grids-18}}                                 & gap                                                       & \multicolumn{1}{c|}{\textbf{0.064}} & 0.105                                              & \multicolumn{1}{c|}{\textbf{0.067}} & 0.074                                              & \multicolumn{1}{c|}{\textbf{0.060}} & 0.093                                              & \multicolumn{1}{c|}{\textbf{0.065}} & 0.118                                              \\
\multicolumn{1}{|c|}{}                                                          & violations                                                & \multicolumn{1}{c|}{{\ul 0.017}}    & 0.060                                              & \multicolumn{1}{c|}{0.019}          & {\ul 0.001}                                        & \multicolumn{1}{c|}{{\ul 0.000}}    & 0.003                                              & \multicolumn{1}{c|}{{\ul 0.000}}    & 0.005                                              \\ \hline
\multicolumn{1}{|c|}{\multirow{2}{*}{DNA}}                                      & gap                                                       & \multicolumn{1}{c|}{\textbf{0.138}} & 0.140                                              & \multicolumn{1}{c|}{\textbf{0.138}} & 0.141                                              & \multicolumn{1}{c|}{\textbf{0.137}} & 0.139                                              & \multicolumn{1}{c|}{\textbf{0.139}} & 0.143                                              \\
\multicolumn{1}{|c|}{}                                                          & violations                                                & \multicolumn{1}{c|}{{\ul 0.013}}    & 0.048                                              & \multicolumn{1}{c|}{{\ul 0.002}}    & 0.062                                              & \multicolumn{1}{c|}{{\ul 0.001}}    & 0.003                                              & \multicolumn{1}{c|}{{\ul 0.001}}    & 0.004                                              \\ \hline
\multicolumn{1}{|c|}{\multirow{2}{*}{20NewsGr}}                                 & gap                                                       & \multicolumn{1}{c|}{0.043}          & 0.043                                              & \multicolumn{1}{c|}{\textbf{0.045}} & 0.046                                              & \multicolumn{1}{c|}{\textbf{0.044}} & 0.045                                              & \multicolumn{1}{c|}{\textbf{0.044}} & 0.046                                              \\
\multicolumn{1}{|c|}{}                                                          & violations                                                & \multicolumn{1}{c|}{{\ul 0.069}}    & 0.278                                              & \multicolumn{1}{c|}{{\ul 0.054}}    & 0.129                                              & \multicolumn{1}{c|}{{\ul 0.007}}    & 0.024                                              & \multicolumn{1}{c|}{0.001}          & 0.001                                              \\ \hline
\multicolumn{1}{|c|}{\multirow{2}{*}{WebKB}}                                    & gap                                                       & \multicolumn{1}{c|}{0.059}          & \textbf{0.058}                                     & \multicolumn{1}{c|}{0.058}          & \textbf{0.057}                                     & \multicolumn{1}{c|}{\textbf{0.056}} & 0.057                                              & \multicolumn{1}{c|}{\textbf{0.053}} & 0.057                                              \\
\multicolumn{1}{|c|}{}                                                          & violations                                                & \multicolumn{1}{c|}{{\ul 0.074}}    & 0.149                                              & \multicolumn{1}{c|}{{\ul 0.029}}    & 0.096                                              & \multicolumn{1}{c|}{{\ul 0.001}}    & 0.056                                              & \multicolumn{1}{c|}{{\ul 0.001}}    & 0.013                                              \\ \hline
\multicolumn{1}{|c|}{\multirow{2}{*}{BBC}}                                      & gap                                                       & \multicolumn{1}{c|}{\textbf{0.038}} & 0.041                                              & \multicolumn{1}{c|}{\textbf{0.043}} & 0.042                                              & \multicolumn{1}{c|}{0.042}          & \textbf{0.038}                                     & \multicolumn{1}{c|}{0.040}          & \textbf{0.039}                                     \\
\multicolumn{1}{|c|}{}                                                          & violations                                                & \multicolumn{1}{c|}{{\ul 0.074}}    & 0.336                                              & \multicolumn{1}{c|}{{\ul 0.067}}    & 0.160                                              & \multicolumn{1}{c|}{{\ul 0.056}}    & 0.149                                              & \multicolumn{1}{c|}{{\ul 0.002}}    & 0.029                                              \\ \hline
\multicolumn{1}{|c|}{\multirow{2}{*}{Ad}}                                       & gap                                                       & \multicolumn{1}{c|}{\textbf{0.129}} & 0.135                                              & \multicolumn{1}{c|}{\textbf{0.131}} & 0.140                                              & \multicolumn{1}{c|}{\textbf{0.130}} & 0.142                                              & \multicolumn{1}{c|}{\textbf{0.131}} & 0.134                                              \\
\multicolumn{1}{|c|}{}                                                          & violations                                                & \multicolumn{1}{c|}{{\ul 0.017}}    & 0.055                                              & \multicolumn{1}{c|}{{\ul 0.004}}    & 0.006                                              & \multicolumn{1}{c|}{{\ul 0.000}}    & 0.013                                              & \multicolumn{1}{c|}{{\ul 0.000}}    & 0.004                                              \\ \hline
\multicolumn{1}{|c|}{\multirow{2}{*}{Average}}                                  & gap                                                       & \multicolumn{1}{c|}{\textbf{0.070}} & 0.077                                              & \multicolumn{1}{c|}{\textbf{0.070}} & 0.073                                              & \multicolumn{1}{c|}{\textbf{0.069}} & 0.076                                              & \multicolumn{1}{c|}{\textbf{0.069}} & 0.078                                              \\
\multicolumn{1}{|c|}{}                                                          & violations                                                & \multicolumn{1}{c|}{{\ul 0.065}}    & 0.270                                              & \multicolumn{1}{c|}{{\ul 0.050}}    & 0.185                                              & \multicolumn{1}{c|}{{\ul 0.027}}    & 0.140                                              & \multicolumn{1}{c|}{{\ul 0.007}}    & 0.053                                              \\ \hline
\end{tabular}
\end{table*}
\eat{% Please add the following required packages to your document preamble:
% \usepackage{multirow}
% \usepackage[normalem]{ulem}
% \useunder{\uline}{\ul}{}
\begin{table}[]
\centering
\setlength{\tabcolsep}{1pt}
\caption{\label{tab:summary_cmpe_vs_ssl} Summary: \sscmpe vs \texttt{other self-supervised} methods including \slpen, and \pdl. Bold represents the minimum gap, while underlined means the least violations}
\vspace{0.35cm}
\begin{tabular}{|cc|cc|cc|cc|cc|}
\hline
\multicolumn{2}{|c|}{q}                                                                                                                     & \multicolumn{2}{c|}{10}                              & \multicolumn{2}{c|}{30}                              & \multicolumn{2}{c|}{60}                              & \multicolumn{2}{c|}{90}                              \\ \hline
\multicolumn{1}{|c|}{\begin{tabular}[c]{@{}c@{}}Models/\\ Dataset\end{tabular}} & \begin{tabular}[c]{@{}c@{}}Gap/\\ Violations\end{tabular} & \multicolumn{1}{c|}{\sscmpe}        & others         & \multicolumn{1}{c|}{\sscmpe}        & others         & \multicolumn{1}{c|}{\sscmpe}        & others         & \multicolumn{1}{c|}{\sscmpe}        & others         \\ \hline
\multicolumn{1}{|c|}{\multirow{2}{*}{Segment-12}}                               & gap                                                       & \multicolumn{1}{c|}{0.057}          & \textbf{0.054} & \multicolumn{1}{c|}{\textbf{0.051}} & 0.052          & \multicolumn{1}{c|}{0.053}          & \textbf{0.051} & \multicolumn{1}{c|}{\textbf{0.050}} & 0.051          \\
\multicolumn{1}{|c|}{}                                                          & violations                                                & \multicolumn{1}{c|}{{\ul 0.152}}    & 0.545          & \multicolumn{1}{c|}{{\ul 0.166}}    & 0.622          & \multicolumn{1}{c|}{{\ul 0.084}}    & 0.486          & \multicolumn{1}{c|}{{\ul 0.021}}    & 0.163          \\ \hline
\multicolumn{1}{|c|}{\multirow{2}{*}{Segment-14}}                               & gap                                                       & \multicolumn{1}{c|}{\textbf{0.050}} & 0.058          & \multicolumn{1}{c|}{\textbf{0.047}} & 0.050          & \multicolumn{1}{c|}{\textbf{0.048}} & 0.050          & \multicolumn{1}{c|}{\textbf{0.051}} & 0.053          \\
\multicolumn{1}{|c|}{}                                                          & violations                                                & \multicolumn{1}{c|}{{\ul 0.134}}    & 0.507          & \multicolumn{1}{c|}{{\ul 0.088}}    & 0.414          & \multicolumn{1}{c|}{{\ul 0.066}}    & 0.394          & \multicolumn{1}{c|}{{\ul 0.046}}    & 0.207          \\ \hline
\multicolumn{1}{|c|}{\multirow{2}{*}{Segment-15}}                               & gap                                                       & \multicolumn{1}{c|}{0.051}          & 0.051          & \multicolumn{1}{c|}{\textbf{0.050}} & 0.053          & \multicolumn{1}{c|}{0.052}          & \textbf{0.051} & \multicolumn{1}{c|}{\textbf{0.051}} & 0.054          \\
\multicolumn{1}{|c|}{}                                                          & violations                                                & \multicolumn{1}{c|}{{\ul 0.060}}    & 0.676          & \multicolumn{1}{c|}{{\ul 0.060}}    & 0.360          & \multicolumn{1}{c|}{{\ul 0.049}}    & 0.274          & \multicolumn{1}{c|}{{\ul 0.001}}    & 0.086          \\ \hline
\multicolumn{1}{|c|}{\multirow{2}{*}{Grids-17}}                                 & gap                                                       & \multicolumn{1}{c|}{\textbf{0.072}} & 0.086          & \multicolumn{1}{c|}{\textbf{0.069}} & 0.079          & \multicolumn{1}{c|}{\textbf{0.066}} & 0.093          & \multicolumn{1}{c|}{\textbf{0.063}} & 0.087          \\
\multicolumn{1}{|c|}{}                                                          & violations                                                & \multicolumn{1}{c|}{{\ul 0.035}}    & 0.043          & \multicolumn{1}{c|}{{\ul 0.013}}    & 0.003          & \multicolumn{1}{c|}{0.002}          & {\ul 0.001}    & \multicolumn{1}{c|}{{\ul 0.001}}    & 0.014          \\ \hline
\multicolumn{1}{|c|}{\multirow{2}{*}{Grids-18}}                                 & gap                                                       & \multicolumn{1}{c|}{\textbf{0.064}} & 0.105          & \multicolumn{1}{c|}{\textbf{0.067}} & 0.074          & \multicolumn{1}{c|}{\textbf{0.060}} & 0.093          & \multicolumn{1}{c|}{\textbf{0.065}} & 0.118          \\
\multicolumn{1}{|c|}{}                                                          & violations                                                & \multicolumn{1}{c|}{{\ul 0.017}}    & 0.060          & \multicolumn{1}{c|}{0.019}          & {\ul 0.001}    & \multicolumn{1}{c|}{{\ul 0.000}}    & 0.003          & \multicolumn{1}{c|}{{\ul 0.000}}    & 0.005          \\ \hline
\multicolumn{1}{|c|}{\multirow{2}{*}{DNA}}                                      & gap                                                       & \multicolumn{1}{c|}{\textbf{0.138}} & 0.140          & \multicolumn{1}{c|}{\textbf{0.138}} & 0.141          & \multicolumn{1}{c|}{\textbf{0.137}} & 0.139          & \multicolumn{1}{c|}{\textbf{0.139}} & 0.143          \\
\multicolumn{1}{|c|}{}                                                          & violations                                                & \multicolumn{1}{c|}{{\ul 0.013}}    & 0.048          & \multicolumn{1}{c|}{{\ul 0.002}}    & 0.062          & \multicolumn{1}{c|}{{\ul 0.001}}    & 0.003          & \multicolumn{1}{c|}{{\ul 0.001}}    & 0.004          \\ \hline
\multicolumn{1}{|c|}{\multirow{2}{*}{20NewsGr}}                                 & gap                                                       & \multicolumn{1}{c|}{0.043}          & 0.043          & \multicolumn{1}{c|}{\textbf{0.045}} & 0.046          & \multicolumn{1}{c|}{\textbf{0.044}} & 0.045          & \multicolumn{1}{c|}{\textbf{0.044}} & 0.046          \\
\multicolumn{1}{|c|}{}                                                          & violations                                                & \multicolumn{1}{c|}{{\ul 0.069}}    & 0.278          & \multicolumn{1}{c|}{{\ul 0.054}}    & 0.129          & \multicolumn{1}{c|}{{\ul 0.007}}    & 0.024          & \multicolumn{1}{c|}{0.001}          & 0.001          \\ \hline
\multicolumn{1}{|c|}{\multirow{2}{*}{WebKB}}                                    & gap                                                       & \multicolumn{1}{c|}{0.059}          & \textbf{0.058} & \multicolumn{1}{c|}{0.058}          & \textbf{0.057} & \multicolumn{1}{c|}{\textbf{0.056}} & 0.057          & \multicolumn{1}{c|}{\textbf{0.053}} & 0.057          \\
\multicolumn{1}{|c|}{}                                                          & violations                                                & \multicolumn{1}{c|}{{\ul 0.074}}    & 0.149          & \multicolumn{1}{c|}{{\ul 0.029}}    & 0.096          & \multicolumn{1}{c|}{{\ul 0.001}}    & 0.056          & \multicolumn{1}{c|}{{\ul 0.001}}    & 0.013          \\ \hline
\multicolumn{1}{|c|}{\multirow{2}{*}{BBC}}                                      & gap                                                       & \multicolumn{1}{c|}{\textbf{0.038}} & 0.041          & \multicolumn{1}{c|}{\textbf{0.043}} & 0.042          & \multicolumn{1}{c|}{0.042}          & \textbf{0.038} & \multicolumn{1}{c|}{0.040}          & \textbf{0.039} \\
\multicolumn{1}{|c|}{}                                                          & violations                                                & \multicolumn{1}{c|}{{\ul 0.074}}    & 0.336          & \multicolumn{1}{c|}{{\ul 0.067}}    & 0.160          & \multicolumn{1}{c|}{{\ul 0.056}}    & 0.149          & \multicolumn{1}{c|}{{\ul 0.002}}    & 0.029          \\ \hline
\multicolumn{1}{|c|}{\multirow{2}{*}{Ad}}                                       & gap                                                       & \multicolumn{1}{c|}{\textbf{0.129}} & 0.135          & \multicolumn{1}{c|}{\textbf{0.131}} & 0.140          & \multicolumn{1}{c|}{\textbf{0.130}} & 0.142          & \multicolumn{1}{c|}{\textbf{0.131}} & 0.134          \\
\multicolumn{1}{|c|}{}                                                          & violations                                                & \multicolumn{1}{c|}{{\ul 0.017}}    & 0.055          & \multicolumn{1}{c|}{{\ul 0.004}}    & 0.006          & \multicolumn{1}{c|}{{\ul 0.000}}    & 0.013          & \multicolumn{1}{c|}{{\ul 0.000}}    & 0.004          \\ \hline
\multicolumn{1}{|c|}{\multirow{2}{*}{Average}}                                  & gap                                                       & \multicolumn{1}{c|}{\textbf{0.070}} & 0.077          & \multicolumn{1}{c|}{\textbf{0.070}} & 0.073          & \multicolumn{1}{c|}{\textbf{0.069}} & 0.076          & \multicolumn{1}{c|}{\textbf{0.069}} & 0.078          \\
\multicolumn{1}{|c|}{}                                                          & violations                                                & \multicolumn{1}{c|}{{\ul 0.065}}    & 0.270          & \multicolumn{1}{c|}{{\ul 0.050}}    & 0.185          & \multicolumn{1}{c|}{{\ul 0.027}}    & 0.140          & \multicolumn{1}{c|}{{\ul 0.007}}    & 0.053          \\ \hline
\end{tabular}
\end{table}
}
% Please add the following required packages to your document preamble:
% \usepackage{multirow}
\begin{table*}[tbh]
\centering
\setlength{\tabcolsep}{1.15pt}

\caption{\label{tab:summary} Summary: best \sscmpe has significantly lower violations compared to \texttt{other} methods on all the problems and over all the chosen $q$ values. It has comparable gap to the other methods.}
\vspace{0.35cm}

\begin{tabular}{|cc|cc|cc|cc|cc|}
\hline
\multicolumn{2}{|c|}{q}                                                                                                                      & \multicolumn{2}{c|}{10}                              & \multicolumn{2}{c|}{30}                              & \multicolumn{2}{c|}{60}                              & \multicolumn{2}{c|}{90}                              \\ \hline
\multicolumn{1}{|c|}{\begin{tabular}[c]{@{}c@{}}Models\\ /Datasets\end{tabular}} & \begin{tabular}[c]{@{}c@{}}Gap\\ /Violations\end{tabular} & \multicolumn{1}{c|}{\begin{tabular}[c]{@{}c@{}}best\\ \small{\sscmpe}\end{tabular}}   & others         & \multicolumn{1}{c|}{\begin{tabular}[c]{@{}c@{}}best\\ \small{\sscmpe}\end{tabular}}   & others         & \multicolumn{1}{c|}{\begin{tabular}[c]{@{}c@{}}best\\ \small{\sscmpe}\end{tabular}}  & others         & \multicolumn{1}{c|}{\begin{tabular}[c]{@{}c@{}}best\\ \small{\sscmpe}\end{tabular}}     & others         \\ \hline
\multicolumn{1}{|c|}{\multirow{2}{*}{Segment-12}}                                & Gap                                                       & \multicolumn{1}{c|}{0.057}          & \textbf{0.054} & \multicolumn{1}{c|}{\textbf{0.051}} & 0.052          & \multicolumn{1}{c|}{0.053}          & \textbf{0.051} & \multicolumn{1}{c|}{\textbf{0.050}} & 0.051          \\
\multicolumn{1}{|c|}{}                                                           & Violations                                                & \multicolumn{1}{c|}{\textbf{0.152}} & 0.511          & \multicolumn{1}{c|}{\textbf{0.166}} & 0.500          & \multicolumn{1}{c|}{\textbf{0.084}} & 0.348          & \multicolumn{1}{c|}{\textbf{0.021}} & 0.131          \\ \hline
\multicolumn{1}{|c|}{\multirow{2}{*}{Segment-14}}                                & Gap                                                       & \multicolumn{1}{c|}{0.050}          & \textbf{0.049} & \multicolumn{1}{c|}{\textbf{0.047}} & 0.049          & \multicolumn{1}{c|}{0.048}          & 0.048          & \multicolumn{1}{c|}{0.051}          & 0.051          \\
\multicolumn{1}{|c|}{}                                                           & Violations                                                & \multicolumn{1}{c|}{\textbf{0.134}} & 0.507          & \multicolumn{1}{c|}{\textbf{0.088}} & 0.414          & \multicolumn{1}{c|}{\textbf{0.066}} & 0.394          & \multicolumn{1}{c|}{\textbf{0.046}} & 0.207          \\ \hline
\multicolumn{1}{|c|}{\multirow{2}{*}{Segment-15}}                                & Gap                                                       & \multicolumn{1}{c|}{0.051}          & 0.051          & \multicolumn{1}{c|}{\textbf{0.050}} & 0.051          & \multicolumn{1}{c|}{0.052}          & \textbf{0.051} & \multicolumn{1}{c|}{\textbf{0.051}} & 0.052          \\
\multicolumn{1}{|c|}{}                                                           & Violations                                                & \multicolumn{1}{c|}{\textbf{0.060}} & 0.570          & \multicolumn{1}{c|}{\textbf{0.060}} & 0.360          & \multicolumn{1}{c|}{\textbf{0.049}} & 0.248          & \multicolumn{1}{c|}{\textbf{0.001}} & 0.061          \\ \hline
\multicolumn{1}{|c|}{\multirow{2}{*}{Grids-17}}                                  & Gap                                                       & \multicolumn{1}{c|}{0.072}          & \textbf{0.054} & \multicolumn{1}{c|}{0.069}          & \textbf{0.057} & \multicolumn{1}{c|}{\textbf{0.066}} & 0.067          & \multicolumn{1}{c|}{0.063}          & \textbf{0.058} \\
\multicolumn{1}{|c|}{}                                                           & Violations                                                & \multicolumn{1}{c|}{\textbf{0.035}} & 0.043          & \multicolumn{1}{c|}{\textbf{0.013}} & 0.003          & \multicolumn{1}{c|}{\textbf{0.002}} & 0.001          & \multicolumn{1}{c|}{\textbf{0.001}} & 0.002          \\ \hline
\multicolumn{1}{|c|}{\multirow{2}{*}{Grids-18}}                                  & Gap                                                       & \multicolumn{1}{c|}{0.064}          & \textbf{0.056} & \multicolumn{1}{c|}{0.067}          & \textbf{0.060} & \multicolumn{1}{c|}{\textbf{0.060}} & 0.065          & \multicolumn{1}{c|}{0.065}          & \textbf{0.064} \\
\multicolumn{1}{|c|}{}                                                           & Violations                                                & \multicolumn{1}{c|}{\textbf{0.017}} & 0.060          & \multicolumn{1}{c|}{\textbf{0.019}} & 0.001          & \multicolumn{1}{c|}{\textbf{0.000}} & 0.003          & \multicolumn{1}{c|}{\textbf{0.000}} & 0.005          \\ \hline
\multicolumn{1}{|c|}{\multirow{2}{*}{DNA}}                                       & Gap                                                       & \multicolumn{1}{c|}{0.138}          & \textbf{0.135} & \multicolumn{1}{c|}{0.138}          & \textbf{0.136} & \multicolumn{1}{c|}{0.137}          & \textbf{0.136} & \multicolumn{1}{c|}{0.139}          & \textbf{0.137} \\
\multicolumn{1}{|c|}{}                                                           & Violations                                                & \multicolumn{1}{c|}{\textbf{0.013}} & 0.048          & \multicolumn{1}{c|}{\textbf{0.002}} & 0.062          & \multicolumn{1}{c|}{\textbf{0.001}} & 0.003          & \multicolumn{1}{c|}{\textbf{0.001}} & 0.004          \\ \hline
\multicolumn{1}{|c|}{\multirow{2}{*}{20NewsGr}}                                  & Gap                                                       & \multicolumn{1}{c|}{0.043}          & 0.043          & \multicolumn{1}{c|}{\textbf{0.045}} & 0.046          & \multicolumn{1}{c|}{\textbf{0.044}} & 0.045          & \multicolumn{1}{c|}{\textbf{0.044}} & 0.044          \\
\multicolumn{1}{|c|}{}                                                           & Violations                                                & \multicolumn{1}{c|}{\textbf{0.069}} & 0.278          & \multicolumn{1}{c|}{\textbf{0.054}} & 0.129          & \multicolumn{1}{c|}{\textbf{0.007}} & 0.024          & \multicolumn{1}{c|}{\textbf{0.001}} & 0.001          \\ \hline
\multicolumn{1}{|c|}{\multirow{2}{*}{WebKB}}                                     & Gap                                                       & \multicolumn{1}{c|}{0.059}          & \textbf{0.054} & \multicolumn{1}{c|}{0.058}          & \textbf{0.054} & \multicolumn{1}{c|}{0.056}          & \textbf{0.054} & \multicolumn{1}{c|}{\textbf{0.053}} & 0.054          \\
\multicolumn{1}{|c|}{}                                                           & Violations                                                & \multicolumn{1}{c|}{\textbf{0.074}} & 0.149          & \multicolumn{1}{c|}{\textbf{0.029}} & 0.096          & \multicolumn{1}{c|}{\textbf{0.001}} & 0.056          & \multicolumn{1}{c|}{\textbf{0.001}} & 0.013          \\ \hline
\multicolumn{1}{|c|}{\multirow{2}{*}{BBC}}                                       & Gap                                                       & \multicolumn{1}{c|}{0.038}          & \textbf{0.036} & \multicolumn{1}{c|}{0.043}          & \textbf{0.036} & \multicolumn{1}{c|}{0.042}          & \textbf{0.037} & \multicolumn{1}{c|}{0.040}          & \textbf{0.037} \\
\multicolumn{1}{|c|}{}                                                           & Violations                                                & \multicolumn{1}{c|}{\textbf{0.074}} & 0.336          & \multicolumn{1}{c|}{\textbf{0.067}} & 0.160          & \multicolumn{1}{c|}{\textbf{0.056}} & 0.149          & \multicolumn{1}{c|}{\textbf{0.002}} & 0.029          \\ \hline
\multicolumn{1}{|c|}{\multirow{2}{*}{Ad}}                                        & Gap                                                       & \multicolumn{1}{c|}{\textbf{0.129}} & 0.135          & \multicolumn{1}{c|}{\textbf{0.131}} & 0.140          & \multicolumn{1}{c|}{\textbf{0.130}} & 0.142          & \multicolumn{1}{c|}{\textbf{0.131}} & 0.134          \\
\multicolumn{1}{|c|}{}                                                           & Violations                                                & \multicolumn{1}{c|}{\textbf{0.017}} & 0.055          & \multicolumn{1}{c|}{\textbf{0.004}} & 0.006          & \multicolumn{1}{c|}{\textbf{0.000}} & 0.013          & \multicolumn{1}{c|}{\textbf{0.000}} & 0.004          \\ \hline
\multicolumn{1}{|c|}{\multirow{2}{*}{Average}}                                   & Gap                                                       & \multicolumn{1}{c|}{0.070}          & \textbf{0.067} & \multicolumn{1}{c|}{0.070}          & \textbf{0.068} & \multicolumn{1}{c|}{\textbf{0.069}} & 0.070          & \multicolumn{1}{c|}{0.069}          & \textbf{0.068} \\
\multicolumn{1}{|c|}{}                                                           & Violations                                                & \multicolumn{1}{c|}{\textbf{0.065}} & 0.256          & \multicolumn{1}{c|}{\textbf{0.050}} & 0.173          & \multicolumn{1}{c|}{\textbf{0.027}} & 0.124          & \multicolumn{1}{c|}{\textbf{0.007}} & 0.046          \\ \hline
\end{tabular}
\end{table*}

\begin{figure*}[ht!]
    \centering
    \begin{subfigure}{0.24\textwidth}
        \centering
        \includegraphics[width=\textwidth]{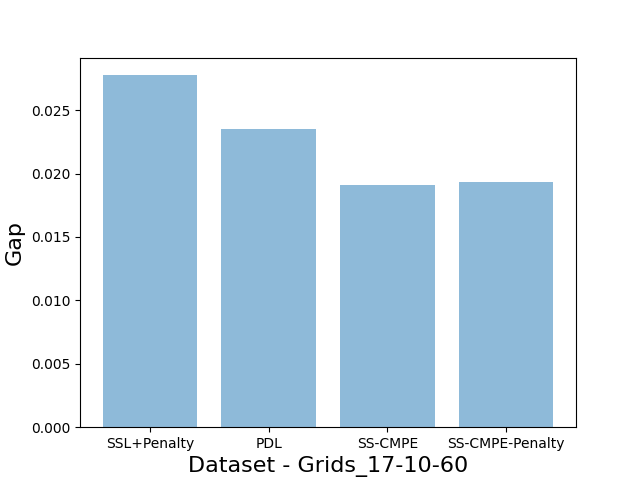}
        % \caption{Figure 1}
    \end{subfigure}
    \hfill
    \begin{subfigure}{0.24\textwidth}
        \centering
        \includegraphics[width=\textwidth]{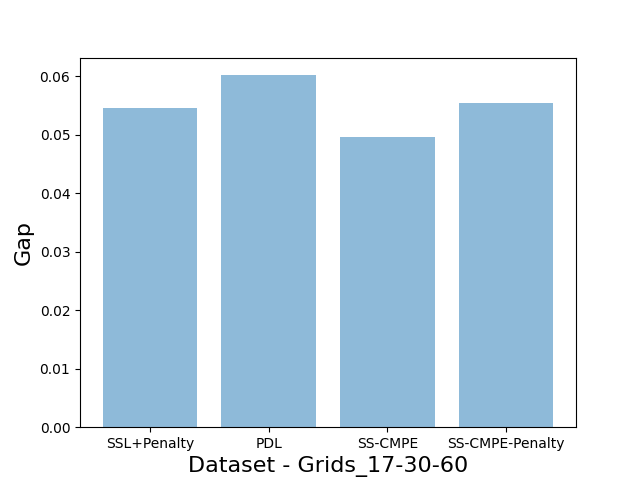}
        % \caption{Figure 2}
    \end{subfigure}
    \hfill
    \begin{subfigure}{0.24\textwidth}
        \centering
        \includegraphics[width=\textwidth]{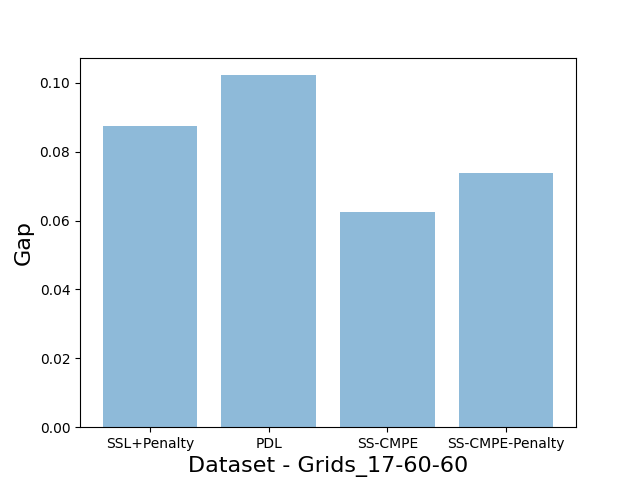}
        % \caption{Figure 3}
    \end{subfigure}
    \hfill
    \begin{subfigure}{0.24\textwidth}
        \centering
        \includegraphics[width=\textwidth]{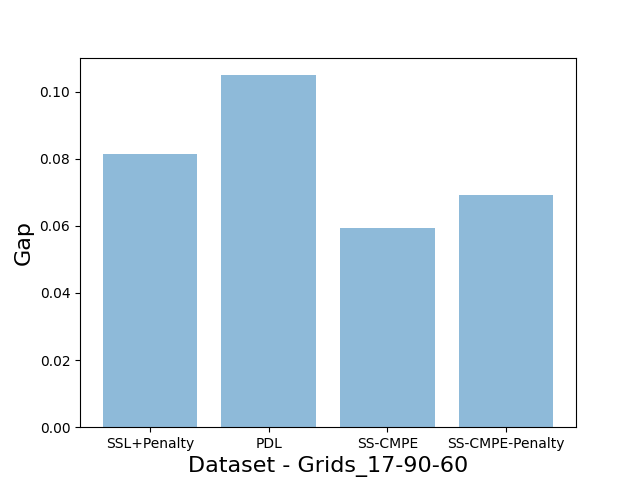}
        % \caption{Figure 4}
    \end{subfigure}
    
    \vspace{0.5cm} % Adjust the vertical spacing between rows
    
    \begin{subfigure}{0.24\textwidth}
        \centering
        \includegraphics[width=\textwidth]{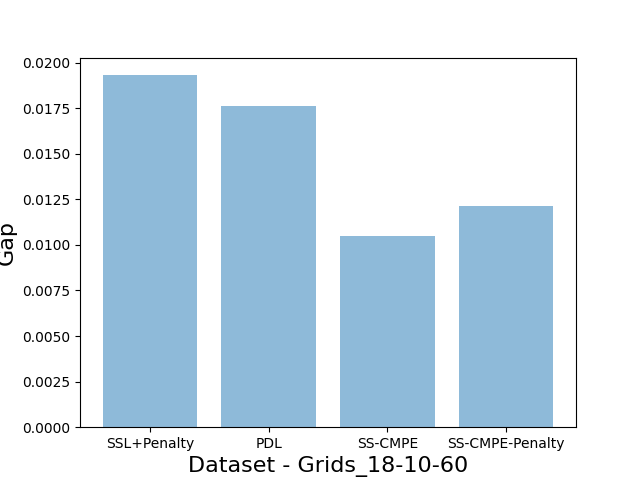}
        % \caption{Figure 5}
    \end{subfigure}
    \hfill
    \begin{subfigure}{0.24\textwidth}
        \centering
        \includegraphics[width=\textwidth]{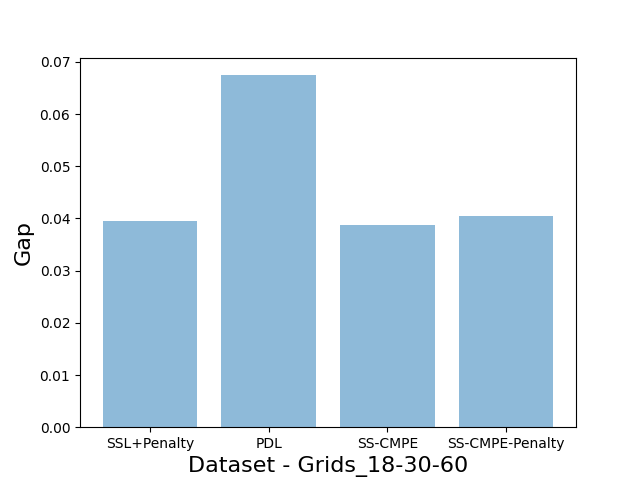}
        % \caption{Figure 6}
    \end{subfigure}
    \hfill
    \begin{subfigure}{0.24\textwidth}
        \centering
        \includegraphics[width=\textwidth]{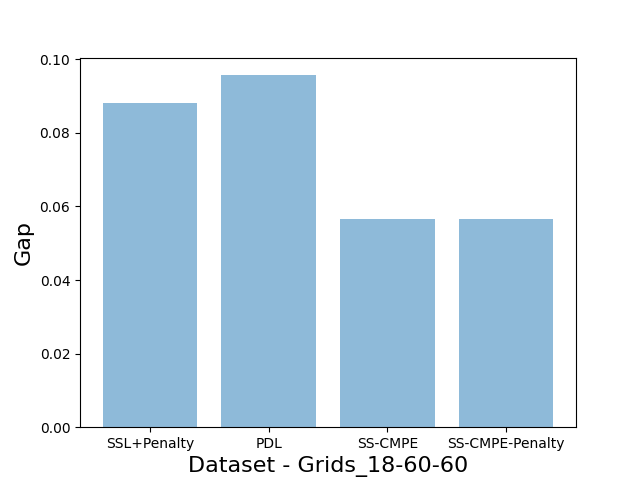}
        % \caption{Figure 7}
    \end{subfigure}
    \hfill
    \begin{subfigure}{0.24\textwidth}
        \centering
        \includegraphics[width=\textwidth]{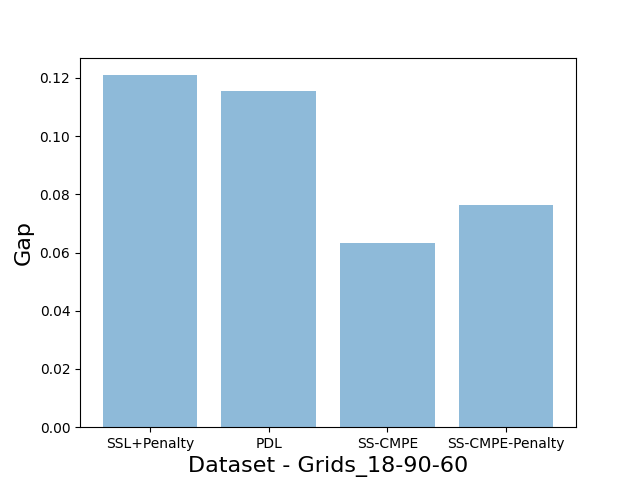}
        % \caption{Figure 8}
    \end{subfigure}
    
    \vspace{0.5cm} % Adjust the vertical spacing between rows
    
    \begin{subfigure}{0.24\textwidth}
        \centering
        \includegraphics[width=\textwidth]{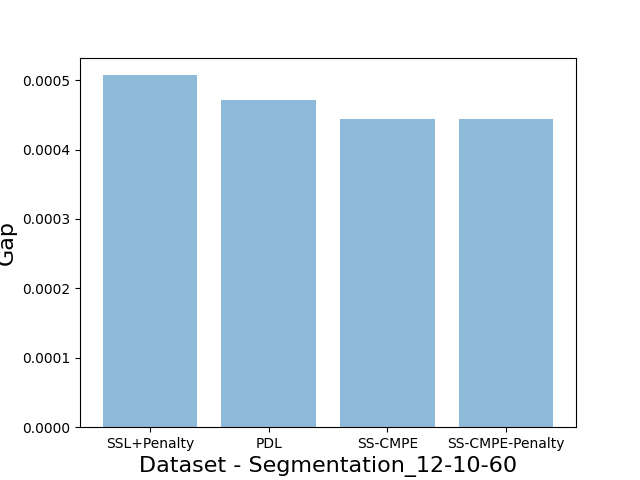}
        % \caption{Figure 5}
    \end{subfigure}
    \hfill
    \begin{subfigure}{0.24\textwidth}
        \centering
        \includegraphics[width=\textwidth]{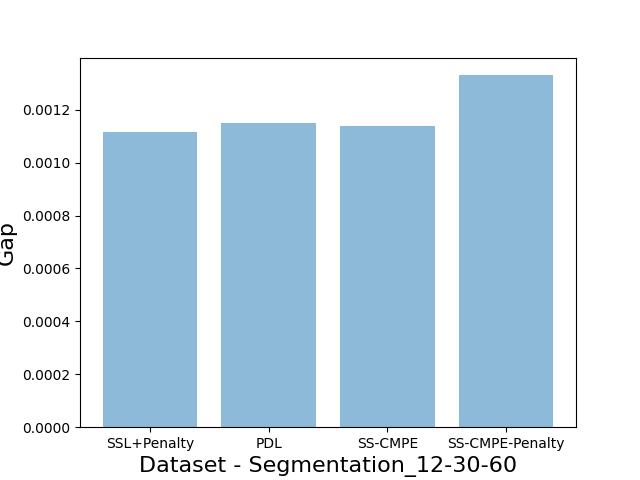}
        % \caption{Figure 6}
    \end{subfigure}
    \hfill
    \begin{subfigure}{0.24\textwidth}
        \centering
        \includegraphics[width=\textwidth]{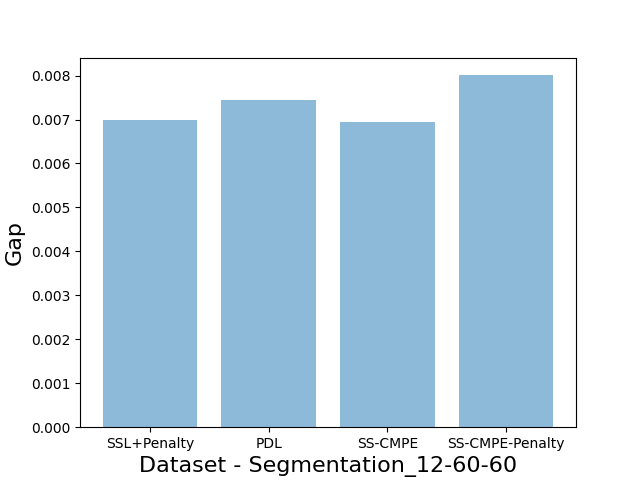}
        % \caption{Figure 7}
    \end{subfigure}
    \hfill
    \begin{subfigure}{0.24\textwidth}
        \centering
        \includegraphics[width=\textwidth]{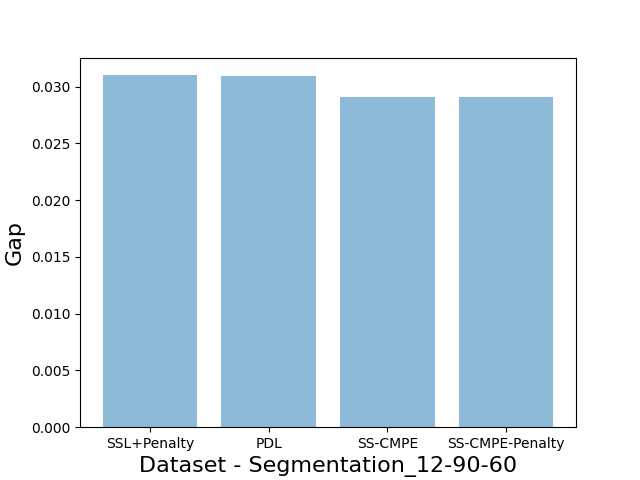}
        % \caption{Figure 8}
    \end{subfigure}
    
    \vspace{0.5cm} % Adjust the vertical spacing between rows
    
    \begin{subfigure}{0.24\textwidth}
        \centering
        \includegraphics[width=\textwidth]{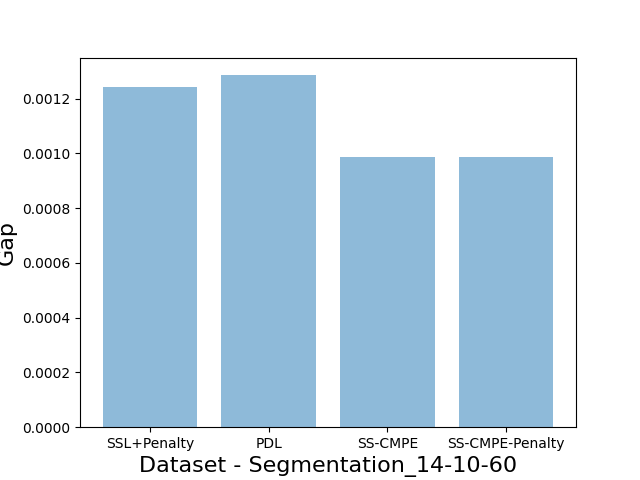}
        % \caption{Figure 5}
    \end{subfigure}
    \hfill
    \begin{subfigure}{0.24\textwidth}
        \centering
        \includegraphics[width=\textwidth]{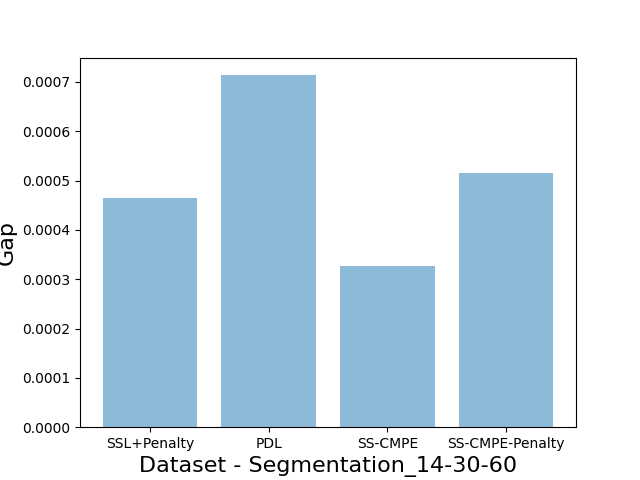}
        % \caption{Figure 6}
    \end{subfigure}
    \hfill
    \begin{subfigure}{0.24\textwidth}
        \centering
        \includegraphics[width=\textwidth]{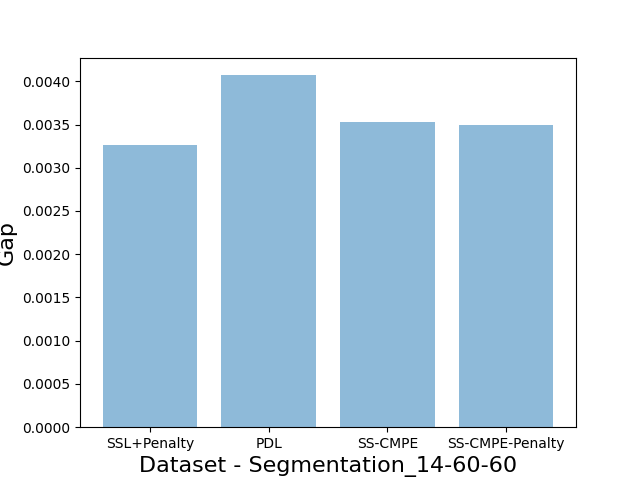}
        % \caption{Figure 7}
    \end{subfigure}
    \hfill
    \begin{subfigure}{0.24\textwidth}
        \centering
        \includegraphics[width=\textwidth]{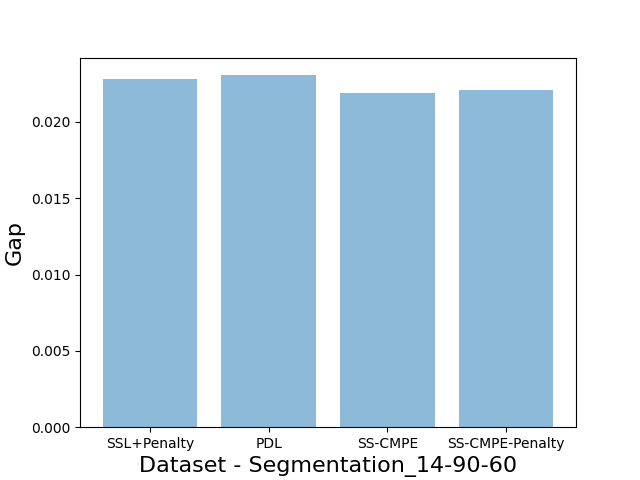}
        % \caption{Figure 8}
    \end{subfigure}
    
    \vspace{0.5cm} % Adjust the vertical spacing between rows
    
    \begin{subfigure}{0.24\textwidth}
        \centering
        \includegraphics[width=\textwidth]{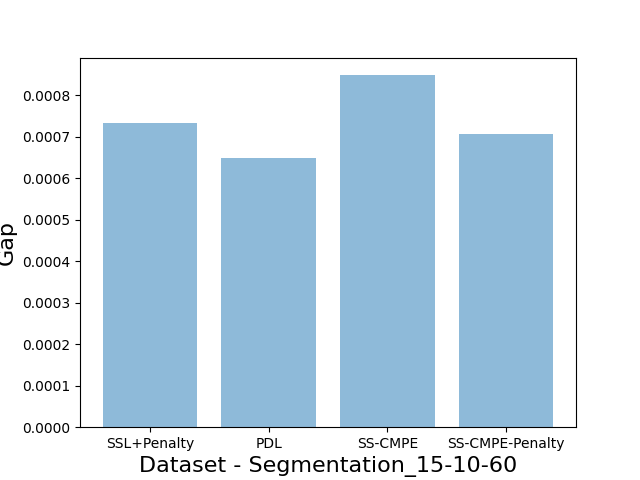}
        % \caption{Figure 5}
    \end{subfigure}
    \hfill
    \begin{subfigure}{0.24\textwidth}
        \centering
        \includegraphics[width=\textwidth]{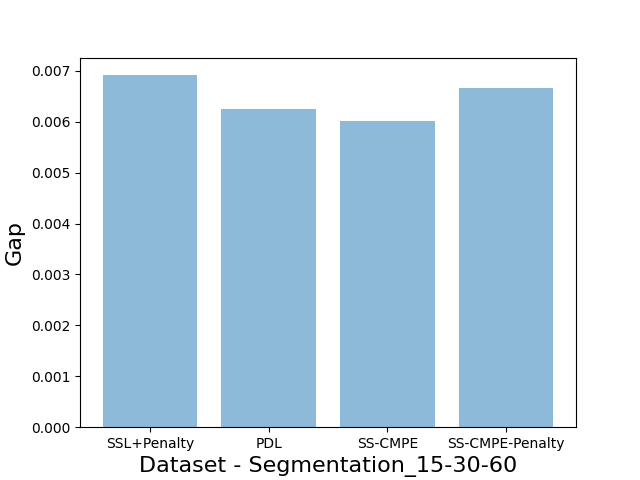}
        % \caption{Figure 6}
    \end{subfigure}
    \hfill
    \begin{subfigure}{0.24\textwidth}
        \centering
        \includegraphics[width=\textwidth]{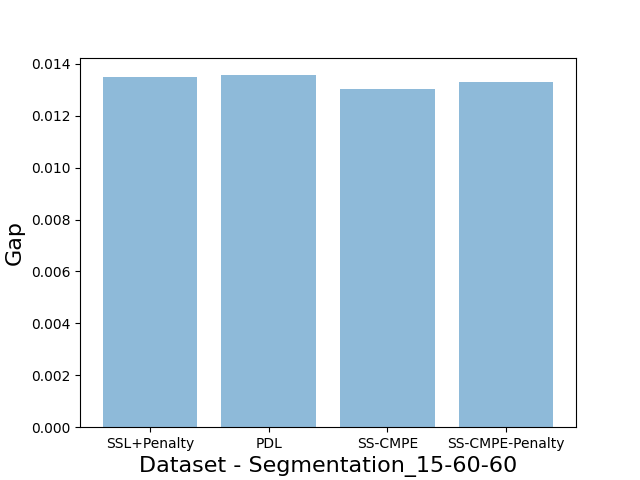}
        % \caption{Figure 7}
    \end{subfigure}
    \hfill
    \begin{subfigure}{0.24\textwidth}
        \centering
        \includegraphics[width=\textwidth]{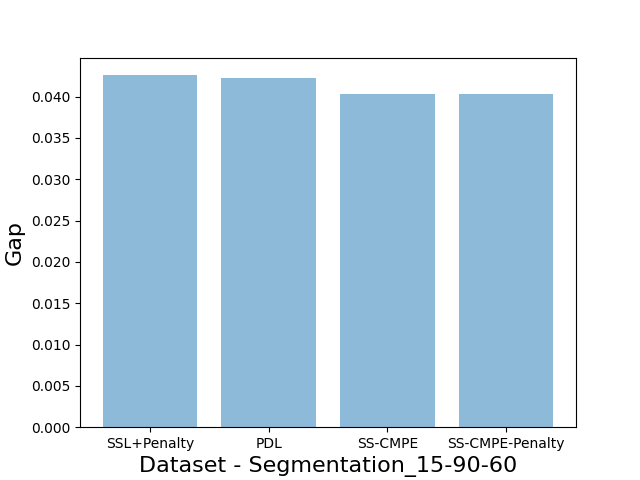}
        % \caption{Figure 8}
    \end{subfigure}
    
    \vspace{0.5cm} % Adjust the vertical spacing between rows
    
    % Repeat the pattern for the remaining rows and figures
    
    \caption{Illustration of the optimality gap for self-supervised methods (on feasible examples only) for all approaches. Lower is better.}
    \label{fig:supp_only_feasible}
\end{figure*}

\begin{table*}[tbh]
\centering
\caption{Average gap and constraint violations over test samples for models applied to the Segmentation12 Dataset for different q values. The plot displays the mean values of the average gap and constraint violations, with standard deviations denoted by ±}
\vspace{0.35cm}

\label{tab:sup-seg12-q}
\begin{tabular}{|cc|l|l|l|l|}
\hline
\multicolumn{2}{|c|}{q} & 10 & 30 & 60 & 90 \\ \hline
\multicolumn{2}{|c|}{ILP Obj} & 491.150 & 476.654 & 467.913 & 461.967 \\ \hline
\multicolumn{1}{|c|}{\multirow{2}{*}{\mae}} & Gap & 0.064 ± 0.051 & 0.061 ± 0.049 & 0.053 ± 0.042 & 0.052 ± 0.040 \\
\multicolumn{1}{|c|}{} & Violations & 0.569 ± 0.495 & 0.545 ± 0.498 & 0.430 ± 0.495 & 0.131 ± 0.337 \\ \hline
\multicolumn{1}{|c|}{\multirow{2}{*}{\mse}} & Gap & 0.054 ± 0.042 & 0.053 ± 0.042 & 0.051 ± 0.041 & 0.051 ± 0.041 \\
\multicolumn{1}{|c|}{} & Violations & 0.776 ± 0.417 & 0.580 ± 0.494 & 0.486 ± 0.500 & 0.186 ± 0.390 \\ \hline
\multicolumn{1}{|c|}{\multirow{2}{*}{\maepen}} & Gap & 0.064 ± 0.050 & 0.061 ± 0.049 & 0.059 ± 0.046 & 0.052 ± 0.043 \\
\multicolumn{1}{|c|}{} & Violations & 0.511 ± 0.500 & 0.500 ± 0.500 & 0.348 ± 0.476 & 0.140 ± 0.347 \\ \hline
\multicolumn{1}{|c|}{\multirow{2}{*}{\msepen}} & Gap & 0.054 ± 0.042 & 0.052 ± 0.041 & 0.051 ± 0.040 & 0.051 ± 0.042 \\
\multicolumn{1}{|c|}{} & Violations & 0.651 ± 0.477 & 0.505 ± 0.500 & 0.486 ± 0.500 & 0.186 ± 0.390 \\ \hline
\multicolumn{1}{|c|}{\multirow{2}{*}{\sslpen}} & Gap & 0.054 ± 0.043 & 0.052 ± 0.040 & 0.051 ± 0.041 & 0.051 ± 0.040 \\
\multicolumn{1}{|c|}{} & Violations & 0.790 ± 0.407 & 0.622 ± 0.485 & 0.486 ± 0.500 & 0.186 ± 0.390 \\ \hline
\multicolumn{1}{|c|}{\multirow{2}{*}{\pdl}} & Gap & 0.063 ± 0.050 & 0.052 ± 0.041 & 0.052 ± 0.041 & 0.051 ± 0.041 \\
\multicolumn{1}{|c|}{} & Violations & 0.545 ± 0.498 & 0.622 ± 0.485 & 0.517 ± 0.500 & 0.163 ± 0.369 \\ \hline
\multicolumn{1}{|c|}{\multirow{2}{*}{\sscmpe}} & Gap & 0.057 ± 0.044 & 0.051 ± 0.040 & 0.053 ± 0.043 & 0.050 ± 0.041 \\
\multicolumn{1}{|c|}{} & Violations & 0.503 ± 0.293 & 0.346 ± 0.485 & 0.257 ± 0.437 & 0.104 ± 0.305 \\ \hline
\multicolumn{1}{|c|}{\multirow{2}{*}{\sscmpepen}} & Gap & 0.058 ± 0.043 & 0.052 ± 0.040 & 0.053 ± 0.043 & 0.051 ± 0.041 \\
\multicolumn{1}{|c|}{} & Violations & 0.152 ± 0.359 & 0.166 ± 0.372 & 0.084 ± 0.277 & 0.021 ± 0.143 \\ \hline
\end{tabular}
\end{table*}

% Please add the following required packages to your document preamble:
% \usepackage{multirow}
\begin{table*}[!ht]
\centering
\caption{Average gap and constraint violations over test samples for models applied to the Segmentation14 Dataset for different q values. The plot displays the mean values of the average gap and constraint violations, with standard deviations denoted by ±.}
\vspace{0.35cm}

\label{tab:sup-seg14-q}
\begin{tabular}{|cc|c|c|c|c|}
\hline
\multicolumn{2}{|c|}{q} & 10 & 30 & 60 & 90 \\ \hline
\multicolumn{2}{|c|}{ILP Obj} & 493.647 & 482.837 & 476.145 & 470.485 \\ \hline
\multicolumn{1}{|c|}{\multirow{2}{*}{\mae}} & Gap & 0.067 ± 0.051 & 0.062 ± 0.048 & 0.062 ± 0.047 & 0.051 ± 0.040 \\
\multicolumn{1}{|c|}{} & Violations & 0.691 ± 0.462 & 0.623 ± 0.485 & 0.435 ± 0.496 & 0.271 ± 0.444 \\ \hline
\multicolumn{1}{|c|}{\multirow{2}{*}{\mse}} & Gap & 0.051 ± 0.039 & 0.049 ± 0.038 & 0.048 ± 0.037 & 0.053 ± 0.041 \\
\multicolumn{1}{|c|}{} & Violations & 0.810 ± 0.392 & 0.818 ± 0.386 & 0.606 ± 0.489 & 0.252 ± 0.434 \\ \hline
\multicolumn{1}{|c|}{\multirow{2}{*}{\maepen}} & Gap & 0.065 ± 0.051 & 0.061 ± 0.048 & 0.060 ± 0.046 & 0.054 ± 0.043 \\
\multicolumn{1}{|c|}{} & Violations & 0.693 ± 0.461 & 0.616 ± 0.487 & 0.410 ± 0.492 & 0.207 ± 0.405 \\ \hline
\multicolumn{1}{|c|}{\multirow{2}{*}{\msepen}} & Gap & 0.049 ± 0.039 & 0.049 ± 0.039 & 0.049 ± 0.037 & 0.054 ± 0.041 \\
\multicolumn{1}{|c|}{} & Violations & 0.810 ± 0.392 & 0.803 ± 0.397 & 0.601 ± 0.490 & 0.215 ± 0.411 \\ \hline
\multicolumn{1}{|c|}{\multirow{2}{*}{\sslpen}} & Gap & 0.061 ± 0.047 & 0.050 ± 0.038 & 0.050 ± 0.039 & 0.053 ± 0.042 \\
\multicolumn{1}{|c|}{} & Violations & 0.590 ± 0.492 & 0.618 ± 0.486 & 0.394 ± 0.489 & 0.207 ± 0.405 \\ \hline
\multicolumn{1}{|c|}{\multirow{2}{*}{\pdl}} & Gap & 0.058 ± 0.045 & 0.068 ± 0.051 & 0.056 ± 0.043 & 0.054 ± 0.043 \\
\multicolumn{1}{|c|}{} & Violations & 0.507 ± 0.500 & 0.414 ± 0.493 & 0.403 ± 0.491 & 0.207 ± 0.405 \\ \hline
\multicolumn{1}{|c|}{\multirow{2}{*}{\sscmpe}} & Gap & 0.050 ± 0.038 & 0.047 ± 0.037 & 0.048 ± 0.037 & 0.051 ± 0.040 \\
\multicolumn{1}{|c|}{} & Violations & 0.502 ± 0.295 & 0.444 ± 0.401 & 0.309 ± 0.497 & 0.150 ± 0.358 \\ \hline
\multicolumn{1}{|c|}{\multirow{2}{*}{\sscmpepen}} & Gap & 0.050 ± 0.039 & 0.048 ± 0.038 & 0.050 ± 0.037 & 0.052 ± 0.042 \\
\multicolumn{1}{|c|}{} & Violations & 0.134 ± 0.340 & 0.088 ± 0.284 & 0.066 ± 0.248 & 0.046 ± 0.211 \\ \hline
\end{tabular}
\end{table*}

% Please add the following required packages to your document preamble:
% \usepackage{multirow}
\begin{table*}[ht]
\centering
\caption{Average gap and constraint violations over test samples for models applied to the Segmentation15 Dataset for different q values. The plot displays the mean values of the average gap and constraint violations, with standard deviations denoted by ±.}
\vspace{0.35cm}

\label{tab:sup-seg15-q}
\begin{tabular}{|cc|c|c|c|c|}
\hline
\multicolumn{2}{|c|}{q} & 10 & 30 & 60 & 90 \\ \hline
\multicolumn{2}{|c|}{ILP Obj} & 531.436 & 520.647 & 516.797 & 514.276 \\ \hline
\multicolumn{1}{|c|}{\multirow{2}{*}{\mae}} & Gap & 0.053 ± 0.043 & 0.054 ± 0.043 & 0.053 ± 0.042 & 0.052 ± 0.041 \\
\multicolumn{1}{|c|}{} & Violations & 0.570 ± 0.495 & 0.417 ± 0.493 & 0.248 ± 0.432 & 0.061 ± 0.239 \\ \hline
\multicolumn{1}{|c|}{\multirow{2}{*}{\mse}} & Gap & 0.051 ± 0.038 & 0.052 ± 0.041 & 0.052 ± 0.040 & 0.053 ± 0.041 \\
\multicolumn{1}{|c|}{} & Violations & 0.833 ± 0.373 & 0.715 ± 0.452 & 0.450 ± 0.498 & 0.076 ± 0.265 \\ \hline
\multicolumn{1}{|c|}{\multirow{2}{*}{\maepen}} & Gap & 0.056 ± 0.043 & 0.054 ± 0.042 & 0.053 ± 0.041 & 0.053 ± 0.042 \\
\multicolumn{1}{|c|}{} & Violations & 0.616 ± 0.486 & 0.457 ± 0.498 & 0.265 ± 0.441 & 0.075 ± 0.263 \\ \hline
\multicolumn{1}{|c|}{\multirow{2}{*}{\msepen}} & Gap & 0.052 ± 0.040 & 0.051 ± 0.040 & 0.051 ± 0.040 & 0.052 ± 0.041 \\
\multicolumn{1}{|c|}{} & Violations & 0.833 ± 0.373 & 0.715 ± 0.452 & 0.450 ± 0.498 & 0.076 ± 0.265 \\ \hline
\multicolumn{1}{|c|}{\multirow{2}{*}{\sslpen}} & Gap & 0.053 ± 0.042 & 0.059 ± 0.047 & 0.052 ± 0.041 & 0.054 ± 0.043 \\
\multicolumn{1}{|c|}{} & Violations & 0.676 ± 0.468 & 0.360 ± 0.480 & 0.274 ± 0.446 & 0.097 ± 0.295 \\ \hline
\multicolumn{1}{|c|}{\multirow{2}{*}{\pdl}} & Gap & 0.051 ± 0.040 & 0.053 ± 0.043 & 0.051 ± 0.040 & 0.054 ± 0.043 \\
\multicolumn{1}{|c|}{} & Violations & 0.698 ± 0.459 & 0.461 ± 0.499 & 0.392 ± 0.488 & 0.086 ± 0.280 \\ \hline
\multicolumn{1}{|c|}{\multirow{2}{*}{\sscmpe}} & Gap & 0.051 ± 0.040 & 0.050 ± 0.041 & 0.052 ± 0.040 & 0.051 ± 0.040 \\
\multicolumn{1}{|c|}{} & Violations & 0.366 ± 0.474 & 0.298 ± 0.499 & 0.225 ± 0.417 & 0.059 ± 0.236 \\ \hline
\multicolumn{1}{|c|}{\multirow{2}{*}{\sscmpepen}} & Gap & 0.052 ± 0.040 & 0.052 ± 0.042 & 0.052 ± 0.041 & 0.051 ± 0.041 \\
\multicolumn{1}{|c|}{} & Violations & 0.060 ± 0.238 & 0.060 ± 0.238 & 0.049 ± 0.215 & 0.001 ± 0.032 \\ \hline
\end{tabular}
\end{table*}

% Please add the following required packages to your document preamble:
% \usepackage{multirow}
\begin{table*}[]
\centering
\caption{Average gap and constraint violations over test samples for models applied to the Grids17 Dataset for different q values. The plot displays the mean values of the average gap and constraint violations, with standard deviations denoted by ±.}
\vspace{0.35cm}

\label{tab:sup-grids17-q}
\begin{tabular}{|cc|c|c|c|c|}
\hline
\multicolumn{2}{|c|}{q} & 10 & 30 & 60 & 90 \\ \hline
\multicolumn{2}{|c|}{ILP Obj} & 2892.585191 & 2884.506703 & 2877.311872 & 2878.147272 \\ \hline
\multicolumn{1}{|c|}{\multirow{2}{*}{\mae}} & Gap & 0.119 ± 0.078 & 0.133 ± 0.088 & 0.125 ± 0.084 & 0.114 ± 0.078 \\
\multicolumn{1}{|c|}{} & Violations & 0.565 ± 0.496 & 0.376 ± 0.485 & 0.122 ± 0.328 & 0.020 ± 0.140 \\ \hline
\multicolumn{1}{|c|}{\multirow{2}{*}{\mse}} & Gap & 0.054 ± 0.041 & 0.057 ± 0.045 & 0.067 ± 0.054 & 0.075 ± 0.056 \\
\multicolumn{1}{|c|}{} & Violations & 0.314 ± 0.464 & 0.152 ± 0.359 & 0.044 ± 0.205 & 0.013 ± 0.111 \\ \hline
\multicolumn{1}{|c|}{\multirow{2}{*}{\maepen}} & Gap & 0.129 ± 0.081 & 0.144 ± 0.089 & 0.137 ± 0.087 & 0.127 ± 0.085 \\
\multicolumn{1}{|c|}{} & Violations & 0.534 ± 0.499 & 0.376 ± 0.485 & 0.142 ± 0.350 & 0.019 ± 0.137 \\ \hline
\multicolumn{1}{|c|}{\multirow{2}{*}{\msepen}} & Gap & 0.054 ± 0.041 & 0.059 ± 0.045 & 0.069 ± 0.054 & 0.058 ± 0.044 \\
\multicolumn{1}{|c|}{} & Violations & 0.304 ± 0.460 & 0.125 ± 0.331 & 0.044 ± 0.205 & 0.002 ± 0.045 \\ \hline
\multicolumn{1}{|c|}{\multirow{2}{*}{\sslpen}} & Gap & 0.104 ± 0.060 & 0.079 ± 0.056 & 0.093 ± 0.059 & 0.087 ± 0.058 \\
\multicolumn{1}{|c|}{} & Violations & 0.149 ± 0.357 & 0.013 ± 0.113 & 0.004 ± 0.067 & 0.026 ± 0.159 \\ \hline
\multicolumn{1}{|c|}{\multirow{2}{*}{\pdl}} & Gap & 0.086 ± 0.056 & 0.087 ± 0.058 & 0.109 ± 0.063 & 0.112 ± 0.062 \\
\multicolumn{1}{|c|}{} & Violations & 0.043 ± 0.202 & 0.003 ± 0.055 & 0.001 ± 0.022 & 0.014 ± 0.118 \\ \hline
\multicolumn{1}{|c|}{\multirow{2}{*}{\sscmpe}} & Gap & 0.072 ± 0.051 & 0.071 ± 0.052 & 0.066 ± 0.048 & 0.063 ± 0.049 \\
\multicolumn{1}{|c|}{} & Violations & 0.146 ± 0.353 & 0.032 ± 0.176 & 0.024 ± 0.152 & 0.001 ± 0.022 \\ \hline
\multicolumn{1}{|c|}{\multirow{2}{*}{\sscmpepen}} & Gap & 0.073 ± 0.052 & 0.069 ± 0.051 & 0.073 ± 0.052 & 0.066 ± 0.050 \\
\multicolumn{1}{|c|}{} & Violations & 0.035 ± 0.185 & 0.013 ± 0.113 & 0.002 ± 0.039 & 0.001 ± 0.022 \\ \hline
\end{tabular}
\end{table*}

% Please add the following required packages to your document preamble:
% \usepackage{multirow}
\begin{table*}[]
\centering
\caption{Average gap and constraint violations over test samples for models applied to the Grids18 Dataset for different q values. The plot displays the mean values of the average gap and constraint violations, with standard deviations denoted by ±.}
\vspace{0.35cm}

\label{tab:sup-grids18-q}
\begin{tabular}{|cc|c|c|c|c|}
\hline
\multicolumn{2}{|c|}{q} & 10 & 30 & 60 & 90 \\ \hline
\multicolumn{2}{|c|}{ILP Obj} & 4185.600003 & 4166.310635 & 4158.737261 & 4167.11386 \\ \hline
\multicolumn{1}{|c|}{\multirow{2}{*}{\mae}} & Gap & 0.138 ± 0.087 & 0.142 ± 0.091 & 0.122 ± 0.081 & 0.102 ± 0.073 \\
\multicolumn{1}{|c|}{} & Violations & 0.606 ± 0.489 & 0.389 ± 0.488 & 0.178 ± 0.383 & 0.019 ± 0.138 \\ \hline
\multicolumn{1}{|c|}{\multirow{2}{*}{\mse}} & Gap & 0.056 ± 0.044 & 0.060 ± 0.047 & 0.069 ± 0.056 & 0.064 ± 0.050 \\
\multicolumn{1}{|c|}{} & Violations & 0.332 ± 0.471 & 0.194 ± 0.395 & 0.178 ± 0.383 & 0.015 ± 0.122 \\ \hline
\multicolumn{1}{|c|}{\multirow{2}{*}{\maepen}} & Gap & 0.143 ± 0.087 & 0.154 ± 0.093 & 0.145 ± 0.092 & 0.114 ± 0.080 \\
\multicolumn{1}{|c|}{} & Violations & 0.551 ± 0.498 & 0.364 ± 0.481 & 0.172 ± 0.378 & 0.021 ± 0.145 \\ \hline
\multicolumn{1}{|c|}{\multirow{2}{*}{\msepen}} & Gap & 0.061 ± 0.047 & 0.064 ± 0.049 & 0.065 ± 0.050 & 0.133 ± 0.082 \\
\multicolumn{1}{|c|}{} & Violations & 0.210 ± 0.407 & 0.087 ± 0.282 & 0.025 ± 0.158 & 0.025 ± 0.158 \\ \hline
\multicolumn{1}{|c|}{\multirow{2}{*}{\sslpen}} & Gap & 0.115 ± 0.065 & 0.074 ± 0.055 & 0.093 ± 0.060 & 0.123 ± 0.065 \\
\multicolumn{1}{|c|}{} & Violations & 0.060 ± 0.238 & 0.182 ± 0.386 & 0.013 ± 0.115 & 0.005 ± 0.074 \\ \hline
\multicolumn{1}{|c|}{\multirow{2}{*}{\pdl}} & Gap & 0.105 ± 0.063 & 0.126 ± 0.067 & 0.101 ± 0.062 & 0.118 ± 0.064 \\
\multicolumn{1}{|c|}{} & Violations & 0.097 ± 0.295 & 0.001 ± 0.032 & 0.003 ± 0.050 & 0.005 ± 0.071 \\ \hline
\multicolumn{1}{|c|}{\multirow{2}{*}{\sscmpe}} & Gap & 0.064 ± 0.047 & 0.072 ± 0.053 & 0.060 ± 0.046 & 0.065 ± 0.049 \\
\multicolumn{1}{|c|}{} & Violations & 0.200 ± 0.400 & 0.029 ± 0.169 & 0.000 ± 0.000 & 0.001 ± 0.032 \\ \hline
\multicolumn{1}{|c|}{\multirow{2}{*}{\sscmpepen}} & Gap & 0.067 ± 0.051 & 0.067 ± 0.051 & 0.078 ± 0.055 & 0.075 ± 0.053 \\
\multicolumn{1}{|c|}{} & Violations & 0.017 ± 0.129 & 0.019 ± 0.137 & 0.002 ± 0.039 & 0.000 ± 0.000 \\ \hline
\end{tabular}
\end{table*}

% Please add the following required packages to your document preamble:
% \usepackage{multirow}
% Please add the following required packages to your document preamble:
% \usepackage{multirow}
% Please add the following required packages to your document preamble:
% \usepackage{multirow}

% Please add the following required packages to your document preamble:
% \usepackage{multirow}
\begin{table*}[]
\caption{Average gap and constraint violations over test samples for models applied to the AD Dataset for different q values. The plot displays the mean values of the average gap and constraint violations, with standard deviations denoted by ±. }
\vspace{0.35cm}

\label{tab:ad-q}
\centering
\begin{tabular}{|cc|c|c|c|c|}
\hline
\multicolumn{2}{|c|}{q}                                               & 10            & 30            & 60            & 90            \\ \hline
\multicolumn{2}{|c|}{ILP Obj} & 2535.424      & 2526.023      & 2521.957      & 2519.917      \\ \hline
\multicolumn{1}{|c|}{\multirow{2}{*}{\mae}}         & Gap             & 0.288 ± 0.061 & 0.276 ± 0.063 & 0.283 ± 0.061 & 0.270 ± 0.063 \\ \cline{2-6} 
\multicolumn{1}{|c|}{}                              & Violations      & 0.671 ± 0.470 & 0.457 ± 0.498 & 0.257 ± 0.437 & 0.046 ± 0.210 \\ \hline
\multicolumn{1}{|c|}{\multirow{2}{*}{\mse}}         & Gap             & 0.204 ± 0.061 & 0.201 ± 0.063 & 0.204 ± 0.061 & 0.213 ± 0.059 \\ \cline{2-6} 
\multicolumn{1}{|c|}{}                              & Violations      & 0.336 ± 0.472 & 0.063 ± 0.243 & 0.069 ± 0.254 & 0.013 ± 0.111 \\ \hline
\multicolumn{1}{|c|}{\multirow{2}{*}{\mae+Penalty}} & Gap             & 0.294 ± 0.062 & 0.290 ± 0.066 & 0.277 ± 0.061 & 0.274 ± 0.061 \\ \cline{2-6} 
\multicolumn{1}{|c|}{}                              & Violations      & 0.600 ± 0.490 & 0.468 ± 0.499 & 0.271 ± 0.444 & 0.039 ± 0.194 \\ \hline
\multicolumn{1}{|c|}{\multirow{2}{*}{\msepen}}      & Gap             & 0.216 ± 0.061 & 0.213 ± 0.063 & 0.220 ± 0.061 & 0.229 ± 0.060 \\ \cline{2-6} 
\multicolumn{1}{|c|}{}                              & Violations      & 0.085 ± 0.278 & 0.041 ± 0.197 & 0.021 ± 0.142 & 0.005 ± 0.074 \\ \hline
\multicolumn{1}{|c|}{\multirow{2}{*}{\sslpen}}         & Gap             & 0.135 ± 0.055 & 0.140 ± 0.057 & 0.142 ± 0.054 & 0.134 ± 0.055 \\ \cline{2-6} 
\multicolumn{1}{|c|}{}                              & Violations      & 0.244 ± 0.430 & 0.143 ± 0.350 & 0.054 ± 0.226 & 0.005 ± 0.074 \\ \hline
\multicolumn{1}{|c|}{\multirow{2}{*}{\pdl}}         & Gap             & 0.148 ± 0.056 & 0.152 ± 0.056 & 0.146 ± 0.055 & 0.139 ± 0.054 \\ \cline{2-6} 
\multicolumn{1}{|c|}{}                              & Violations      & 0.055 ± 0.228 & 0.006 ± 0.080 & 0.013 ± 0.113 & 0.004 ± 0.063 \\ \hline
\multicolumn{1}{|c|}{\multirow{2}{*}{\sscmpe}}      & Gap             & 0.135 ± 0.055 & 0.131 ± 0.057 & 0.131 ± 0.055 & 0.131 ± 0.054 \\ \cline{2-6} 
\multicolumn{1}{|c|}{}                              & Violations      & 0.102 ± 0.302 & 0.025 ± 0.155 & 0.005 ± 0.071 & 0.003 ± 0.055 \\ \hline
\multicolumn{1}{|c|}{\multirow{2}{*}{\sscmpepen}}   & Gap             & 0.129 ± 0.054 & 0.136 ± 0.057 & 0.130 ± 0.054 & 0.133 ± 0.054 \\ \cline{2-6} 
\multicolumn{1}{|c|}{}                              & Violations      & 0.017 ± 0.127 & 0.004 ± 0.063 & 0.000 ± 0.000 & 0.000 ± 0.000 \\ \hline
\end{tabular}
\end{table*}
% Please add the following required packages to your document preamble:
% \usepackage{multirow}
\begin{table*}[]
\centering
\caption{Average gap and constraint violations over test samples for models applied to the BBC Dataset for different q values. The plot displays the mean values of the average gap and constraint violations, with standard deviations denoted by ±.}
\vspace{0.35cm}

\label{bbc-q}
\begin{tabular}{|cc|c|c|c|c|}
\hline
\multicolumn{2}{|c|}{q}                                     & 10            & 30            & 60            & 90            \\ \hline
\multicolumn{2}{|c|}{ILP Obj} & 890.289       & 880.270       & 875.668       & 872.118       \\ \hline
\multicolumn{1}{|c|}{\multirow{2}{*}{\mae}}       & Gap             & 0.053 ± 0.037 & 0.046 ± 0.034 & 0.043 ± 0.032 & 0.045 ± 0.034 \\ \cline{2-6} 
\multicolumn{1}{|c|}{}                            & Violations      & 0.779 ± 0.415 & 0.624 ± 0.485 & 0.414 ± 0.493 & 0.165 ± 0.371 \\ \hline
\multicolumn{1}{|c|}{\multirow{2}{*}{\mse}}       & Gap             & 0.036 ± 0.027 & 0.036 ± 0.028 & 0.037 ± 0.028 & 0.037 ± 0.029 \\ \cline{2-6} 
\multicolumn{1}{|c|}{}                            & Violations      & 0.924 ± 0.265 & 0.854 ± 0.354 & 0.578 ± 0.494 & 0.204 ± 0.403 \\ \hline
\multicolumn{1}{|c|}{\multirow{2}{*}{\maepen}}    & Gap             & 0.047 ± 0.036 & 0.044 ± 0.034 & 0.041 ± 0.031 & 0.040 ± 0.031 \\ \cline{2-6} 
\multicolumn{1}{|c|}{}                            & Violations      & 0.657 ± 0.475 & 0.557 ± 0.497 & 0.384 ± 0.486 & 0.151 ± 0.358 \\ \hline
\multicolumn{1}{|c|}{\multirow{2}{*}{\msepen}}    & Gap             & 0.036 ± 0.028 & 0.036 ± 0.028 & 0.037 ± 0.030 & 0.037 ± 0.029 \\ \cline{2-6} 
\multicolumn{1}{|c|}{}                            & Violations      & 0.919 ± 0.272 & 0.854 ± 0.354 & 0.578 ± 0.494 & 0.204 ± 0.403 \\ \hline
\multicolumn{1}{|c|}{\multirow{2}{*}{\sslpen}}       & Gap             & 0.041 ± 0.032 & 0.042 ± 0.032 & 0.038 ± 0.030 & 0.039 ± 0.030 \\ \cline{2-6} 
\multicolumn{1}{|c|}{}                            & Violations      & 0.516 ± 0.500 & 0.393 ± 0.488 & 0.408 ± 0.492 & 0.130 ± 0.336 \\ \hline
\multicolumn{1}{|c|}{\multirow{2}{*}{\pdl}}       & Gap             & 0.043 ± 0.033 & 0.051 ± 0.036 & 0.045 ± 0.034 & 0.044 ± 0.033 \\ \cline{2-6} 
\multicolumn{1}{|c|}{}                            & Violations      & 0.336 ± 0.472 & 0.160 ± 0.366 & 0.149 ± 0.356 & 0.029 ± 0.169 \\ \hline
\multicolumn{1}{|c|}{\multirow{2}{*}{\sscmpe}}    & Gap             & 0.038 ± 0.029 & 0.043 ± 0.032 & 0.042 ± 0.033 & 0.040 ± 0.031 \\ \cline{2-6} 
\multicolumn{1}{|c|}{}                            & Violations      & 0.316 ± 0.495 & 0.239 ± 0.427 & 0.108 ± 0.310 & 0.044 ± 0.206 \\ \hline
\multicolumn{1}{|c|}{\multirow{2}{*}{\sscmpepen}} & Gap             & 0.038 ± 0.030 & 0.043 ± 0.032 & 0.044 ± 0.033 & 0.040 ± 0.031 \\ \cline{2-6} 
\multicolumn{1}{|c|}{}                            & Violations      & 0.074 ± 0.263 & 0.067 ± 0.250 & 0.056 ± 0.229 & 0.002 ± 0.045 \\ \hline
\end{tabular}
\end{table*}
% Please add the following required packages to your document preamble:
% \usepackage{multirow}
\begin{table*}[]
\centering
\caption{Average gap and constraint violations over test samples for models applied to the 20 Newsgroup Dataset for different q values. The plot displays the mean values of the average gap and constraint violations, with standard deviations denoted by ±.}
\vspace{0.35cm}

\label{c20ng-q}
\begin{tabular}{|cc|c|c|c|c|}
\hline
\multicolumn{2}{|c|}{q}                                     & 10            & 30            & 60            & 90            \\ \hline
\multicolumn{2}{|c|}{ILP Obj} & 928.386       & 924.439       & 923.173       & 921.754       \\ \hline
\multicolumn{1}{|c|}{\multirow{2}{*}{\mae}}       & Gap             & 0.044 ± 0.034 & 0.046 ± 0.036 & 0.047 ± 0.035 & 0.048 ± 0.037 \\ \cline{2-6} 
\multicolumn{1}{|c|}{}                            & Violations      & 0.470 ± 0.499 & 0.176 ± 0.381 & 0.049 ± 0.215 & 0.001 ± 0.022 \\ \hline
\multicolumn{1}{|c|}{\multirow{2}{*}{\mse}}       & Gap             & 0.050 ± 0.038 & 0.053 ± 0.039 & 0.051 ± 0.038 & 0.051 ± 0.037 \\ \cline{2-6} 
\multicolumn{1}{|c|}{}                            & Violations      & 0.639 ± 0.480 & 0.403 ± 0.491 & 0.142 ± 0.349 & 0.008 ± 0.089 \\ \hline
\multicolumn{1}{|c|}{\multirow{2}{*}{\maepen}}    & Gap             & 0.044 ± 0.035 & 0.047 ± 0.036 & 0.047 ± 0.036 & 0.047 ± 0.035 \\ \cline{2-6} 
\multicolumn{1}{|c|}{}                            & Violations      & 0.455 ± 0.498 & 0.181 ± 0.386 & 0.046 ± 0.210 & 0.001 ± 0.022 \\ \hline
\multicolumn{1}{|c|}{\multirow{2}{*}{\msepen}}    & Gap             & 0.046 ± 0.036 & 0.047 ± 0.035 & 0.046 ± 0.036 & 0.044 ± 0.034 \\ \cline{2-6} 
\multicolumn{1}{|c|}{}                            & Violations      & 0.573 ± 0.495 & 0.384 ± 0.486 & 0.161 ± 0.367 & 0.015 ± 0.122 \\ \hline
\multicolumn{1}{|c|}{\multirow{2}{*}{\sslpen}}    & Gap             & 0.045 ± 0.036 & 0.046 ± 0.036 & 0.045 ± 0.035 & 0.046 ± 0.035 \\ \cline{2-6} 
\multicolumn{1}{|c|}{}                            & Violations      & 0.386 ± 0.487 & 0.139 ± 0.346 & 0.024 ± 0.152 & 0.002 ± 0.039 \\ \hline
\multicolumn{1}{|c|}{\multirow{2}{*}{\pdl}}       & Gap             & 0.043 ± 0.035 & 0.046 ± 0.036 & 0.046 ± 0.036 & 0.046 ± 0.036 \\ \cline{2-6} 
\multicolumn{1}{|c|}{}                            & Violations      & 0.278 ± 0.448 & 0.129 ± 0.335 & 0.028 ± 0.165 & 0.001 ± 0.032 \\ \hline
\multicolumn{1}{|c|}{\multirow{2}{*}{\sscmpe}}    & Gap             & 0.043 ± 0.034 & 0.045 ± 0.035 & 0.044 ± 0.034 & 0.044 ± 0.034 \\ \cline{2-6} 
\multicolumn{1}{|c|}{}                            & Violations      & 0.317 ± 0.465 & 0.086 ± 0.280 & 0.019 ± 0.137 & 0.001 ± 0.032 \\ \hline
\multicolumn{1}{|c|}{\multirow{2}{*}{\sscmpepen}} & Gap             & 0.044 ± 0.033 & 0.045 ± 0.035 & 0.046 ± 0.035 & 0.045 ± 0.034 \\ \cline{2-6} 
\multicolumn{1}{|c|}{}                            & Violations      & 0.069 ± 0.254 & 0.054 ± 0.227 & 0.007 ± 0.083 & 0.001 ± 0.022 \\ \hline
\end{tabular}
\end{table*}
% Please add the following required packages to your document preamble:
% \usepackage{multirow}
\begin{table*}[]
\centering
\caption{Average gap and constraint violations over test samples for models applied to the Webkb Dataset for different q values. The plot displays the mean values of the average gap and constraint violations, with standard deviations denoted by ±.}
\vspace{0.35cm}

\label{cwebkb-q}
\begin{tabular}{|cc|c|c|c|c|}
\hline
\multicolumn{2}{|c|}{q}                                     & 10            & 30            & 60            & 90            \\ \hline
\multicolumn{2}{|c|}{ILP Obj} & 828.463       & 824.361       & 825.517       & 823.917       \\ \hline
\multicolumn{1}{|c|}{\multirow{2}{*}{\mae}}       & Gap             & 0.057 ± 0.043 & 0.057 ± 0.043 & 0.058 ± 0.043 & 0.062 ± 0.044 \\ \cline{2-6} 
\multicolumn{1}{|c|}{}                            & Violations      & 0.613 ± 0.487 & 0.502 ± 0.500 & 0.249 ± 0.433 & 0.046 ± 0.210 \\ \hline
\multicolumn{1}{|c|}{\multirow{2}{*}{\mse}}       & Gap             & 0.065 ± 0.046 & 0.069 ± 0.046 & 0.066 ± 0.045 & 0.065 ± 0.045 \\ \cline{2-6} 
\multicolumn{1}{|c|}{}                            & Violations      & 0.695 ± 0.461 & 0.473 ± 0.499 & 0.210 ± 0.407 & 0.018 ± 0.133 \\ \hline
\multicolumn{1}{|c|}{\multirow{2}{*}{\maepen}}    & Gap             & 0.058 ± 0.042 & 0.058 ± 0.042 & 0.059 ± 0.043 & 0.061 ± 0.044 \\ \cline{2-6} 
\multicolumn{1}{|c|}{}                            & Violations      & 0.471 ± 0.499 & 0.395 ± 0.489 & 0.211 ± 0.408 & 0.041 ± 0.197 \\ \hline
\multicolumn{1}{|c|}{\multirow{2}{*}{\msepen}}    & Gap             & 0.054 ± 0.042 & 0.054 ± 0.041 & 0.054 ± 0.040 & 0.054 ± 0.040 \\ \cline{2-6} 
\multicolumn{1}{|c|}{}                            & Violations      & 0.584 ± 0.493 & 0.378 ± 0.485 & 0.174 ± 0.380 & 0.018 ± 0.131 \\ \hline
\multicolumn{1}{|c|}{\multirow{2}{*}{\sslpen}}    & Gap             & 0.058 ± 0.043 & 0.057 ± 0.041 & 0.057 ± 0.042 & 0.057 ± 0.042 \\ \cline{2-6} 
\multicolumn{1}{|c|}{}                            & Violations      & 0.360 ± 0.480 & 0.217 ± 0.413 & 0.082 ± 0.274 & 0.015 ± 0.120 \\ \hline
\multicolumn{1}{|c|}{\multirow{2}{*}{\pdl}}       & Gap             & 0.063 ± 0.045 & 0.060 ± 0.044 & 0.058 ± 0.042 & 0.057 ± 0.042 \\ \cline{2-6} 
\multicolumn{1}{|c|}{}                            & Violations      & 0.149 ± 0.357 & 0.096 ± 0.295 & 0.056 ± 0.229 & 0.013 ± 0.115 \\ \hline
\multicolumn{1}{|c|}{\multirow{2}{*}{\sscmpe}}    & Gap             & 0.059 ± 0.043 & 0.058 ± 0.043 & 0.056 ± 0.042 & 0.056 ± 0.042 \\ \cline{2-6} 
\multicolumn{1}{|c|}{}                            & Violations      & 0.169 ± 0.374 & 0.050 ± 0.218 & 0.026 ± 0.159 & 0.004 ± 0.063 \\ \hline
\multicolumn{1}{|c|}{\multirow{2}{*}{\sscmpepen}} & Gap             & 0.062 ± 0.045 & 0.061 ± 0.044 & 0.057 ± 0.042 & 0.053 ± 0.040 \\ \cline{2-6} 
\multicolumn{1}{|c|}{}                            & Violations      & 0.074 ± 0.263 & 0.029 ± 0.169 & 0.001 ± 0.032 & 0.001 ± 0.022 \\ \hline
\end{tabular}
\end{table*}
% Please add the following required packages to your document preamble:
% \usepackage{multirow}
\begin{table*}[]
\centering
\caption{Average gap and constraint violations over test samples for models applied to the DNA Dataset for different q values. The plot displays the mean values of the average gap and constraint violations, with standard deviations denoted by ±.}
\vspace{0.35cm}

\label{dna-q}
\begin{tabular}{|cc|c|c|c|c|}
\hline
\multicolumn{2}{|c|}{q}                                     & 10            & 30            & 60            & 90            \\ \hline
\multicolumn{2}{|c|}{ILP Obj} & 222.848       & 221.635       & 221.114       & 220.625       \\ \hline
\multicolumn{1}{|c|}{\multirow{2}{*}{\mae}}       & Gap             & 0.138 ± 0.109 & 0.142 ± 0.109 & 0.136 ± 0.109 & 0.141 ± 0.111 \\ \cline{2-6} 
\multicolumn{1}{|c|}{}                            & Violations      & 0.444 ± 0.497 & 0.448 ± 0.497 & 0.286 ± 0.452 & 0.114 ± 0.317 \\ \hline
\multicolumn{1}{|c|}{\multirow{2}{*}{\mse}}       & Gap             & 0.138 ± 0.112 & 0.140 ± 0.112 & 0.139 ± 0.111 & 0.139 ± 0.110 \\ \cline{2-6} 
\multicolumn{1}{|c|}{}                            & Violations      & 0.506 ± 0.500 & 0.565 ± 0.496 & 0.322 ± 0.467 & 0.113 ± 0.317 \\ \hline
\multicolumn{1}{|c|}{\multirow{2}{*}{\maepen}}    & Gap             & 0.140 ± 0.111 & 0.136 ± 0.106 & 0.143 ± 0.111 & 0.137 ± 0.113 \\ \cline{2-6} 
\multicolumn{1}{|c|}{}                            & Violations      & 0.444 ± 0.497 & 0.448 ± 0.497 & 0.286 ± 0.452 & 0.114 ± 0.317 \\ \hline
\multicolumn{1}{|c|}{\multirow{2}{*}{\msepen}}    & Gap             & 0.135 ± 0.109 & 0.140 ± 0.112 & 0.141 ± 0.111 & 0.143 ± 0.115 \\ \cline{2-6} 
\multicolumn{1}{|c|}{}                            & Violations      & 0.434 ± 0.496 & 0.494 ± 0.500 & 0.281 ± 0.450 & 0.089 ± 0.285 \\ \hline
\multicolumn{1}{|c|}{\multirow{2}{*}{\sslpen}}       & Gap             & 0.140 ± 0.115 & 0.141 ± 0.111 & 0.146 ± 0.116 & 0.143 ± 0.118 \\ \cline{2-6} 
\multicolumn{1}{|c|}{}                            & Violations      & 0.048 ± 0.214 & 0.062 ± 0.241 & 0.014 ± 0.118 & 0.004 ± 0.067 \\ \hline
\multicolumn{1}{|c|}{\multirow{2}{*}{\pdl}}       & Gap             & 0.140 ± 0.113 & 0.141 ± 0.113 & 0.139 ± 0.112 & 0.144 ± 0.120 \\ \cline{2-6} 
\multicolumn{1}{|c|}{}                            & Violations      & 0.287 ± 0.452 & 0.129 ± 0.335 & 0.003 ± 0.055 & 0.006 ± 0.077 \\ \hline
\multicolumn{1}{|c|}{\multirow{2}{*}{\sscmpe}}    & Gap             & 0.138 ± 0.113 & 0.138 ± 0.108 & 0.137 ± 0.106 & 0.139 ± 0.109 \\ \cline{2-6} 
\multicolumn{1}{|c|}{}                            & Violations      & 0.046 ± 0.210 & 0.017 ± 0.129 & 0.012 ± 0.109 & 0.008 ± 0.089 \\ \hline
\multicolumn{1}{|c|}{\multirow{2}{*}{\sscmpepen}} & Gap             & 0.139 ± 0.116 & 0.139 ± 0.113 & 0.140 ± 0.112 & 0.139 ± 0.113 \\ \cline{2-6} 
\multicolumn{1}{|c|}{}                            & Violations      & 0.013 ± 0.115 & 0.002 ± 0.045 & 0.001 ± 0.022 & 0.001 ± 0.022 \\ \hline
\end{tabular}
\end{table*}
\clearpage

\subsection{Optimality Gap And Violations in Self-Supervised Methods for Different q Values}

\begin{figure*}[htb]
    \centering
    \begin{subfigure}[b]{0.31\textwidth}
        \centering
        \includegraphics[width=\textwidth]{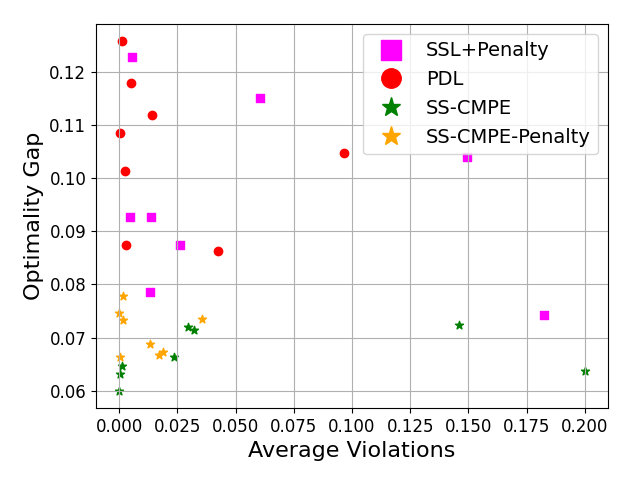}
        \caption{Optimality Gap (avg \%) and  Average Violations for Grids UAI networks}
    \end{subfigure}
        \hfill
    \begin{subfigure}[b]{0.31\textwidth}
        \centering
        \includegraphics[width=\textwidth]{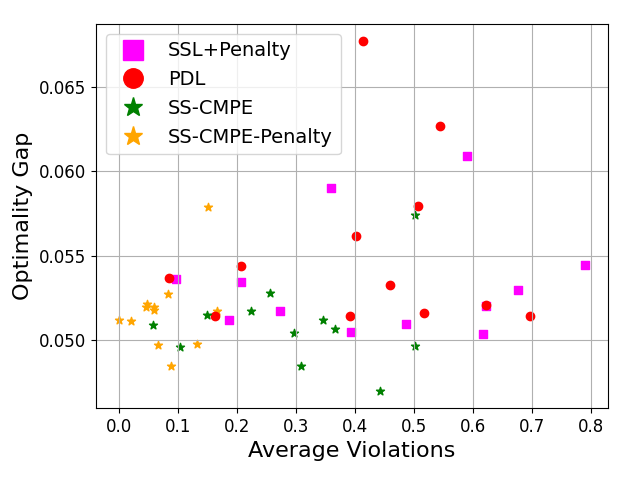}
        \caption{Opt. Gap (avg \%) and  Avg. Violations for Segmentation UAI networks}
    \end{subfigure}
    \hfill
    \begin{subfigure}[b]{0.31\textwidth}
        \centering
        \includegraphics[width=\textwidth]{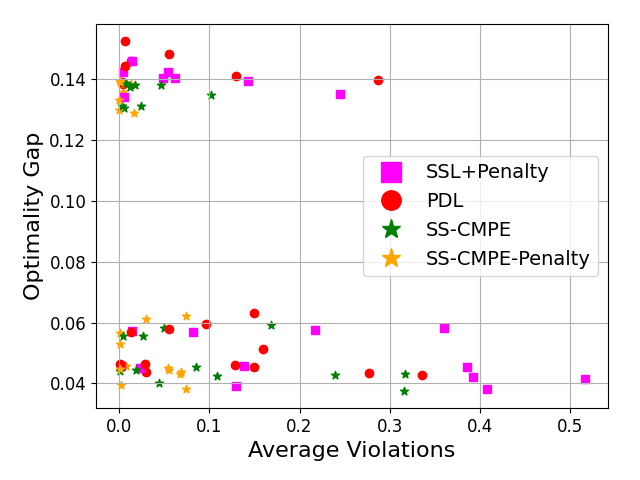}
        \caption{Optimality Gap (avg \%) and  Average Violations for Tractable Models}
    \end{subfigure}
    \caption{Visualization of Optimality Gap (average \%) and Average Violations for Self-Supervised Methods across different q values. Points closer to the origin indicate better performance.}
    \label{fig:supp_gap_vs_violations}
\end{figure*}

In the scatter plots depicted in Figure \ref{fig:supp_gap_vs_violations}, three distinct evaluations of the optimality gap against the average violations for various self-supervised methods across different q values are visualized. Points positioned closer to the origin indicate better performance, with reduced optimality gaps and fewer violations. In Figure \ref{fig:supp_gap_vs_violations}(a), focused on Grids UAI networks, the \sscmpepen method generally occupies a position near the origin, indicating its commendable performance in this setting. The \sscmpe method exhibits a comparable performance to the \sscmpepen method, with occasional high levels of violations observed in two instances.

In Figure \ref{fig:supp_gap_vs_violations}(b), showcasing the Segmentation UAI networks, the \sscmpe and \sscmpepen methods again demonstrate superiority, particularly evident by their prevalence near the origin. Finally, in Figure \ref{fig:supp_gap_vs_violations}(c) related to tractable models, the \sscmpepen method often achieves optimal placement close to the origin, reflecting a balanced performance. These evaluations provide critical insights into the effectiveness and robustness of the proposed self-supervised methods across different problems.

\eat{
\section{ADVERSARIAL GENERATION OF MNIST DIGITS: A QUALITATIVE STUDY}
Figure \ref{tab:mnist_all} shows empirical results on adversarial modification to the MNIST digits $\{1,2,6,7,8\}$ by all the eight methods. The CMPE task minimally changes an input image such that the corresponding class is flipped according to a discriminative classifier. \mse and our proposed method \sscmpe are very competitive and were able to generate visually indistinguishable, high-quality modifications whereas the other methods struggled to do so. 

}
% \section{Qualitative Results for Adversarial Example Generation}

% \input{content/mnist_new}

% In Figure \ref{fig:mnist}, we present qualitative results showcasing randomly selected test examples for which all the models successfully flipped the class. An essential aspect of adversarial example generation is ensuring that human observers are unable to detect any alterations made to the given image. Notably, our method consistently outperforms the other self-supervised methods in this regard. The images generated by our method exhibit a closer resemblance to the original input, thereby maintaining a higher level of visual similarity. 

% \clearpage
% \bibliographystyle{neurips_2023}
% \bibliographystyle{plainnat}
% \bibliography{main}

%%%%%%%%%%%%%%%%%%%%%%%%%%%%%%%%%%%%%%%%%%%%%

\end{document}